\documentclass{article}
\usepackage{graphicx} 
\usepackage{fullpage}
\usepackage[numbers, sort&compress]{natbib}
\usepackage[hyphens]{url}
\usepackage{hyperref}
\usepackage{authblk}
\usepackage[dvipsnames]{xcolor}
\usepackage{subfigure}
\usepackage{rotating}   
\usepackage{caption}    
\usepackage{algorithm}
\usepackage{algorithmic}
\usepackage{booktabs}
\usepackage[export]{adjustbox}
\usepackage{array} 

\usepackage{amsmath}
\usepackage{amssymb}
\usepackage{mathtools}
\usepackage{amsthm}
\theoremstyle{plain}
\newtheorem{theorem}{Theorem}[section]

\theoremstyle{definition}
\newtheorem{definition}[theorem]{Definition}

\theoremstyle{remark}

\usepackage{pifont}
\definecolor{darkgreen}{RGB}{0,128,0}
\newcommand{\tick}{\textcolor{darkgreen}{\ding{51}}}  
\newcommand{\cross}{\textcolor{red}{\ding{55}}}  

\title{Safe Fairness Guarantees Without Demographics in Classification: Spectral Uncertainty Set Perspective}
\author[1]{Ainhize Barrainkua} 
\author[1]{Santiago Mazuelas} 
\author[1,2,3]{Novi Quadrianto}
\author[1,4]{Jose Antonio Lozano} 

\affil[1]{Basque Center for Applied Mathematics (BCAM), \texttt{\{abarrainkua,smazuelas\}@bcamath.org}}
\affil[2]{Predictive Analytics Lab, University of Sussex, \texttt{n.quadrianto@sussex.ac.uk}}
\affil[3]{Monash University Indonesia}
\affil[4]{University of the Basque Country EHU, \texttt{ja.lozano@ehu.eus}}

\date{}

\begin{document}

\maketitle

\begin{abstract}
As automated classification systems become increasingly prevalent, concerns have emerged over their potential to reinforce and amplify existing societal biases. 
In the light of this issue, many methods have been proposed to enhance the fairness guarantees of classifiers. 
Most of the existing interventions assume access to group information for all instances, a requirement rarely met in practice.
Fairness without access to demographic information has often been approached through robust optimization techniques, which target worst-case outcomes over a set of plausible distributions known as the uncertainty set.
However, their effectiveness is strongly influenced by the chosen uncertainty set.
In fact, existing approaches often overemphasize outliers or overly pessimistic scenarios, compromising both overall performance and fairness.
To overcome these limitations, we introduce \texttt{SPECTRE}, a minimax-fair method that adjusts the spectrum of a simple Fourier feature mapping and constrains the extent to which the worst-case distribution can deviate from the empirical distribution.
We perform extensive experiments on the American Community Survey datasets involving 20 states. 
The safeness of \texttt{SPECTRE} comes as it provides the highest average values on fairness guarantees together with the smallest interquartile range in comparison to state-of-the-art approaches, even compared to those with access to demographic group information.
In addition, we provide a theoretical analysis that derives computable bounds on the worst-case error for both individual groups and the overall population, as well as characterizes the worst-case distributions responsible for these extremal performances.
\end{abstract}

\section{Introduction}

In today's era of data-driven decision-making, machine learning (ML) models have become invaluable tools across various domains, from lending and hiring to criminal justice and healthcare. However, their widespread adoption has raised concerns about fairness, as these models have the potential to perpetuate and even exacerbate existing biases in society. The inadvertent amplification of societal biases and the potential for discriminatory outcomes have highlighted the urgent need to develop strategies that promote fairness within these systems. According to the Article 21 of the EU Charter of Fundamental Rights ``any discrimination based on any ground such as sex, race, color, ethnic or social origin, genetic features, language, religion or belief, political or any other opinion, membership of a national minority, property, birth, disability, age or sexual orientation shall be prohibited.'' 
Consequently, addressing fairness in the context of ML, particularly in classification tasks, remains a critical challenge requiring innovative solutions. This is especially true given that many existing approaches assume access to sensitive information for every instance in the dataset, an assumption that frequently conflicts with real-world scenarios.
In fact, individuals may intentionally withhold such information and, in certain applications, it is not only illegal \citep{elliott2008new} but also unethical to gather sensitive information.

Current approaches that tackle fairness concerns without relying on demographics can be divided into two main types: those employing a proxy sensitive attribute and those adhering to minimax fairness principles. Methods 
in the first family \citep{yan2020fair, grari2021fairness} leverage surrogate group information, such as clustering details and estimated group information, to enforce fairness within these surrogate group partitions. However, a drawback of using proxy attributes is the underlying assumption that the clustering or surrogate group information is correlated with the sensitive attribute(s) of interest. Consequently, achieving fairness across multiple sensitive attributes may be challenging when there is a significant distributional discrepancy among them. 

On the other hand, minimax fairness approaches (e.g. \cite{hashimoto2018fairness, lahoti2020fairness, martinez2021blind}) usually rely on robust optimization frameworks, focusing on improving the worst-case utility. Theoretically, these approaches have several benefits. First, the robust formulation aligns with the concept of Rawlsian minimax fairness \cite{martinez2020minimax}, since the optimization focuses on the worst-case scenario concerning the objective function. Second, improving the predictive performance for the worst-performing group has the property that any model that achieves it Pareto dominates (at least weakly, and possibly strictly) an equalized-error model with respect to group error rates \cite{martinez2020minimax}. In practice, however, improving the worst accuracy among all sensitive groups may be too stringent to achieve significant improvements \cite{HuNiuSatSug18}.  Such considerations lead to hybrid approaches that combine minimax approaches with proxy sensitive attributes, i.e., optimising worst-case performance over a set of subgroups defined by data-driven partitioning techniques based on input features only \cite{SohDunAngGuetal20} or based on both label and input features \cite{MarBerPapRodetal21}.

This paper aims to deepen the understanding of the robust optimization framework underlying minimax fairness approaches—such as, GDRO \cite{sagawa2020distributionally}, MMPF \cite{martinez2020minimax, diana2021minimax}, RLM \cite{hashimoto2018fairness}, ARL \cite{lahoti2020fairness}, and BPF \cite{martinez2021blind} (for a detailed comparison of how these approaches instantiate the characteristics of the robust optimization framework, see Table \ref{tab:summary_rrm})—with a particular focus on fairness without demographic information. Moreover, it addresses a key limitation of existing methods: their tendency to overemphasize outliers, resulting in overly pessimistic classification rules. This often results in low overall and group-specific accuracy, sometimes performing worse than basic methods like standard empirical risk minimization (ERM) with tree-based models \cite{gardner2022subgroup,HarMatVolRumetal23}. Our analysis and the proposed \texttt{SPECTRE} -- SPECTral uncertainty set for Robust Estimation --  approach is designed to overcome this pessimism and improve performance, all without access to sensitive demographic information. \texttt{SPECTRE} is an extension of the Minimax Risk Classifier (MRC) \cite{mazuelas2023minimax} that leverages the favorable properties of the random Fourier basis function mapping for minimax fairness. It operates by mapping the data using these functions and modifying the spectrum to retain only the essential response frequencies. Crucially, \texttt{SPECTRE} constrains the worst-case distribution's deviation from the empirical distribution in the frequency domain, using this adjustment to the MRC's uncertainty set to attain optimal fairness guarantees, measured by worst-group accuracy. 

The reminder of the paper is organized as follows. In Section \ref{sec:rel_works} we discuss the related works. We present the theoretical context for Robust Risk Minimization (RRM) in Section \ref{sec:background_spectre}. In Section \ref{sec:spectre} we introduce \texttt{SPECTRE} and Section \ref{sec:bounds} contains further theoretical analysis on the guarantees of the proposed approach.
Section \ref{sec:spectre_experiments} presents the results of comprehensive experiments on real-world data, highlighting the proposed method's effectiveness in providing safer fairness guarantees in terms of worst-group accuracy than SOTA approaches, while making only a minimal trade-off in overall accuracy. Finally, in Section \ref{sec:conclusions_spectre} we draw the key conclusions, identify the main limitations and outline possible areas of improvement.

\section{Related Works}
\label{sec:rel_works}

\subsection{Algorithmic Fairness}
\label{ap:rel_fairness}

In recent years, algorithmic fairness has gained growing attention in the ML community as awareness has increased about the risk of biased decision-making when models are trained on data reflecting societal inequalities.

\subsubsection{Measuring Algorithmic Fairness}

Algorithmic fairness can be assessed through various metrics, which are commonly categorized into three main groups: individual \cite{dwork2012fairness, li2023accurate, wang2024individual}, statistical (or group) \cite{kamishima2012fairness, hardt2016equality, zafar2017parity, chouldechova2017fair, kleinberg2017inherent, agarwal2018reductions, donini2018empirical, berk2021fairness}, and mini-max fairness. Individual fairness seeks to ensure that similar individuals are treated similarly, which requires a well-defined notion of similarity between instances. However, implementing it can be difficult in practice due to the complexity of defining similarity between real individuals and the high computational cost. In contrast, statistical fairness, evaluates classifier performance across different sensitive groups, offering a more efficient framework. Nonetheless, many of these group fairness definitions are mutually incompatible. 
Besides, enforcing parity can sometimes result in degrading performance for well-performing (often privileged) groups rather than improving outcomes for disadvantaged ones  \cite{martinez2020minimax, diana2021minimax, diana2025minimax}. To address these limitations, minimax fairness has emerged as a compelling alternative: it aims to reduce the error of the worst-performing group.

\subsubsection{Fairness-Enhancing Interventions}

To promote fairness in ML systems \cite{rabonato2025systematic}, researchers have developed a variety of interventions, typically categorized as pre-, in-, or post-processing methods. Pre-processing mitigates bias in the input data, in-processing embeds fairness into the learning algorithm via modified losses or constraints, and post-processing adjusts predictions after training to satisfy fairness criteria. Most of these approaches, however, assume that sensitive information is available for all training instances, an assumption rarely met in practice.

\subsection{Algorithmic Fairness in the (Partial) Absence of Sensitive Information}

Considering the practical limitations of accessing demographic attributes in real-world scenarios, numerous fairness-aware methods have been proposed to operate under partially observed or entirely unknown sensitive information. Some approaches train proxy classifiers to estimate the sensitive attribute \cite{awasthi2021evaluating, kallus2022assessing, diana2022multiaccurate, liang2023fair}, leveraging correlations with non-sensitive features \cite{hajian2012methodology, kilbertus2017avoiding}. While effective, this raises ethical and legal concerns, as it involves constructing a classifier with the purpose of predicting an individual's sensitive information. In contrast, other works assume access to sensitive data only in the validation set and use it to guide fairness-enhancing training without inferring sensitive labels for unseen instances \cite{nam2020learning, liu2021just, taghanaki2021robust, pezeshki2021gradient}. 

More recently, fully agnostic methods have emerged that improve fairness guarantees without relying on any demographic information \cite{hashimoto2018fairness, lahoti2020fairness, yan2020fair, chai2022fairness}. Notably, \citet{hashimoto2018fairness} introduced a DRO-based framework that reweights high-loss instances and \citet{lahoti2020fairness} proposed an adversarially-guided reweighting.  These methods, along with the proposal by \citet{martinez2021blind}, typically rely on surrogate losses (e.g., log-loss) and re-weighing based uncertainty sets, often leading to overly pessimistic classifiers \cite{hu2018short}. Other techniques, like clustering-based resampling \cite{yan2020fair} or teacher-student distillation \cite{chai2022fairness}, offer alternatives but lack theoretical guarantees for generalization. In contrast, our proposed framework introduces a novel RRM approach that optimizes the 0-1 loss directly, defines richer uncertainty sets beyond reweighting, and provides out-of-sample generalization guarantees, achieving minimax fairness without relying on demographic information.

\subsection{Robust Risk Minimization Based Approaches in Algorithmic Fairness}

RRM has been extensively applied in algorithmic fairness, particularly for optimizing minimax fairness objectives \cite{sagawa2020distributionally, martinez2020minimax, diana2021minimax, jung2022re}. While effective, these approaches require access to sensitive attributes for all instances and rely on steep surrogate losses for tractability, making them prone to overfitting outliers and producing pessimistic models. Furthermore, they focus on minimizing the empirical worst-group risk, which does not guarantee favorable worst-group performance on unseen data. In contrast, our method employs RRM in its classical form, aiming to optimize the out-of-sample (OOS) worst-group error. Several works within the DRO framework have also explored OOS fairness guarantees \cite{mandal2020ensuring, rezaei2020fairness, taskesen2020distributionally, du2021robust, du2021fairness}, often modeling uncertainty via reweighted perturbations \cite{mandal2020ensuring, du2021fairness} or Wasserstein balls \cite{taskesen2020distributionally, du2021robust}. However, reweighting fails to capture richer uncertainty structures and is sensitive to outliers, while Wasserstein-based sets can overly constrain the learner, forcing simplistic solutions. Refinements such as marginal constraints \cite{taskesen2020distributionally} or tighter alternatives \cite{rezaei2020fairness} help but still rely on parity-based fairness notions and require full demographic knowledge at training time, limitations that our method overcomes.

\subsection{Spectral Analysis in ML}

Random Fourier Features (RFFs), initially introduced by \citet{rahimi2007random} to accelerate kernel methods, have since proven very relevant across broader domains. Notably, they have deepened our understanding of spectral bias in Deep Neural Networks (DNNs) \cite{RahBarArpDraetal19, xu2019training, xu2024overview, RonMeiAmnDavetal20}. This insight has motivated techniques that either leverage or mitigate this bias. For instance,  \citet{tancik2020fourier} used Fourier feature mappings to improve high-frequency function learning, while \citet{YanAjaAgr22} demonstrated improved convergence in reinforcement learning. In our work, we repurpose RFFs to shape the uncertainty set of a MRC for fairness-aware learning without sensitive demographic information. Unlike standard ERM frameworks, which rely on fixed uncertainty sets centered on empirical distributions, we propose modifying these sets in the frequency domain to adjust the modeled worst-case scenarios, thereby enhancing fairness guarantees.

\section{Theoretical Background}
\label{sec:background_spectre}

\textbf{Notation.} Vectors and matrices are represented by bold lowercase and uppercase letters, respectively, e.g. $\boldsymbol{v}$ and $\boldsymbol{M}$; the transpose of a vector is denoted by $\boldsymbol{v}^T$ and $v_i$ represents its $i$-th component; 
for a probability distribution $p$ over $\mathcal{X} \times \mathcal{Y}$, $p_x$ and $p_y$ stand for the marginal distributions over $\mathcal{X}$ and $\mathcal{Y}$, respectively, and $p_{x|y}$ represents the conditional distribution given $y\in \mathcal{Y}$; additionally, $\succeq$ and $\preceq$ indicate component-wise inequalities; $\mathbb{I}\{ \cdot \}$ denotes the indicator function; finally, $\Delta (\mathcal{Y})$ denotes the set of probability measures on $\mathcal{Y}$.  

\textbf{Problem formulation.} The goal of classification methods is to assign a label in a set $\mathcal{Y}$ to instances in a set $\mathcal{X}$, where $\mathcal{X} \subseteq \mathbb{R}^d$. 
Each instance is defined as a triplet $(\boldsymbol{x},\boldsymbol{s},y)$, where $\boldsymbol{x} \in \mathcal{X}$ represents the non-sensitive attributes,  $\boldsymbol{s} \in \mathcal{S}\subseteq \mathbb{R}^{d'}$ the sensitive demographic attributes and  $y \in \mathcal{Y}$ the class label. We denote by $\Delta(\mathcal{X} \times \mathcal{S} \times \mathcal{Y})$ the set of probability distributions $p$ on $\mathcal{X} \times \mathcal{S} \times \mathcal{Y}$. 
In a fairness-aware supervised classification setting, methods are usually trained on a finite set of $N$ samples $\{(\boldsymbol{x}^{i}, \boldsymbol{s}^{i}, y^{i}) \}_{i=1}^N$ i.i.d. from the true underlying distribution $p^*$ to develop classification rules that minimize the number of errors in prediction while also ensuring fairness guarantees. However, in reality, the access to the sensitive information is limited, due to legal restrictions or personal choices. Therefore, we assume that $s_i$ is not available for training nor for inference in deployment.

Our primary goal is to develop a classification rule that does not depend on sensitive group information for making predictions and has high predictive performance for all the subgroups of the population including  the worst-off group. It is essential to note that merely removing sensitive information does not eliminate bias, as other features may still be closely correlated with this information \cite{dwork2012fairness}. We denote by $T(\mathcal{X},\mathcal{Y})$ the set of all classification rules for which the sensitive information associated to each instance is not eligible to be used to make predictions.
A classification rule is a function mapping from instances to probability distributions on labels. Let $f(y|\boldsymbol{x})$ represent the probability that instance $\boldsymbol{x}$ is set to the label $y$ (if the classifier is deterministic, $f(y|\boldsymbol{x}) \in \{ 0,1 \}$). 

The error of a classification rule $f \in T(\mathcal{X}, \mathcal{Y})$ on the true underlying distribution $p^* \in \Delta(\mathcal{X} \times \mathcal{S} \times \mathcal{Y})$, denoted by $\mathcal{R}(f, p^*)$, is defined as:
\begin{equation}
    \mathcal{R}(f, p^*) = \mathbb{E}_{p^*(\boldsymbol{x},y)} \{ \mathcal{L}(f(\boldsymbol{x}),y) \}
\end{equation}
where $\mathcal{L} \ : \ \Delta
(\mathcal{Y}) \times \mathcal{Y} \rightarrow [0, \infty)$ is a loss function that captures the predictive performance of the classifier. Equivalently, the error of the classification rule $f \in T(\mathcal{X}, \mathcal{Y})$ on the true underlying population of subgroup $\boldsymbol{s} \in \mathcal{S}$,  $p^*_{|\boldsymbol{s}}$ is:
\begin{equation}
    \mathcal{R}(f, p_{|\boldsymbol{s}}^*) = \mathbb{E}_{p^* (\boldsymbol{x},y |S=\boldsymbol{s})} \{ \mathcal{L}(f(\boldsymbol{x}),y) \} .
\end{equation}

In general, $\mathcal{R}(f, p^*)$ cannot be directly optimized since $p^*$ is unknown in practice, 
access being restricted to a set of training samples i.i.d. derived from the actual distribution $p^*$. 

\textbf{Robust Risk Minimization (RRM).}
RRM approaches offer an alternative to deal with distributional uncertainty by optimizing the loss concerning a set of distributions, referred to as the \emph{uncertainty set} or ambiguity set, aligned to varying degrees with the empirical distribution. In essence, RRM approaches minimize the worst-case expected loss with respect to this uncertainty set of distributions~$\mathcal{U}$:
\begin{equation}
    \inf_{f \in T(\mathcal{X}, \mathcal{Y})} \sup_{p \in \mathcal{U}} \mathcal{R}(f, p).
    \label{eq:rrm}
\end{equation}

The RRM framework offers several advantages to enhance the fairness guarantees of classifiers, as RRM's and algorithmic fairness' objectives have some overlaps.
The primary aim of RRM-based approaches is to minimize the worst-case error across distributions close to the empirical distribution. In essence, the goal is to achieve optimal performance in challenging instances, closely aligning with the principles of minimax fairness.
That is, RRM optimizes the tail performance of the model without necessitating demographic information to safeguard against poor performance on data subsets.
Nonetheless, even though RRM  can protect against poor performance on subgroups of the population, solely considering an RRM framework does not inherently guarantee optimal fairness warrants \cite{gardner2022subgroup,HarMatVolRumetal23}. 
Indeed, the performance, and subsequently, the fairness guarantees of classifiers trained within the RRM framework, fluctuate based on the distributions considered within the uncertainty set. This variability arises from different realizations of worst-case scenarios within the uncertainty set, generated by amplifying the relevance of the most challenging instances. Consequently, a broader uncertainty set accentuates the impact of the most vulnerable instances. Yet, if the considered uncertainty set is excessively large, it may lead to highly suboptimal and potentially trivial decision rules. 

Most existing approaches for enhancing minimax fairness through robust optimization frameworks, whether they incorporate sensitive information (e.g., \citet{martinez2020minimax, sagawa2020distributionally, diana2021minimax}) or not (e.g., \citet{hashimoto2018fairness, lahoti2020fairness, martinez2021blind}), define the worst-case distribution as a re-weighting scheme of the empirical training distribution (with variations in how the re-weighting is performed). Moreover, these approaches typically lack explicit bounds on how far the worst-case distribution can deviate from the empirical distribution. The resulting classifiers often exhibit excessive pessimism and are primarily optimized for the empirical training distribution \cite{hu2018short}. This behavior arises from the use of surrogate losses during training for optimization tractability. 
%
In conventional classification tasks, classifier performance is typically evaluated on the deployment environment using the 0-1 loss. However, the mentioned approaches employ surrogate losses (e.g., log-loss) during training for computational feasibility. An optimal distribution according to a surrogate loss may not necessarily correspond to optimality under the 0-1 loss. Additionally, surrogate losses are often steeper, making the resulting classifiers more sensitive to outliers.

In order to overcome these limitations, our objective is to develop a robust optimization-based approach to enhance minimax fairness without relying on sensitive information (nor inferring the sensitive characteristics of individual instances), that guarantees optimal worst-group accuracy while maximizing predictive performance across the entire population. Specifically, we aim to:
\textbf{(a) Enhance Diversity in the Uncertainty Set:} Incorporate distributions beyond simple re-weighting schemes of the empirical distribution, enabling the capture of emerging vulnerable instances outside the empirical data at training time.  
\textbf{(b) Bound Conservativeness and Pessimism:} Reduce reliance on outliers and mitigate over-conservativeness by 
adjusting the response frequency spectrum and optimizing the maximum allowable divergence.

\section{\texttt{SPECTRE}: a SPECTral Uncertainty Set for Robust Estimation}
\label{sec:spectre}

In this section, we introduce \texttt{SPECTRE} 
a novel robust optimization based framework to learn classification rules that maximize predictive performance across the entire population and at subgroup level while remaining oblivious to sensitive information. 
Specifically, our approach involves a novel strategy to modify the uncertainty set of a MRC in the (response) frequency domain. 

\subsection{Robust Classifier with Uncertainty Sets Defined by a Feature Mapping}
\label{sec:spectral_mrc}

As mentioned in the previous section, RRM approaches strive to minimize the expected loss under the worst-case scenario, considering an uncertainty set of distributions (see Equation \ref{eq:rrm}). 
We are interested in the RRM framework called MRC \cite{mazuelas2023minimax} in which the uncertainty set is composed of distributions defined by linear constraints obtained from expectation estimates of a feature mapping. In particular, let $\Phi : \mathcal{X} \times \mathcal{Y} \rightarrow \mathbb{R}^m$ be a feature mapping blind to the sensitive information. The uncertainty set will consist of all distributions where the expected value of the feature mapping is within a distance of at most $\boldsymbol{\lambda} \in \mathbb{R}^m$ from the expected value of the feature mapping of the empirical distribution $\tilde{p}_N$: 
\begin{equation}
    \mathcal{U} = \{ p \in \Delta (\mathcal{X} \times \mathcal{Y}) \; : \; |\mathbb{E}_p\{ \Phi \} - \mathbb{E}_{\tilde{p}_N}\{ \Phi \}| \preceq \boldsymbol{\lambda} \},
    \label{eq:uncertainty}
\end{equation}
where the parameter $\boldsymbol{\lambda} \succeq \boldsymbol{0}$ determines the maximum deviation allowed from the expected value of the feature mapping in the empirical distribution, and consequently, the size of the uncertainty set. Building on the original work by \citet{mazuelas2023minimax}, we decompose the confidence vector as $\boldsymbol{\lambda} = \lambda_0 \sqrt{\frac{\textrm{var}_{\tilde{p}_N}{ \{ \boldsymbol{\Phi} \} }}{N}}$. This formulation assigns higher values to features with greater variance (resulting in looser constraints), and $\lambda_0$ acts as a hyperparameter.

The uncertainty set defined in Equation \ref{eq:uncertainty} offers several advantages: (a) It can contain the true underlying distribution with a
tunable confidence $\boldsymbol{\lambda}$ and thus offers out-of-sample generalization guarantees on group errors (Section \ref{sec:bounds});
(b) the optimization problem in Equation \ref{eq:rrm} is tractable using the 0-1 loss, avoiding the use surrogate losses and the resulting pessimism;
and (c) feature mappings are commonly used in machine learning to represent instance-label
pairs as real vectors. 
Based on (c), we shed light on how response frequencies affect classification rule fairness by using a random Fourier feature mapping (Sections \ref{sec:spectral_uncertainty} and \ref{sec:sigma_toy}). 
For a more detailed discussion on the uncertainty set and the worst-case realizations considered refer to Appendix \ref{ap:modify_uncertainty_set}.

\subsection{Spectral Uncertainty Set}
\label{sec:spectral_uncertainty}

We consider a feature mapping of the original data to a random Fourier basis \cite{rahimi2007random}. RFFs constitute a feature mapping where the resulting features can be seen as a decomposition of the original data into different frequency components. That is, these features characterize the spectral properties of the data. Low-frequency components correspond to patterns that are inherently easier to learn, while high frequency components refer to noisy information. 

According to RFFs, a given datapoint $(\boldsymbol{x}, y)$ is transformed as follows: 
\begin{equation}
\Phi(\boldsymbol{x},y) = \mathbf{e}_y \otimes\nonumber \sqrt{\frac{2}{D}}[\text{cos}(\boldsymbol{\omega}_1^\top \boldsymbol{x}),\text{sin}(\boldsymbol{\omega}_1^\top \boldsymbol{x}),\ldots,\text{cos}(\boldsymbol{\omega}_D^\top \boldsymbol{x}),\text{sin}(\boldsymbol{\omega}_D^\top \boldsymbol{x})],
\end{equation}
where $\mathbf{e}_y$ is one hot encoding of the class label, 
$\otimes\nonumber$ denotes the Kronecker product between $\mathbf{e}_y$ and the Fourier components, 
$D$ refers to the number of frequency components we are considering
and $\boldsymbol{\omega}_1,\ldots,\boldsymbol{\omega}_D$ are i.i.d. samples from a zero-mean Gaussian distribution with covariance matrix $\boldsymbol{\Sigma}$ 
\begin{equation}
    \boldsymbol{\omega} \sim \mathcal{N}(\boldsymbol{0}, \boldsymbol{\Sigma})
\end{equation}
where $\boldsymbol{\Sigma} = \boldsymbol{\sigma}^T \boldsymbol{I}$. In the absence of any strong prior on the frequency spectrum of the signal, an isotropic Gaussian distribution will be considered. That is, we consider $\boldsymbol{\Sigma} = \sigma^2 \boldsymbol{I}$, where $\sigma = \mathbb{R}_{+}$. 
In other words, RFFs are obtained by approximating the radial basis function kernel by the random Fourier feature map; thus, these RFFs characterize a randomized spectral representation of data.
Instead of using fixed basis functions like in classical Fourier analysis, RFF constructs a set of randomized Fourier basis functions.
The resulting feature space consists of a mix of different frequencies\footnote{Here, when we talk about response frequency we refer to how the output is affected by the input. In other words, the frequency in the coordinate $\boldsymbol{\omega}_j$ measures the rate of change of the feature mapping component $\Phi_j$ with respect to $\boldsymbol{x}$. For high values of $\boldsymbol{\omega}_j$, small changes of $\boldsymbol{x}$, induce large changes on the novel feature representation.}, capturing multi-scale structures of the data in an implicit manner. Besides, note that, by changing the value of the scaling parameter $\sigma$, we change the range of frequencies from which the samples are obtained.

RFFs have been used to address spectral bias of DNNs trained under ERM with sufficiently expressive hypothesis classes \cite{RahBarArpDraetal19, xu2019training}.
\emph{Spectral bias} refers to a phenomenon whereby fitting high-frequency components of a
function requires exponentially more gradient update steps than the low-frequency
ones \cite{RahBarArpDraetal19,RonMeiAmnDavetal20}.
In their work, \citet{tancik2020fourier} showed that passing input points through a simple Fourier feature mapping
enables complex ERM classifiers to learn high-frequency functions in low dimensional problem domains. 
Moreover, \citet{YanAjaAgr22} also showed that the introduction of the Fourier features enables faster convergence on higher frequency components of a value function, thereby improving the sample efficiency of off-policy reinforcement learning.

In this paper, we use RFFs to construct the uncertainty set of an MRC. 
To put in context, unlike RRM methods, traditional ERM approaches rely on uncertainty sets that include only the empirical distribution of the training samples, making them non-adjustable. We hypothesize that modifying the frequency components within the uncertainty sets allows us to influence the worst-case scenarios considered, thereby impacting the resulting classification rules and their fairness guarantees. Incorporating too many noisy components may led to worst-case scenarios that are not relevant or are unrealistic and led to classification rules that are too conservative and not useful in practice. So we want to achieve the best possible balance between conservativeness (inclusivity, all groups should be represented as well as possible) and utility. In the following section, we test this hypothesis using synthetic data by varying the scaling parameter $\sigma$ and the tunable confidence parameter $\lambda_0$.

\subsection{The Effect of the Scaling Parameter $\sigma$ and the Tunable Confidence $\lambda_0$}
\label{sec:sigma_toy}

This section analyses the effect of the scaling parameter $\sigma$ at a particular value for the confidence parameter $\lambda_0$ in the resulting decision rule, by means of a toy problem. The toy dataset is comprised of two sensitive groups: a majority ($90\%$) and minority ($10\%$). In both groups the probability of having either value of the class label is 0.5. The instances are described by two features $X_1$ and $X_2$. In the case of the majority group the knowledge of $X_1$ is enough to be able to distinguish the different classes. However, in the case of the minority group, the knowledge about both $X_1$ and $X_2$ is required to derive the optimal classification rule. Besides, we set a higher dispersion on the instances of the minority group. 

\begin{figure}[h!]
\begin{center}
\centerline{\includegraphics[width=0.6\textwidth]{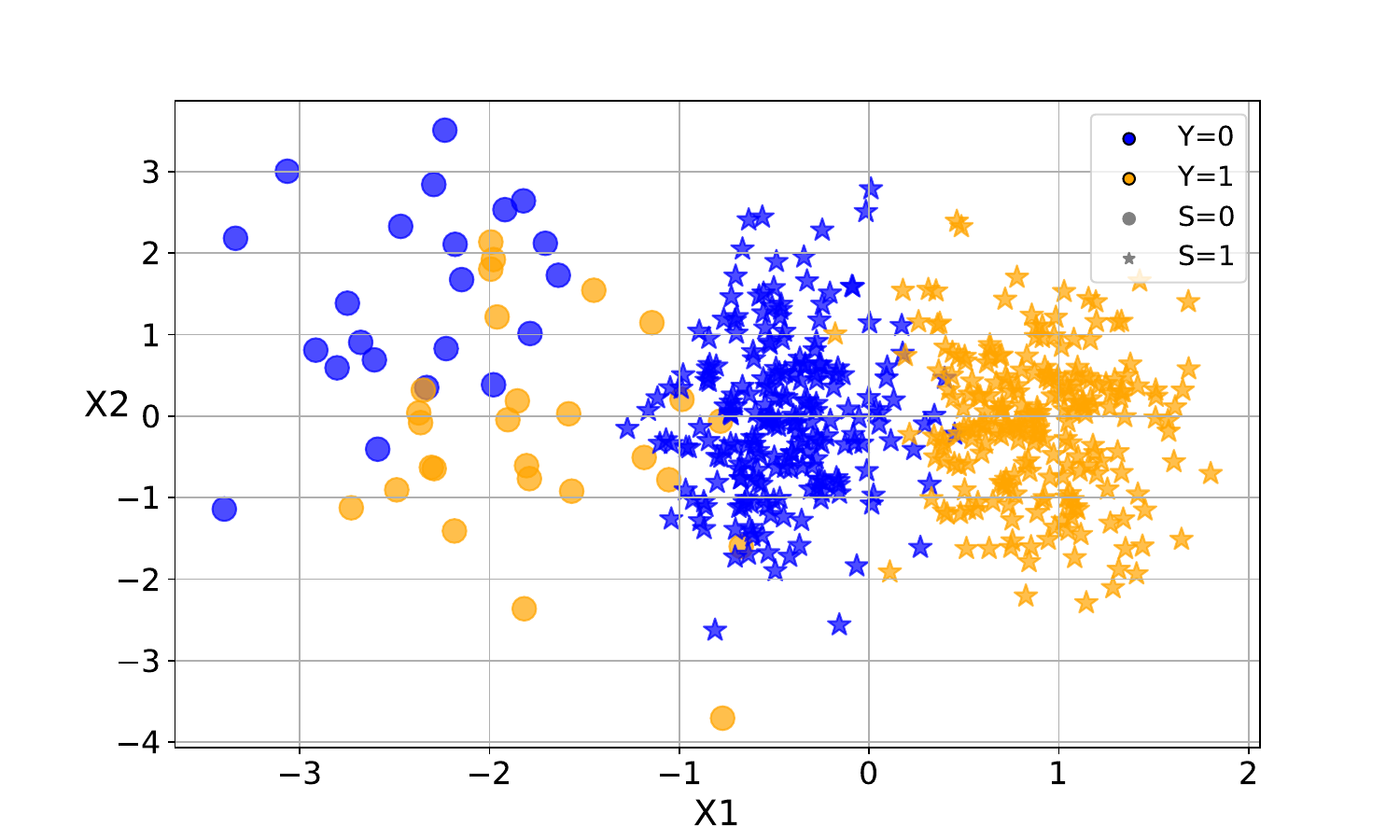}}
\caption{The toy dataset. A binary classification problem with 2 non-sensitive features ($X_1$ and $X_2$), a sensitive attribute $S$ and a binary class label $Y$. The toy dataset is comprised of two sensitive groups: a majority ($90\%$) and minority ($10\%$). In both groups the probability of having either value of the class label is 0.5. In the case of the majority group the knowledge of $X_1$ is enough to be able to distinguish the different classes. However, in the case of the minority group the knowledge about both $X_1$ and $X_2$ is required to derive the optimal classification rule. Besides, the standard deviation of minority instances is higher. }
\label{fig:toy_dataset}
\end{center}
\end{figure}

We first sample the sensitive attributes, knowing that $P(S=0)=0.1$ and $P(S=0)=0.9$. Then we sample the class label $Y$ uniformly at random for both groups, since $P(Y=0|S)=P(Y=1|S) = 0.5$. The conditional distributions of the remaining features are obtained from the following Gaussian distributions: 
\begin{align}
    (X_1, X_2)|A = 1, Y = 1 &\sim \mathcal{N}([6,0], [1,0; 0,1]), \\
    (X_1, X_2)|A = 1, Y = 0 &\sim \mathcal{N}([2,0], [1,0; 0,1]), \\
    (X_1, X_2)|A = 0, Y = 1 &\sim \mathcal{N}([-4,2], [2.5,0; 0,2.5]), \\
    (X_1, X_2)|A = 1, Y = 1 &\sim \mathcal{N}([-2,0], [2.5,0; 0,2.5]). 
\end{align}

We sample 1000 instances and standardize the data. 30\% of the instances comprise the test set and among the remaining instances, 20\% compose the validation set. Figure \ref{fig:toy_dataset} shows a graphical representation of the toy dataset.


Figure \ref{fig:sigma_DB_WG} illustrates the effect of $\sigma$ on the MRC's performance in the toy problem. Smaller $\sigma$ values correspond to lower frequencies in the mapping components, capturing simpler patterns that yield less complex decision boundaries. Specifically, for low $\sigma$ values, the decision boundary is a vertical line that depends almost exclusively on $X_1$, aligning with the pattern observed in the majority group. However, this vertical line is not centered between the blue and orange groups of the majority group, as the classification rule minimizes the worst-case error. 
As $\sigma$ increases, the influence of the $X_2$ feature grows in the resulting classification rule. At very high $\sigma$ values, the decision boundary reflects overly noisy patterns, significantly impacting the classifier’s worst-group accuracy. Both overly simplistic classifiers (resulting from low-frequency mapping components at small $\sigma$) and overly noisy ones (due to high $\sigma$ introducing noise) degrade effectiveness, impairing both overall and group-specific performance.

\begin{figure}[h!] 
    \centering
    \subfigure[]{
    \includegraphics[width=0.8\textwidth]{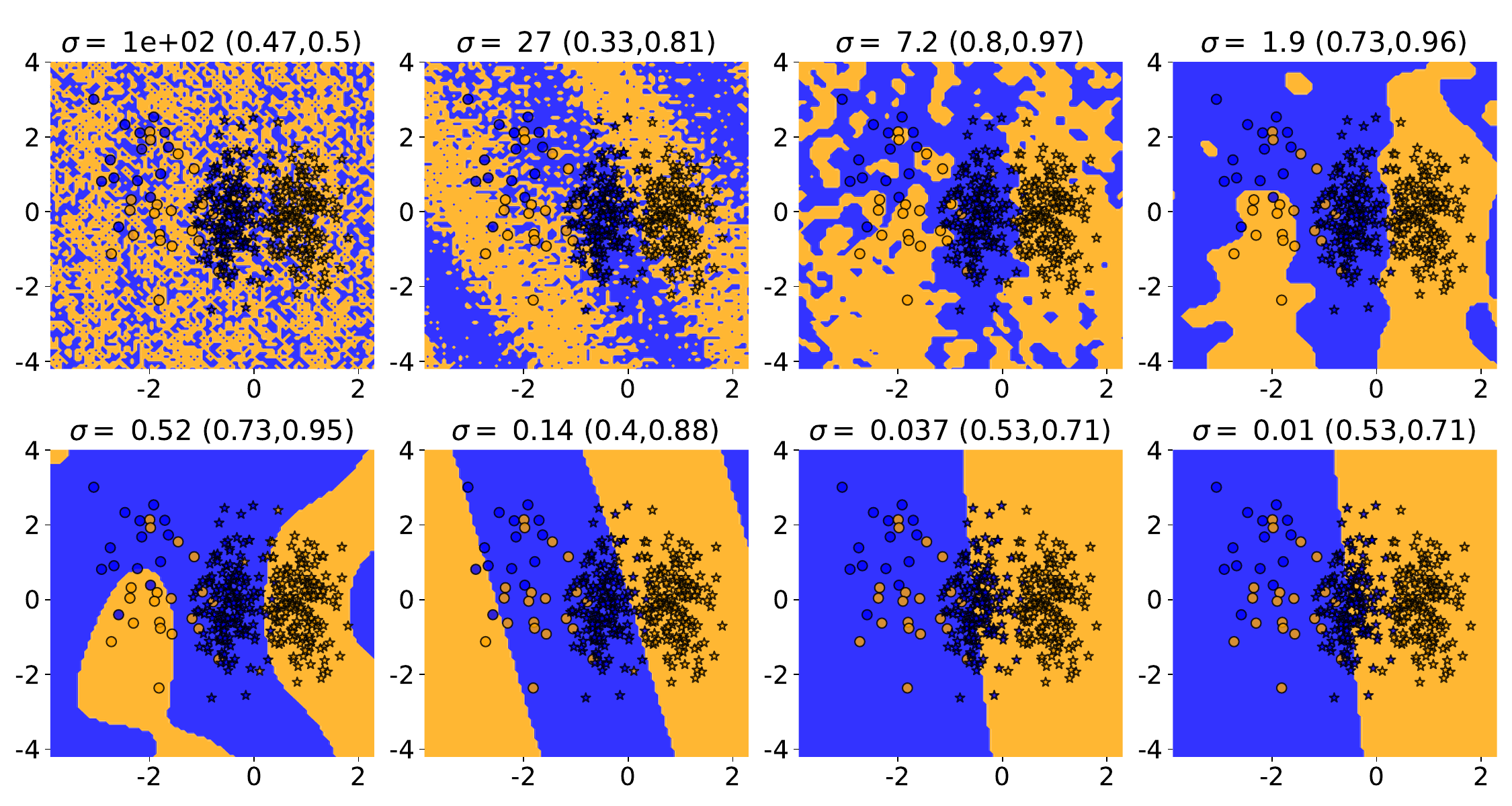}
    \label{fig:sigma_subplot1}
  }
  \hfill 
  \subfigure[]{
    \includegraphics[width=0.65\textwidth]{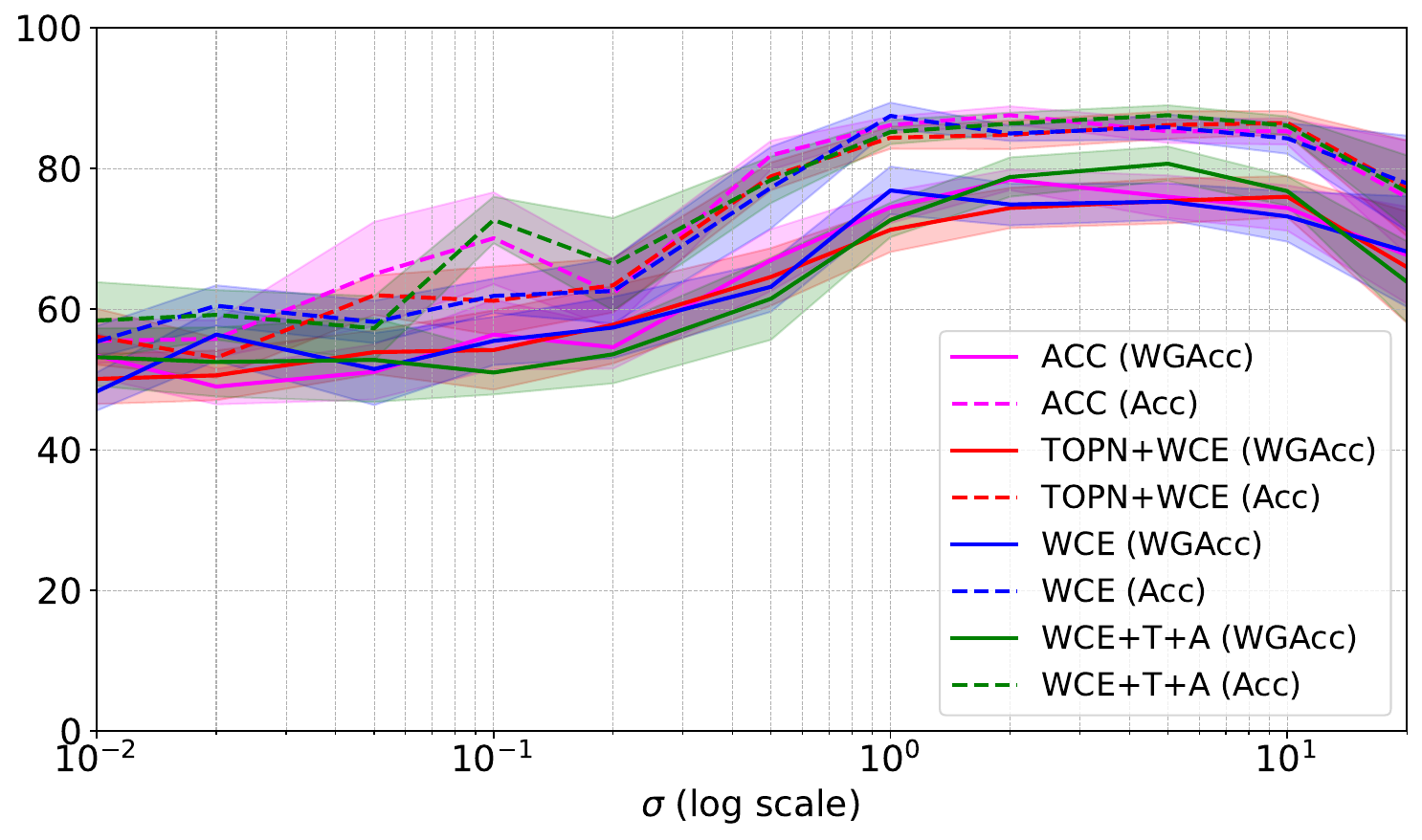}
    \label{fig:sigma_subplot2}
  }
    \caption{The effect of the scaling parameter $\sigma$ on (a) the decision boundaries (where training instances are also shown) and (b) overall accuracy and worst-group accuracy for the toy dataset, with different hyperparameter tuning strategies to get the value of $\lambda_0$ (see Section \ref{sec:experimental_setting} for descriptions on strategies). In (a), we set $\lambda_0 = 0.1$. Above the graphical representations of the decision boundaries, the value of $\sigma$ under consideration is displayed, along with the corresponding worst-group accuracy and overall accuracy (presented in parentheses, in that order). In (b), the curve represents the average value across 10 runs, and the shaded region denotes the standard deviation. 
    }
    \label{fig:sigma_DB_WG}
\end{figure}

We also evaluate other general feature mappings beyond RRFs in Appendix~\ref{ap:other_featmap}. The findings indicate that the Fourier feature mapping achieves the best worst-group accuracy. As our objective is to improve fairness guarantees in terms of worst-group performance, we therefore employ the Fourier feature mapping together with MRC over alternative mappings.

\subsection{\texttt{SPECTRE}}

In summary, \texttt{SPECTRE} is a method that exploits the impact of spectrum modification on the performance and fairness guarantees of a MRC via the uncertainty set to construct minimax-fair classification rules without relying on demographic information. It comprises two main components: first, it adjusts the response frequency spectrum to identify the optimal frequency components for representing the uncertainty set by optimizing the scaling factor $\sigma$; second, it determines the optimal confidence vector value, by tuning $\lambda_0$. Algorithm \ref{alg:pseudocode} provides the pseudocode for \texttt{SPECTRE}. 

\begin{algorithm}[h!]
   \caption{\texttt{SPECTRE}}
   \label{alg:spectre}
    \begin{algorithmic}
   \STATE {\bfseries Input:} $\mathcal{D}_{\text{train}} = \{ (x^i, y^i) \}_{i=1}^{N_{\text{train}}}$, $\mathcal{D}_{\text{val}} = \{ (x^i, y^i) \}_{i=1}^{N_{\text{val}}}$, $\sigma_{\text{values}} = \{ \sigma^j \}_{j=1}^{J}$, $\lambda_{\text{values}} = \{ \lambda_0^k \}_{k=1}^{K}$, $\lambda_0^{\text{init}}$, $\mathit{strategy} \text{ (Section } \ref{sec:experimental_setting})$
   \STATE {\bfseries Output:} $\text{a minimax-fair classification rule } f^*(\boldsymbol{x})$
    \STATE $\lambda_0 \gets \lambda_0^{\text{init}}$
    \STATE $\text{perf}\_\text{res} = [\;]$
    \FOR{$\sigma^j$ {\bfseries in} $\sigma_{\text{values}}$}
    \STATE $f_j \gets MRC(\mathcal{D}_{\text{train}}, \Phi_{\sigma^j}^{RFF}, \lambda_0)$
    \STATE $\text{perf}\_\text{res}_j \gets f_j(\mathcal{D}_{val})$ 
    \ENDFOR
    \STATE $\sigma^* \gets \text{get}\_\text{best}\_\text{hyp}(\text{perf}\_\text{res}, \sigma_{\text{values}}, \mathit{strategy})$ 
    \STATE $\text{perf}\_\text{res} = [\;]$
    \FOR{$\lambda_0^k$ {\bfseries in} $\lambda_{\text{values}}$}
    \STATE $f_k \gets MRC(\mathcal{D}_{\text{train}}, \Phi_{\sigma^*}^{RFF}, \lambda_0^k)$
    \STATE $\text{perf}\_\text{res}_k \gets f_k(\mathcal{D}_{\text{val}})$ 
    \ENDFOR
    \STATE $\lambda_0^* \gets \text{get}\_\text{best}\_\text{hyp}(\text{perf}\_\text{res}, \sigma_{\text{values}}, \mathit{strategy})$ 
    \STATE $f^*(\boldsymbol{x}) \gets MRC(\mathcal{D}_{\text{train}}, \Phi_{\sigma^*}^{RFF}, \lambda_0^*)$
    \end{algorithmic}
\label{alg:pseudocode}
\end{algorithm}

The method takes as input a training set $\mathcal{D}_{\text{train}}$, a validation set $\mathcal{D}_{\text{val}}$, the set of possible values for $\sigma$ (denoted as $\sigma_{\text{values}}$), the set of possible values for $\lambda_0$ (denoted as $\lambda_{\text{values}}$), an intermediate value for $\lambda_0^{\text{init}}$ used to tune the hyperparameter $\sigma$, and the hyperparameter selection strategy ($\mathit{strategy}$, see Section \ref{sec:experimental_setting} for descriptions on strategies). The output is the minimax-fair decision rule.

\texttt{SPECTRE} consists of two main blocks. First, it adjusts the response frequency spectrum to identify the optimal frequency components for representing the uncertainty set. Second, it determines the optimal confidence vector value, which defines the maximum allowed divergence of the worst-case distribution from the empirical distribution. Each step corresponds to tuning a specific hyperparameter: the first involves optimizing the scaling factor $\sigma$ of the distribution used to sample the frequency components for the mapping of RFFs, while the second focuses on optimizing $\lambda_0$.

As a deeper analysis of \texttt{SPECTRE}, the approach begins with initializing the value of $\lambda_0$, ideally setting it to an intermediate value. For example, the value 0.3, used for empirical validation in the original paper by \citet{mazuelas2023minimax}, is a reasonable choice, though other suitable values could also be used. The process then proceeds as follows: for each $\sigma^j$ in the set $\sigma_{\text{values}}$, the training data is mapped into the response frequency domain using RFFs with the scaling parameter $\sigma^j$, after which an MRC model is trained on the transformed dataset.
The confidence vector is defined based on the initialized value $\lambda_0^{\text{init}}$. The resulting classification rule is tested on the validation set, and the relevant performance metrics are recorded ($\text{perf}\_\text{res}_j$). Once all $\sigma$ values have been evaluated, we apply the specified tuning strategy to determine the optimal value, $\sigma^*$. With $\sigma^*$ established, we move on to optimize $\lambda_0$ using a similar procedure. For each value of $\lambda_0^k$ in $\lambda_{\text{values}}$, we train an MRC on the training set, now transformed into the response frequency domain using RFFs with scaling parameter $\sigma^*$. 
The resulting classification rule is tested on the validation set, and the necessary performance metrics are recorded ($\text{perf}\_\text{res}_k$). Using the chosen tuning strategy, we then identify the optimal value, $\lambda_0^*$. Finally, the method outputs the MRC derived from the optimal spectrum adjustment $(\sigma^*)$ and confidence vector value $(\lambda_0^*)$.

We employed the same selection strategy for both hyperparameters, though it is also possible to use different tuning strategies for each of the hyperparameters.

The computational complexity of SPECTRE is given by 
$\mathcal{O}(N_{\lambda} \, N_{\sigma} \, \mathcal{C}_{\text{MRC}}),$
where $N_{\lambda}$ denotes the number of $\lambda_0$ values considered, $N_{\sigma}$ is the number of $\sigma$ values, and $\mathcal{C}_{\text{MRC}}$ represents the computational complexity of training an MRC with given hyperparameter values. The term $\mathcal{C}_{\text{MRC}}$ depends on the strategy used to solve the convex optimization problem for fitting the MRC, which at the same time depends on terms such as the number of instances and the number of Fourier components considered. For example, when using the widely adopted stochastic gradient descent, the complexity scales as 
$\mathcal{C}_{\text{MRC}} \propto N_{\text{iter}} \, N_{\text{inst}} \, D \, |\mathcal{Y}|,$
where $N_{\text{iter}}$ is the number of iterations (determined by batch size and number of epochs), $N_{\text{inst}}$ is the number of instances, $D$ is the number of Fourier components used in the RFF representation, and $|\mathcal{Y}|$ is the number of classes.


Besides, note that since \texttt{SPECTRE} is blind to sensitive information, it does not explicitly optimize any specific notion of fairness. However, it is closely related to minimax fairness and can be regarded as a proxy optimization of it. As an RRM-based approach, \texttt{SPECTRE} optimizes classifier performance under worst-case conditions, aiming to improve the performance in the most challenging instances. On the other hand, minimax fairness focuses on the performance of the worst-performing group, with bias mitigation strategies designed to specifically improve outcomes for that (or those) group(s). While \texttt{SPECTRE} does not explicitly target demographic groups, its emphasis on worst-case performance across instances can indirectly improve the performance of certain groups. Enhancing performance in specific regions of the input space may translate into better outcomes for particular demographics. Moreover, although the connection between \texttt{SPECTRE} and other fairness notions such as equality of opportunity is less direct, improvements in worst-case performance often influence group-based metrics such as TPR, FPR, and others. Furthermore, it is important emphasizing that, as mentioned in the introduction, improving the predictive performance for the worst-performing group has the property that any model that achieves it Pareto dominates (at least weakly, and possibly strictly) an equalized-error model with respect to group error rates \cite{martinez2020minimax}.  In this sense, by improving performance on the most vulnerable instances, \texttt{SPECTRE} can contribute to reducing disparities captured by various fairness metrics.

\section{Out-of-Sample Guarantees on Group Error}
\label{sec:bounds}

Although a classifier $f^*$ may perform well on the training distribution, its performance and fairness guarantees may degrade in deployment if the latter differs from the training environment. Such discrepancies are common in real-world applications due to factors like sample selection bias or environmental non-stationarity. Therefore, before deploying $h^*$ in a specific deployment setting, it is crucial to evaluate the robustness of its performance and fairness guarantees to distribution shifts. As discussed in Section \ref{sec:spectral_mrc}, \texttt{SPECTRE} operates within an RRM framework using the uncertainty set defined in Equation (\ref{eq:uncertainty}). This allows us to derive out-of-sample generalization bounds on group errors, providing a measure of how much worst-group error (and overall error) may degrade under test environment distribution shifts. This analysis provides a systematic approach to determine whether the classifier is safe for deployment ``in the wild''. 

To evaluate the upper and lower bounds of \texttt{SPECTRE}'s performance across different sensitive groups, it is necessary for a portion of the training data to include sensitive information. It is important to emphasize, however, that this requirement stems from the fairness metrics themselves (rather than from our estimation method) as these metrics inherently depend on access to sensitive information.
Formally, the worst-case and best-case errors of the classification rule $f^*$ resulting from \texttt{SPECTRE} on sensitive group $\boldsymbol{s}$ for a general loss function $\ell$ can be defined as:
\begin{equation}
\begin{aligned}
     \mathcal{R}^{\uparrow}_{\mathcal{U}}(f^*, p_{|\boldsymbol{s}}) & = \max_{p \in \mathcal{U}} \mathcal{R}(f^*, p_{|\boldsymbol{s}}) \\ 
     & = \max_{p \in \mathcal{U}} \mathbb{E}_{p (\boldsymbol{x},y |S=\boldsymbol{s})} \{ \mathcal{L}(f^*(\boldsymbol{x}),y) \}
\end{aligned}
\end{equation}

\begin{equation}
\begin{aligned}
     \mathcal{R}^{\downarrow}_{\mathcal{U}}(f^*, p_{|\boldsymbol{s}}) & = \min_{p \in \mathcal{U}} \mathcal{R}(f^*, p_{|\boldsymbol{s}}) \\  
     & = \min_{p \in \mathcal{U}} \mathbb{E}_{p (\boldsymbol{x},y |S=\boldsymbol{s})} \{ \mathcal{L}(f^*(\boldsymbol{x}),y) \}
\end{aligned}
\end{equation}
where the uncertainty set is the one described in Equation (\ref{eq:uncertainty}). In particular, it considers all the distributions for which a the expected value of a given feature mapping applied to that distribution is at most at a distance $\boldsymbol{\lambda}$ from the empirical expectation. 


For every distribution $\hat{p}$ contained in the uncertainty set $\mathcal{U}$, the error of $f^*$ in $\hat{p}$ is bounded by the worst- and best-case errors:
\begin{equation}
    \mathcal{R}^{\downarrow}_{\mathcal{U}}(f^*, p_{|\boldsymbol{s}}) \leq \mathcal{R}(f^*, \hat{p}_{|\boldsymbol{s}}) \leq \mathcal{R}^{\uparrow}_{\mathcal{U}}(f^*, p_{|\boldsymbol{s}}) .
\end{equation}
Consequently, if the true distribution $p^*$ lies within $\mathcal{U}$, the true error is also bounded as 
\begin{equation}
    \mathcal{R}^{\downarrow}_{\mathcal{U}}(f^*, p_{|\boldsymbol{s}}) \leq \mathcal{R}(f^*, p^*_{|\boldsymbol{s}}) \leq \mathcal{R}^{\uparrow}_{\mathcal{U}}(f^*, p_{|\boldsymbol{s}}). 
\end{equation}
The following theorem determines that these upper and lower bounds are the optimal solutions of two linear programs (LP).

\begin{theorem} 

Let $\Phi$ : $\mathcal{X}$ $\times$ $\mathcal{Y}$ $\rightarrow$ $\mathbb{R}^m$ be a feature mapping blind to the sensitive information, $\boldsymbol{\tau} = \mathbb{E}_{p_n}\{ \Phi \}$ $\in \mathbb{R}^m$ and $\boldsymbol{\lambda}$ $\in \mathbb{R}_{+}^m$. Then $\mathcal{R}^{\uparrow}_{\mathcal{U}}(f^*, p_{|s})$ (resp. $\mathcal{R}^{\downarrow}_{\mathcal{U}}(f^*, p_{|s})$) can be calculated as the solution to the following linear program:
\begin{equation}
\begin{aligned}
\max_{\boldsymbol{q} \in \mathbb{R}^N, z\in \mathbb{R}} & \big[ \text{resp. }\min_{\boldsymbol{q} \in \mathbb{R}^N, z\in \mathbb{R}} \big]  \quad \boldsymbol{c}^T \boldsymbol{q} \\
\textrm{s.t.} \quad & z(\boldsymbol{\tau} - \boldsymbol{\lambda}) \preceq \sum_{i=1}^{N} \Phi_i q_i \preceq z(\boldsymbol{\tau} + \boldsymbol{\lambda}),\\
 & \sum_{i=1}^N q_i  = z, \\ 
 & \boldsymbol{e}^T \boldsymbol{q}  = 1, \\
 & 0 \leq q_i \leq z  \quad \forall i, \\
 & z \geq 0  \\
  \label{eq:optim-group}
\end{aligned}
\end{equation}
where $c_i = \mathcal{L}(f^*(\boldsymbol{x}^i),y^i) \mathbb{I}\{\boldsymbol{s}^i = s\}$ and $e_i = \mathbb{I}\{\boldsymbol{s}^i = \boldsymbol{s}\}$. 
\label{th:bounds_main}
\end{theorem}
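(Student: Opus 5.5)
The plan is to reduce the optimization over $\mathcal{U}$ to an optimization over reweightings of the $N$ training points, and then linearize the group-conditional ratio with a Charnes--Cooper change of variables. The LP in the statement only involves the values $\Phi_i=\Phi(\boldsymbol{x}^i,y^i)$ and $\boldsymbol{s}^i$ at the sample points. I will therefore read $\mathcal{U}$ as restricted to distributions $p$ supported on the training sample, $p=\sum_i p_i\delta_{(\boldsymbol{x}^i,\boldsymbol{s}^i,y^i)}$ with $p_i\ge 0$ and $\sum_i p_i=1$. This is the only setting in which the conditional $p_{|\boldsymbol{s}}$ is computable without demographics at deployment, and I take it to be the intended meaning.

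Under this parametrization, the uncertainty constraint in Equation \ref{eq:uncertainty} becomes the pair of linear inequalities $\boldsymbol{\tau}-\boldsymbol{\lambda}\preceq\sum_i p_i\Phi_i\preceq\boldsymbol{\tau}+\boldsymbol{\lambda}$. The group error becomes a linear-fractional objective,
\begin{equation*}
\mathcal{R}(f^*,p_{|\boldsymbol{s}})=\frac{\sum_i p_i\,\mathcal{L}(f^*(\boldsymbol{x}^i),y^i)\,\mathbb{I}\{\boldsymbol{s}^i=\boldsymbol{s}\}}{\sum_i p_i\,\mathbb{I}\{\boldsymbol{s}^i=\boldsymbol{s}\}}=\frac{\boldsymbol{c}^T\boldsymbol{p}}{\boldsymbol{e}^T\boldsymbol{p}},
\end{equation*}
which is well defined whenever $\boldsymbol{e}^T\boldsymbol{p}>0$. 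So the quantity $\mathcal{R}^{\uparrow}_{\mathcal{U}}$ (resp. $\mathcal{R}^{\downarrow}_{\mathcal{U}}$) equals the maximum (resp. minimum) of a linear-fractional program over a polytope.

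The Charnes--Cooper transformation turns this into an LP. Set $z=1/(\boldsymbol{e}^T\boldsymbol{p})$ and $\boldsymbol{q}=z\boldsymbol{p}$. Then:
\begin{itemize}
\item the objective becomes $\boldsymbol{c}^T\boldsymbol{q}$;
\item the normalization $\boldsymbol{e}^T\boldsymbol{q}=1$ holds by construction;
\item $\sum_i q_i=z$, because $\sum_i p_i=1$;
\item the moment constraints scale to $z(\boldsymbol{\tau}-\boldsymbol{\lambda})\preceq\sum_i\Phi_iq_i\preceq z(\boldsymbol{\tau}+\boldsymbol{\lambda})$;
\item $0\le p_i\le 1$ becomes $0\le q_i\le z$, and $z\ge0$.
\end{itemize}
Hence every feasible $p$ with $\boldsymbol{e}^T\boldsymbol{p}>0$ maps to a feasible $(\boldsymbol{q},z)$ with the same objective value. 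Conversely, take a feasible $(\boldsymbol{q},z)$ with $z>0$ and set $\boldsymbol{p}=\boldsymbol{q}/z$. Then $\boldsymbol{p}$ is a probability vector satisfying the moment constraints, $\boldsymbol{e}^T\boldsymbol{p}=1/z>0$, and $\boldsymbol{c}^T\boldsymbol{p}/\boldsymbol{e}^T\boldsymbol{p}=\boldsymbol{c}^T\boldsymbol{q}$. The two feasible sets are therefore in bijection with equal objectives, so the optima coincide.

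The main obstacle is the degenerate case $z=0$, which the LP allows but which has no preimage distribution. If $z=0$, the constraint $0\le q_i\le z$ forces $\boldsymbol{q}=\boldsymbol{0}$. This contradicts $\boldsymbol{e}^T\boldsymbol{q}=1$, so $z=0$ is infeasible and the LP feasible set corresponds exactly to distributions in $\mathcal{U}$ that give positive mass to group $\boldsymbol{s}$.

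I will also note two side conditions. First, the LP attains its optimum because its feasible set is compact: $q_i\le z$ and $z=\sum_i q_i$ give the bound $z\le$ (number of points) $\cdot z$, and $z$ itself is bounded since $z=1/\boldsymbol{e}^T\boldsymbol{p}\le 1/\min$ positive mass. If boundedness fails in this form, I will argue instead that the supremum of the ratio is finite because $\boldsymbol{c}^T\boldsymbol{q}\le\max_i\mathcal{L}_i\,\boldsymbol{e}^T\boldsymbol{q}$. Second, the max and the min can both be taken over the same feasible set.
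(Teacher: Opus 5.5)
Your proposal follows the paper's proof: restrict $\mathcal{U}$ to reweightings $\boldsymbol{p}$ of the $N$ sample points, write the group error as the linear-fractional ratio $\boldsymbol{c}^T\boldsymbol{p}/\boldsymbol{e}^T\boldsymbol{p}$, and apply the Charnes--Cooper substitution $\boldsymbol{q}=\boldsymbol{p}/\boldsymbol{e}^T\boldsymbol{p}$, $z=1/\boldsymbol{e}^T\boldsymbol{p}$; your inverse map and your proof that $z=0$ is infeasible fill in details the paper leaves implicit. One correction: the compactness remark is false in general, because if some feasible $\boldsymbol{p}$ gives group $\boldsymbol{s}$ zero mass, then feasible convex combinations drive $\boldsymbol{e}^T\boldsymbol{p}\to 0$ and $z$ is unbounded, so rely instead on your fallback bound $0\le\boldsymbol{c}^T\boldsymbol{q}\le\max_i\mathcal{L}(f^*(\boldsymbol{x}^i),y^i)$ together with the fact that an LP with bounded objective over a nonempty polyhedron attains its optimum.
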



\begin{proof}
Without loss of generality, we will focus on the worst-case error (maximization problem) in the proof, as deriving the best-case expression by addressing the minimization problem follows an equivalent process. The worst (highest) possible error for each sensitive group $\mathcal{G}_{\boldsymbol{s}}$ (with sensitive attribute $S=\boldsymbol{s}$) under the pre-trained classifier $f \in T(\mathcal{X} \times \mathcal{Y})$ across all distributions in the uncertainty set $\mathcal{U}$ for a general loss function $\mathcal{L}$ is:
\begin{equation}
\begin{aligned}
     \mathcal{R}^{\uparrow}_{\mathcal{U}}(f^*, p_{|\boldsymbol{s}}) & = \max_{p \in \mathcal{U}} \mathcal{R}(f^*, p_{|\boldsymbol{s}}) \\
     & = \max_{p \in \mathcal{U}} \mathbb{E}_{p (\boldsymbol{x},y |S=\boldsymbol{s})} \{ \mathcal{L}(f^*(\boldsymbol{x}),y) \}
\end{aligned}
\end{equation}

Assuming we have access to a dataset $\mathcal{D}=\{(\boldsymbol{x}^i, \boldsymbol{s}^i, y^i) \}_{i=1}^N$, the previous expression can be reformulated as:
\begin{equation}
     \max_{p \in \mathcal{U}} \sum_{(\boldsymbol{x},y) \in \mathcal{G}_{\boldsymbol{s}}} p_{|\boldsymbol{s}}(\boldsymbol{x},y) \; \mathcal{L}(f^*(\boldsymbol{x}), y) 
\end{equation}
where
\begin{equation}
p_{|\boldsymbol{s}} = 
\begin{cases}
  0  &  \textrm{if} \; \; \boldsymbol{x} \notin \mathcal{G}_{\boldsymbol{s}} \\
  \frac{p(\boldsymbol{x},y)}{p(S=\boldsymbol{s})}  &  \textrm{if} \; \; \boldsymbol{x} \in \mathcal{G}_{\boldsymbol{s}}
\end{cases}
\end{equation}

That is,
\begin{equation}
     \max_{p \in \mathcal{U}} \frac{\sum_{(\boldsymbol{x},y)\in \mathcal{G}_{\boldsymbol{s}}}  p(\boldsymbol{x},y) \; \mathcal{L}(f^*(\boldsymbol{x}),y)}  {\sum_{(\boldsymbol{x},y)\in \mathcal{G}_{\boldsymbol{s}}} p(\boldsymbol{x},y)} 
    \label{eq:worst_loss_s}
\end{equation}

Having access to a dataset $\mathcal{D}=\{(\boldsymbol{x}^i, \boldsymbol{s}^i, y^i) \}_{i=1}^N$, we can describe $p$ as a vector $\boldsymbol{p} = (p_1, ..., p_N)$, where the component $p_i$ denotes the probability $p(\boldsymbol{x}^i, \boldsymbol{s}^i, y^i)$. Therefore, reformulating problem (\ref{eq:worst_loss_s}) 
and explicitly adding the constraints, the worst error for group $\mathcal{G}_{\boldsymbol{s}}$ can be obtained by solving the following optimization problem:
\begin{equation}
\begin{aligned}
\max_{\boldsymbol{p}}  \quad & \frac{\boldsymbol{c}^T \boldsymbol{p}}{\boldsymbol{e}^T \boldsymbol{p}} \\
\textrm{s.t.} \quad & \boldsymbol{\tau} - \boldsymbol{\lambda} \preceq \sum_{i=1}^{N} \Phi_i p_i \preceq \boldsymbol{\tau} + \boldsymbol{\lambda}\\
  &  \sum_{i=1}^N p_i  = 1    \\
  &  0 \leq p_i \leq 1  \quad \forall i   \\
  \label{eq:optim-group2}
\end{aligned}
\end{equation}
where
\begin{equation}
c_i = 
\begin{cases}
  \mathcal{L}(f^*(\boldsymbol{x}^i),y^i) &  (\boldsymbol{x}^i, y^i) \in \mathcal{G}_{\boldsymbol{s}} \\
  0  &   (\boldsymbol{x}^i, y^i) \notin \mathcal{G}_{\boldsymbol{s}}
\end{cases}
\end{equation}
and
\begin{equation}
e_i = 
\begin{cases}
  1  &   (\boldsymbol{x}^i, y^i) \in \mathcal{G}_{\boldsymbol{s}} \\
  0  &   (\boldsymbol{x}^i, y^i) \notin \mathcal{G}_{\boldsymbol{s}}
\end{cases}
\end{equation}

The optimization problem in \eqref{eq:optim-group2} is \textit{linear-fractional} in $p$. However, assuming that $\boldsymbol{e}^T \boldsymbol{p}>0$ it is possible to convert this \textit{linear-fractional} program in $p$ into a LP in with the following change of variables:
\begin{equation*}
    \boldsymbol{q} = \frac{\boldsymbol{p}}{\boldsymbol{e}^T \boldsymbol{p}} \quad z = \frac{1}{\boldsymbol{e}^T \boldsymbol{p}}
\end{equation*}

Therefore, the resulting LP reads as:

\begin{equation}
\begin{aligned}
\max_{\boldsymbol{q} \in \mathbb{R}^N, z\in \mathbb{R}}  \quad & \boldsymbol{c}^T \boldsymbol{q} \\
\textrm{s.t.} \quad & z(\boldsymbol{\tau} - \boldsymbol{\lambda}) \preceq \sum_{i=1}^{N} \Phi_i q_i \preceq z(\boldsymbol{\tau} + \boldsymbol{\lambda}),\\
 & \sum_{i=1}^N q_i  = z, \\ 
 & \boldsymbol{e}^T \boldsymbol{q}  = 1, \\
 & 0 \leq q_i \leq z  \quad \forall i, \\
 & z \geq 0  \\
  \label{eq:optim-group-3}
\end{aligned}
\end{equation}

where $c_i = \mathcal{L}(f^*(\boldsymbol{x}^i),y^i) \mathbb{I}\{\boldsymbol{s}^i = \boldsymbol{s} \}$ and $e_i = \mathbb{I}\{\boldsymbol{s}^i = \boldsymbol{s}\}$.

\end{proof}

Furthermore, the solutions to the optimization problem in (\ref{eq:optim-group}) also reveal the distributions that derive the worst- and best-case performances across the different groups.

\begin{definition}[Extremal distribution for group $\boldsymbol{s}$]
    Let  $\boldsymbol{q}^{\uparrow *}_{\boldsymbol{s}}$ and $z^{\uparrow *}_{\boldsymbol{s}}$ (resp. $\boldsymbol{q}^{\downarrow *}_{\boldsymbol{s}}$ and  $z^{\downarrow *}_{\boldsymbol{s}}$) and be the solutions of the maximization (resp. minimization) problem (\ref{eq:optim-group}) for group $\boldsymbol{s}$, then $\boldsymbol{p}^{\uparrow *}_{\boldsymbol{s}} = \frac{1}{z^{\uparrow *}_{\boldsymbol{s}}}\boldsymbol{q}^{\uparrow *}_{\boldsymbol{s}}$ (resp. $\boldsymbol{p}^{\downarrow *}_{\boldsymbol{s}} = \frac{1}{z^{\downarrow *}_{\boldsymbol{s}}}\boldsymbol{q}^{\downarrow *}_{\boldsymbol{s}}$) will represent the distribution that maximizes (resp. minimizes) the error for group $\boldsymbol{s}$. 
\end{definition}

Moreover, we can derive \textbf{out-of-sample generalization guarantees for the overall risk}
by solving the optimization problem (\ref{eq:optim-group}), setting $z = 1$, and considering $c_i = \mathcal{L}(f^*(\boldsymbol{x}^i), y^i)$ for all $(\boldsymbol{x}^i, y^i) \in \hat{\mathcal{D}}$. Besides, the value of $\boldsymbol{q}$ that optimizes (\ref{eq:optim-group}) in that case, will refer to the worst distribution for the overall accuracy (since in this case $\boldsymbol{p} \equiv \boldsymbol{q}$ as $z = 1$). Note that the computation of the bounds on the overall risk does not require the knowledge of the sensitive information of any instance.

\section{Experimental Evaluation}
\label{sec:spectre_experiments}

This section presents the empirical evaluation of the proposed framework.\footnote{The source code for reproducing these experiments is available at \url{https://github.com/abarrainkua/SPECTRE}.} We first examine the performance of \texttt{SPECTRE} and SOTA approaches across real-world classification tasks, focusing primarily on methods that improve fairness without relying on demographic data. For completeness, we also include comparisons with methods that have full access to the sensitive attribute. In addition, to gain further insight into \texttt{SPECTRE}'s behavior, we study the out-of-sample guarantees on both group and overall errors, and analyze the computational cost of \texttt{SPECTRE} relative to SOTA methods.

\subsection{Experimental Setting}
\label{sec:experimental_setting}

\subsubsection{Models}

We conduct the empirical validation of \texttt{SPECTRE} through various SOTA approaches: (\textbf{i}) two baselines, namely Logistic Regression (LR) and XGBoost (XGB) \cite{chen2016xgboost}; (\textbf{ii}) fairness enhancing-interventions that are blind to the sensitive information by \citet{hashimoto2018fairness} (RLM), \citet{lahoti2020fairness} (ARL),  \citet{martinez2021blind} (BPF), \citet{chakrabarti2023sure} (SURE) and \citet{ko2023fair}(FairEns); and (\textbf{iii})  fairness-enhancing interventions that assume complete knowledge of the sensitive information and optimize worst-group performance (i.e., minimax fairness), including the works by \citet{martinez2020minimax} (MMPF), \citet{sagawa2020distributionally} (GDRO) and \citet{diana2021minimax} (MMPFrel). 
It is worth noting that, although XGBoost is treated as a baseline, \citet{gardner2022subgroup} showed that its tree-based structure derived strong subgroup robustness, even in comparison to methods explicitly designed to enhance robustness and fairness. In the following, we present more detailed descriptions of the methods:

\paragraph{Logistic regression (LR) classifier.} We consider LR as a baseline due to its popularity as a conventional ML method. As a straightforward unconstrained classifier, LR serves as a crucial reference point to illuminate biases inherent in the training data. We implemented this method utilizing the code from the \verb|sklearn| Python package.

\paragraph{XGBoost.} The XGBoost classifier \cite{chen2016xgboost}, is a tree-based method that employs a scalable end-to-end tree boosting system. We opted to incorporate this method into our empirical evaluation based on recent experiments that demonstrated its performance robustness within subgroups \cite{gardner2022subgroup}. It showcased superior performance, even when compared to methods aimed at enhancing robustness and fairness. Notably, it exhibits a lower training cost and is less sensitive to hyperparameter configurations. We explore various hyperparameter configurations as investigated in the comprehensive empirical evaluation by \citet{gardner2022subgroup} and present the outcomes with the highest accuracy and optimal worst-group accuracy. We leverage the code from the \verb|xgboost| Python package.

\paragraph{Repeated loss minimization (RLM).} \citet{hashimoto2018fairness} suggests using a general DRO framework with a $\chi^2$ uncertainty set to improve worst-group performance when sensitive information is unavailable. Specifically, for methods trained using gradient descent, they introduce an efficient formulation based on the dual formulation. Essentially, they reformulate the loss, considering only instances whose loss exceeds a specified value $\eta$. We tune the value of $\eta$ as detailed in their paper.\footnote{\url{https://bit.ly/2sFkDpE}} 

\paragraph{Adversarially reweighted learning (ARL).}  \citet{lahoti2020fairness} introduce an adversarial learning framework designed to enhance the predictive performance of a classifier for unprivileged groups, even in scenarios where the sensitive information is not accessible. In this framework, the \textit{learner} focuses on optimizing the primary task by learning an effective classification rule to minimize the overall expected loss. Simultaneously, the \textit{adversary} learns a function that maps to computationally identifiable regions characterized by high loss and assigns weights to maximize the expected loss of the learner. We have used the code made available by the authors\footnote{\url{https://github.com/google-research/google-research/tree/master/group_agnostic_fairness}}  and we adopt the network architecture employed by the authors for both the learner and the adversary components.

\paragraph{Blind pareto fairness (BPF).} \citet{martinez2021blind} propose a variant of the MMPF method to achieve minimax fairness when the sensitive attribute is completely unknown. In particular, their main goal is to learn a classification rule that has the best
performance on the worst-group risk of the worst possible realization of the sensitive group assignment, subject to a constraint in the minimum group size. This framework works when both the loss function and the hypothesis set are convex. Besides, the problem at hand to be well-defined, there's a necessity to assume a binary identification mapping, representing the worst-group risk membership. We consider the code provided by the authors.\footnote{\url{https://github.com/natalialmg/BlindParetoFairness}} We execute the model for various values of $\rho$ (minimum group size), selecting the range of values as presented in their experimental section, and present the outcome with the optimal worst-group accuracy. In instances where multiple solutions exhibit equal worst-group accuracy, we prioritize the one with the highest overall accuracy.

\paragraph{Significant Unfairness Risk Elimination (SURE).}
\citet{chakrabarti2023sure} propose an iterative reweighting algorithm. At each iteration, SURE identifies high-risk regions in the feature space; that is, zones where the risk of unfair classification is statistically significant. The algorithm then adjusts the instance weights in the loss function by increasing the weights of samples within these identified zones, followed by normalization of all weights. SURE involves only one hyperparameter, which we set to the same default value adopted in the original study to ensure consistency. 

\paragraph{Fair Ensemble (FairEns).}
\citet{ko2023fair} observe that fairness can naturally emerge from ensembling deep neural networks. Specifically, they show that in a homogeneous ensemble—where all individual models share the same architecture, training data, and design choices—the performance of minority groups tends to improve disproportionately as the number of models increases, thereby enhancing overall fairness. 
Motivated by these findings, we include FairEns in our empirical evaluation as a representative fairness-enhancing ensemble approach.
The diversity within the ensemble is introduced from the stochasticity introduced by random parameter initialization, as proposed by the authors. 

\paragraph{Minimax pareto fairness (MMPF).} \citet{martinez2020minimax} propose a minimax approach, framed as a multi-objective optimization problem, where the objectives correspond to the classifier's performance on individual sensitive groups. The aim is to identify the Pareto optimal classification rule (with respect to group errors) that maximizes worst-group accuracy. However, strictly  optimizing for minimax fairness may compromise overall accuracy. Given that problem, \citet{diana2021minimax} suggest a relaxed version (which we refer to as \textbf{MMPFrel}), where they propose to consider $\epsilon$-optimal models (in terms of minimax fairness) that achieve significantly higher overall accuracy, given a maximum group error bound $\gamma$. We consider the code provided by the authors for both the original\footnote{\url{https://github.com/natalialmg/MMPF}} and relaxed\footnote{\url{https://github.com/amazon-science/minimax-fair}} versions. In the case of the latter, we execute the model for various values of $\gamma$ (selecting the range of values as presented in their experimental section) and present the outcome with the optimal worst-group accuracy. In instances where multiple solutions exhibit equal worst-group accuracy, we prioritize the one with the highest overall accuracy.

\paragraph{Group distributionally robust optimization (GDRO).} \citet{sagawa2020distributionally} introduce a technique to enhance worst-group accuracy through DRO. The objective is to identify the classification rule that maximizes worst-group accuracy within an uncertainty set of distributions. The uncertainty set is constructed as a weighted combination of the empirical distributions of distinct groups. We have implemented their model using the code provided by the authors.\footnote{\url{https://github.com/kohpangwei/group_DRO/}} 

These three latter approaches (MMPF, MMPFrel and GDRO) operate under the assumption that sensitive information is available for both training and validation instances. Therefore, they are given competitive advantage in the experimental setting, and have access to the sensitive information of all the instances. 

Furthermore, given the nature of the tabular classification tasks under consideration, all minimax fair methods—whether or not they use sensitive information—employ, unless otherwise specified, a base model consisting of a 2-layer NN with sizes 64 and 32, as it has consistently demonstrated superior performance. Besides, for all approaches involving NNs to construct classification rules, we adopt a learning rate of $10^{-3}$, conducting training for 1000 epochs, and employ Adam optimizer. We aim to ensure that all models are trained under identical conditions to establish a fair empirical evaluation setting.

\subsubsection{Hyperparameter Tuning} 

\texttt{SPECTRE} has two hyperparameters ($\sigma$ and $\lambda_0$), for which we explore 4 different hyperparameter tuning strategies that do not rely on sensitive information. These methods are applied to a validation set, and the hyperparameter is selected based on the following criteria: \textbf{(ACC)} selecting the configuration that maximizes accuracy; \textbf{(WCE)} selecting the configuration that minimizes the worst class error; \textbf{(WCE+T+A)} selecting the configuration that maximizes accuracy among those with a worst class error within a specified tolerance; and \textbf{(TOPN+WCE)} selecting the configuration with the minimum worst class error from the top-$5$ configurations ranked by accuracy. We begin by tuning $\sigma$ with $\lambda_0 = 0.3$ (the value used for the empirical evaluation of MRC in the original work by \citet{mazuelas2023minimax}), as $\sigma$ has the most significant impact on the performance of our approach. Once the optimal $\sigma$ value is determined, we then tune $\lambda_0$ using this previously optimized $\sigma$.
We tune the parameter $\lambda_0$ in the range [0.01, 1.0], and $\sigma$ in the logaritmic range $[0.1*\sigma_{scale}, 10*\sigma_{scale}]$, where $\sigma_{scale} = \sqrt{\frac{2}{D \cdot var_{\tilde{p}_N}\{X\}}}$ is a commonly used characterization of the scaling parameter from the literature. 
The hyperparameters of the rest of the methods are tunned as specified in their original formulations. Indeed, all the methods except ARL require hyperparameter tuning, and we have considered the same amount of possible values (20) for each. In our case 10 for $\sigma$ and 10 for $\lambda_0$.

\subsubsection{Datasets} 

We examine several widely used datasets in the field of algorithmic fairness: 


\paragraph{ ACSEMPLOYMENT \cite{ding2021retiring}} The \textsc{ACSEmployment} dataset, sourced from the American ACS data, is derived from publicly accessible information offered by the US Census. Specifically designed for predicting the employment status of individuals aged between 16 and 90, this dataset encompasses data from all fifty U.S. states and Puerto Rico. The dataset spans five years of data collection, covering the period from 2014 to 2018, inclusive. Notably, \textsc{ACSEmployment} incorporates sensitive attributes, including \textit{gender}, \textit{race}, and \textit{disability}. We preprocess the data in accordance with the specifications outlined in \cite{ding2021retiring}. We conduct extensive experiments using 2018 data from multiple states with distinct characteristics, exploring different variations of the classification problem. Specifically, we consider two alternative definitions of the sensitive groups: one based on \textit{race} (multi-valued one-dimensional sensitive attribute) and another on intersectional groups defined by both \textit{race} and \textit{gender}.
For instance, Figure \ref{fig:acs_2states} displays the distribution of various racial groups for the two states under examination in the experimental section, namely Hawaii and Indiana, with race identified as the sensitive attribute. The figure demonstrates that the state of Hawaii exhibits greater racial diversity.

\begin{figure*}[ht]
\begin{center}
\centerline{\includegraphics[width=0.9\textwidth]{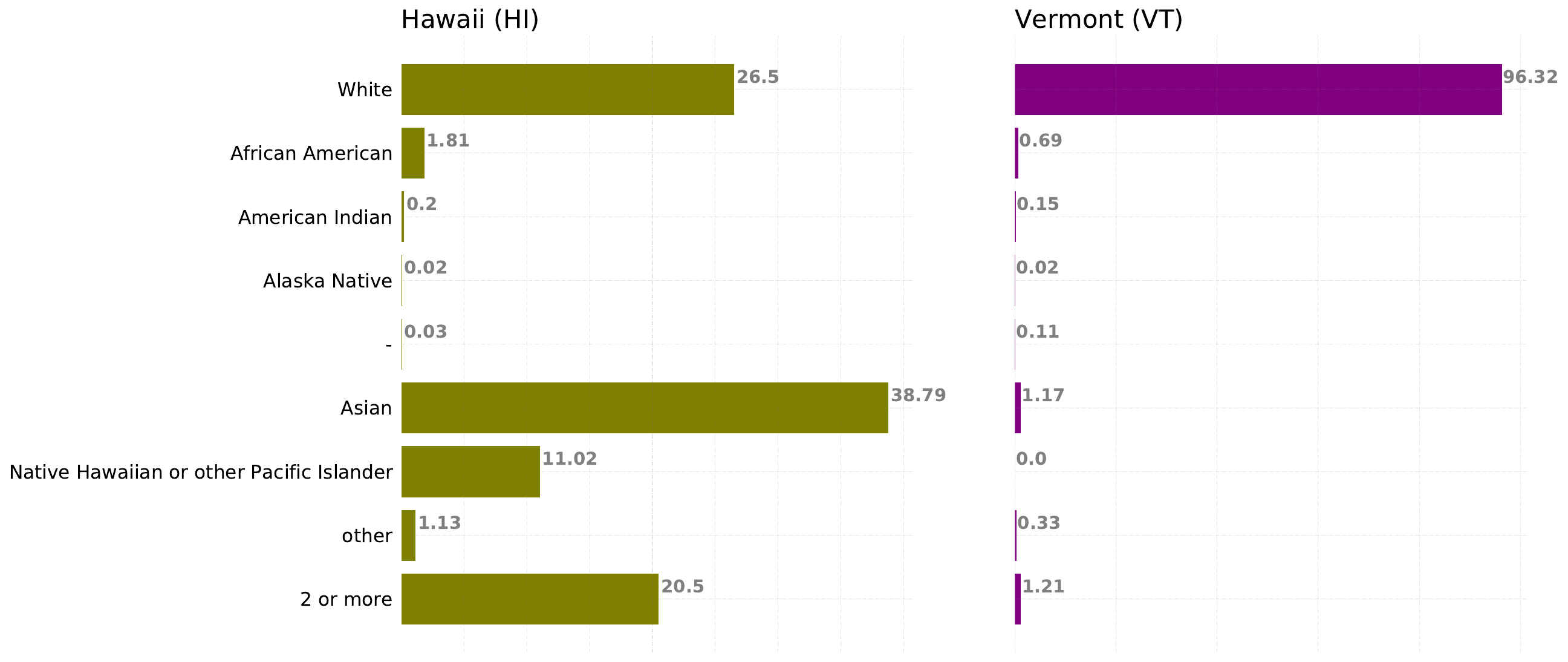}}
\caption{ The distribution of the sensitive attribute \textit{race} for the states of Hawaii and Indiana in the ACSEmployment dataset. The state of Hawaii exhibits greater racial diversity. }
\label{fig:acs_2states}
\end{center}
\end{figure*}

\paragraph{ACSINCOME \cite{ding2021retiring}} This dataset is also sourced from the ACS data, and involves the task of predicting whether whether the yearly income of a US working adults is above \$50K. As in the previous task, we perform extensive experiments using 2018 data from multiple states with diverse characteristics, examining different variations of the classification problem regarding the characterization of the sensitive groups. We consider two definitions of the sensitive groups: one based on \textit{race} (multi-valued one-dimensional sensitive attribute) and another on intersectional groups defined by both \textit{race} and \textit{gender}.


\paragraph{COMPAS} This ProPublica dataset encompasses information gathered on the utilization of the \textsc{COMPAS} risk assessment tool in Broward County, Florida. The dataset is composed of 6,167 individuals, each of which is described by attributes like the count of juvenile felonies, the degree of the current arrest charge, and sensitive attributes such as \textit{race} and \textit{gender}. Each individual in the dataset is associated with a binary "recidivism" outcome, indicating whether they were rearrested within two years after the charge recorded in the data. Hence, the objective of this classification task is to anticipate whether an individual is likely to recidivate within a two-year period.

\paragraph{GERMAN CREDIT} The German Credit dataset collects information about several individuals created from a German bank\textquotesingle s data from 1994. It contains details about the socioeconomic situation of individuals: namely its employment, housing, savings, etc. Besides, the set of features includes some sensitive information as well, such as, gender, age and marital status. In this classification task, the objective is to predict whether an individual should obtain a good or bad credit score. This dataset is considerably smaller than the previous ones, containing only 1,000 instances and 20 features, and it is publicly available in the UCI repository.\footnote{\url{http://archive.ics.uci.edu/ml/index.php}} In our experiments, we have considered the sensitive attributes personal status (which is a combination between gender and marital status) and age. Regarding the sensitive attribute age, we have divided the instances into two groups based on whether their age is greater than 25 or less than or equal to 25. This distinction reflects real socioeconomic and financial differences relevant to credit risk modeling. Younger individuals often lack established credit histories and income stability, whereas older individuals tend to have more financial experience and responsibilities.

\subsubsection{Evaluation} 

We design a set of experiments to evaluate \texttt{SPECTRE} from multiple perspectives. First, we compare its performance against SOTA methods across real-world classification tasks (Section \ref{sec:spectre_sota_results}). Results on overall accuracy and fairness guarantees are averaged over 10 random train/validation/test splits, with 30\% of each dataset reserved for testing and 20\% of the training data used for validation. Fairness is assessed through worst-group accuracy. Note that methods assuming full knowledge of sensitive attributes (MMPF \cite{martinez2020minimax}, GDRO \cite{sagawa2020distributionally}, and MMPFrel \cite{diana2021minimax}) enjoy a competitive advantage.

We further analyze the theoretical guarantees of our framework, validating both the proposed bounds and the corresponding worst-case distributions for the group errors (Section \ref{sec:experiments_bounds}) and the overall error (Section \ref{sec:experiments_bounds_overall}). Finally, we as well examine the computational cost of the proposed framework (Section \ref{sec:spectre_cost}). For all these experiments, we employ the same experimental setup as above unless otherwise specified.

\subsubsection{Further Details}
\label{ap:resources}

All the experiments have been run in a cluster of 1000 Intel Xeon Cascade Lake and Skylake CPUs. Information of the used assets and their associated licenses are available in Table \ref{tab:licenses_spectre}.

\begin{table}[h]
    \caption{Licenses of assets used.}
    \centering
    \begin{tabular}{cc}
        \toprule
        Asset & License \\
        \midrule
        folktables & MIT \\
        sklearn & BSD 3-Clause \\
        MRCpy & GPL 3.0 \\
        ARL & Apache 2.0 \\
        BPF & GPL 3.0 \\
        MMPF & GPL 3.0 \\
        MMPF(rel) & Apache 2.0 \\
        GDRO & MIT \\
        \bottomrule
        \addlinespace
    \end{tabular}
    \label{tab:licenses_spectre}
\end{table}

\subsection{Results}
\label{sec:spectre_sota_results}

\begin{figure}[h!] 
    \centering
    \subfigure[]{
    \includegraphics[width=0.89\textwidth]{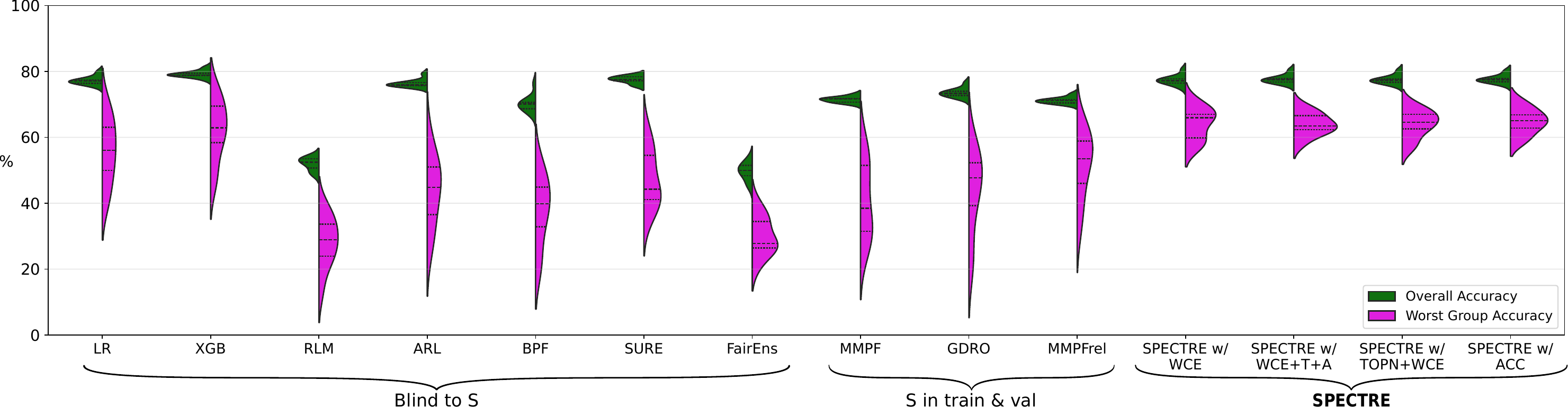}
    \label{fig:genexp_subplot1}
  }
  \hfill 
  \subfigure[]{
    \includegraphics[width=0.89\textwidth]{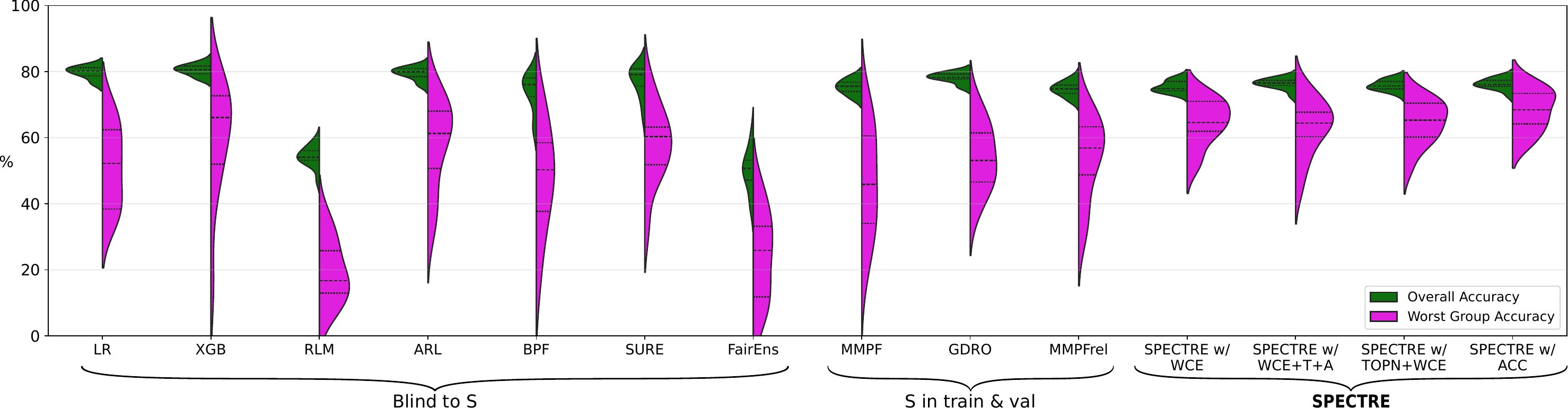}
    \label{fig:genexp_subplot2}
  }
  \hfill 
  \subfigure[]{
    \includegraphics[width=0.89\textwidth]{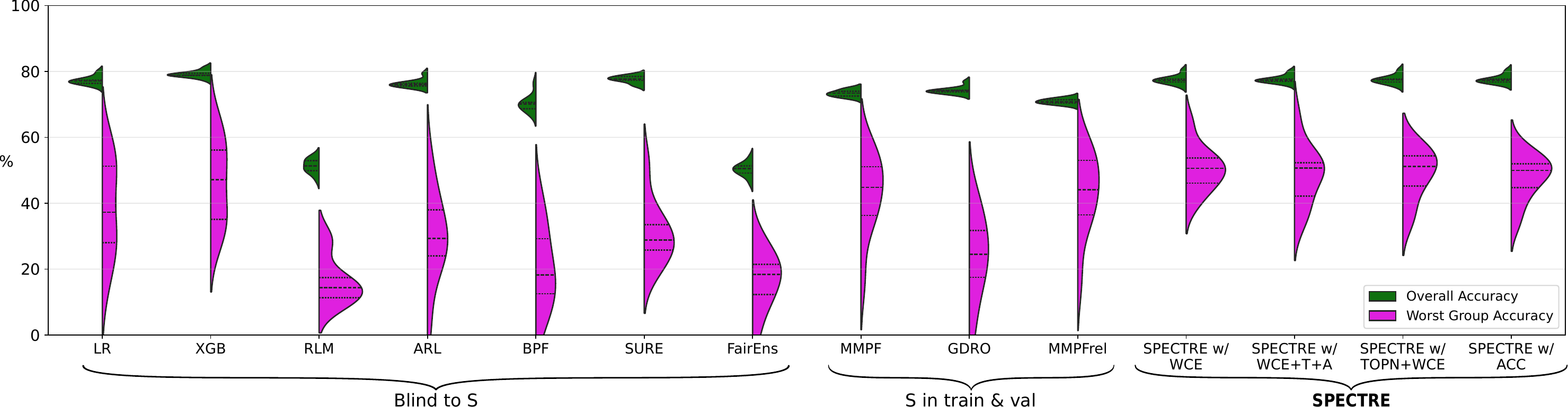}
    \label{fig:genexp_subplot3}
  }
  \hfill 
  \subfigure[]{
    \includegraphics[width=0.89\textwidth]{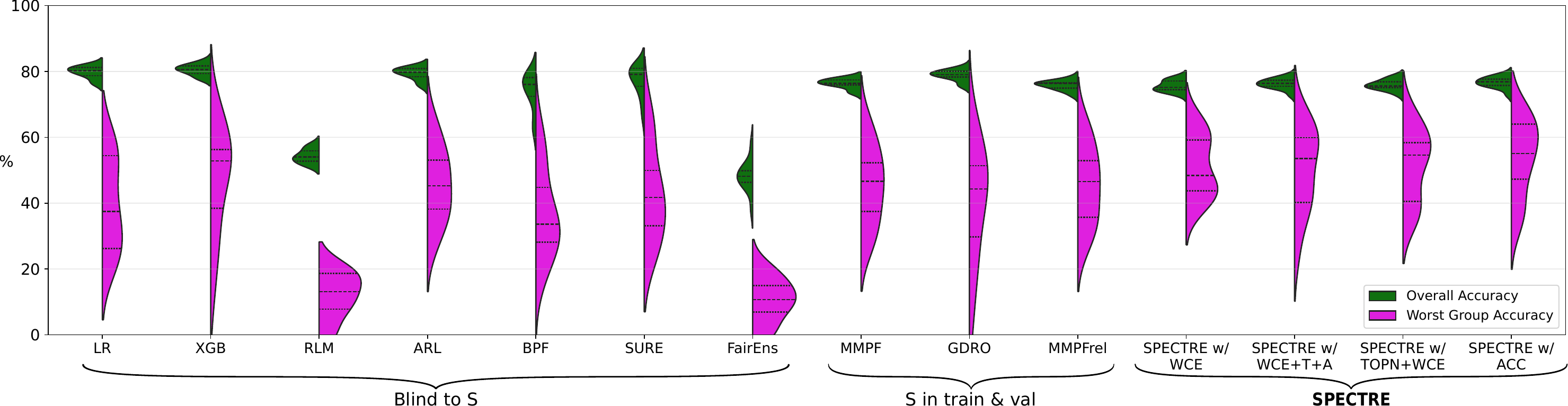}
    \label{fig:genexp_subplot4}
  }
    \caption{The distributions of the average \color{OliveGreen} \textbf{overall accuracy} \color{black} and \color{Magenta} \textbf{worst group accuracy} \color{black} of SOTA approaches and \texttt{SPECTRE} in (a,c) ACSEmployment and (b,d) ACSIncome datasets, across different collections of 20 randomly selected US states, with (a,b) \textit{race} as the sensitive attribute and (c,d) considering intersectional groups based on \textit{race} and \textit{gender}. \texttt{SPECTRE} consistently outperforms SOTA methods in worst-group accuracy with a minimal overall accuracy reduction with respect to the best performing method (XGB). 
    }
    \label{fig:general_results}
\end{figure}

\paragraph{Results for ACS datasets.} Figure \ref{fig:general_results} presents the results for the ACS-based classification tasks across 20 randomly sampled US states (40 different classification tasks in total), using \textit{race} as the sensitive attribute (see Figures \ref{fig:genexp_subplot1} and \ref{fig:genexp_subplot2}) and considering intersectional groups based on \textit{race} and \textit{gender} (see Figures \ref{fig:genexp_subplot3} and \ref{fig:genexp_subplot4}). 
In both tasks, \texttt{SPECTRE} consistently outperforms SOTA methods, delivering strong fairness guarantees across all examined states while maintaining a minimal overall accuracy reduction with respect to the classifier with the highest accuracy (XGBoost). In fact, it provides the highest average values on fairness guarantees together with the smallest interquartile range in comparison to SOTA approaches, even compared to those with access to demographic group information.
More precisely, \texttt{SPECTRE} provides fairness gains up to 20-40\% in the most vulnerable sensitive groups with a maximum accuracy drop of 1-4\% in the overall population, with respect to the classifier with the highest accuracy.

Moreover, these findings highlight the limitations of RRM approaches (with or without demographics) that constitute re-weighting schemes of the empirical training distribution (RLM, ARL, BPF, MMPF, GDRO and MMPFrel), often exhibiting excessive pessimism and over-reliance on outliers. Indeed, in some states, their performance on worst-group accuracy deteriorates significantly. In contrast, \texttt{SPECTRE} maintains stable worst-group accuracy across the different states, avoiding such degradations and ensuring more reliable performance and safe fairness guarantees. This holds true even when compared to minimax-fair approaches that have access to the sensitive information for all instances. 

As expected, worst-group accuracy tends to be lower when considering intersectional groups compared to analyses based solely on race. This underscores the importance of examining algorithmic fairness through an intersectional lens to reveal intra-group disparities. \texttt{SPECTRE} addresses this effectively, as it does not require explicitly defining all possible groupings in advance.

For a deeper exploration of \texttt{SPECTRE}’s behavior, and that of SOTA methods, on the ACS datasets, we refer the readers to Appendix \ref{ap:additional_results}: 
Appendix \ref{ap:detailed_table} provides a more detailed, state-level analysis on the performance; 
and Appendix \ref{ap:other_fairness_metrics} extends the study of fairness guarantees beyond worst-group accuracy to include two popular group fairness notions, EOp \cite{hardt2016equality} and DP \cite{dwork2012fairness}.

\paragraph{Results for COMPAS dataset.} We also study the performance of \texttt{SPECTRE} for the COMPAS dataset (see Table \ref{tab:COMPAS_race}).
Since it has been demonstrated that improving a classifier's performance for subgroups based on one sensitive attribute can negatively impact the balance of subgroup accuracy for another sensitive attribute, we consider two distinct sensitive attributes: \textit{race} (R) and \textit{gender} (G). In particular, we report the average ($\pm$ standard deviation) for overall accuracy (AV. ACC),  the worst-group accuracy (W ACC (R)) maximum accuracy disparity (MAX $\Delta$ ACC (R)) across groups based on race and the worst-group accuracy (W. ACC (G)) maximum accuracy disparity (MAX $\Delta$ ACC (G)) across groups based on gender.

\begin{table*}[ht!]
\caption{ Average ($_{\pm \textrm{std}}$) results on the COMPAS dataset, with the sensitive attribute being \textit{race}. Among the methods that are blind to demographics, the best result for each metric (column) is emphasized in \textbf{bold}, and the second-best result is \underline{underlined}. }
\label{tab:COMPAS_race}
\begin{center}
\begin{small}
\begin{sc}
\scalebox{0.95}{
\begin{tabular}{c>{\raggedright\arraybackslash}m{11em}ccccc}
\toprule
& & av.acc  & w.acc(r) & max$\Delta$acc(r) & w.acc(g)  & max$\Delta$acc(g)  \\
& Method & $(\uparrow)$ & $(\uparrow)$& $(\downarrow)$ & $(\uparrow)$ &  $(\downarrow)$  \\
\midrule
Base - & LR & $66.9_{\pm 1.0}$ & $\boldsymbol{59.5_{\pm 9.3}}$ & $33.9_{\pm 14.5}$ & $66.0_{\pm 1.5}$ & $\underline{3.0_{\pm 2.0}}$  \\
line & XGBoost & $67.1_{\pm 1.0}$ & $56.6_{\pm 12.6}$ & $\boldsymbol{28.9_{\pm 11.3}}$ & $\underline{66.4_{\pm 1.2}}$ & $3.9_{\pm 2.7}$ \\
\midrule
& RLM & $51.7_{\pm 10.2}$ & $36.6_{\pm 17.3}$ & $31.8_{\pm 10.8}$ & $46.0_{\pm 13.6}$ & $9.2_{\pm 3.9}$ \\
& ARL & $\underline{68.1_{\pm 1.1}}$ & $55.3_{\pm 19.6}$ & $\underline{29.0_{\pm 16.8}}$ & $67.2_{\pm 1.3}$ & $4.5_{\pm 1.5}$  \\
w/o & BPF & $58.6_{\pm 1.3}$ & $55.1_{\pm 2.2}$ & $35.5_{\pm 8.9}$ & $57.5_{\pm 1.4}$ & $6.0_{\pm 1.6}$  \\
Demo - & \texttt{SPECTRE} \& WCE & $67.9_{\pm 1.2}$ & $56.1_{\pm 16.0}$ & $32.2_{\pm 14.7}$ & $67.2_{\pm 1.2}$ & $3.2_{\pm 1.6}$ \\
graphics & \texttt{SPECTRE} \& WCE+T+A & $67.6_{\pm 1.1}$ & $\underline{59.0_{\pm 5.9}}$ & $29.3_{\pm 9.0}$ & $67.0_{\pm 1.1}$ & $\boldsymbol{2.8_{\pm 1.6}}$ \\
& \texttt{SPECTRE} \& TOPN+WCE & $\boldsymbol{68.2_{\pm 1.1}}$ & $53.9_{\pm 15.3}$ & $34.4_{\pm 14.9}$ & $\boldsymbol{67.5_{\pm 1.1}}$ & $3.3_{\pm 2.3}$ \\
& \texttt{SPECTRE} \& ACC & $67.9_{\pm 1.0}$ & $56.0_{\pm 15.9}$ & $32.6_{\pm 14.2}$ & $\underline{67.4_{\pm 1.1}}$ & $\boldsymbol{2.8_{\pm 2.1}}$ \\
\midrule
S in & MMPF  & $66.9_{\pm 1.0}$ & $59.5_{\pm 9.3}$ & $33.9_{\pm 14.6}$ & $66.1_{\pm 1.4}$ & $2.9_{\pm 2.0}$ \\ 
train & GDRO & $67.1_{\pm 1.3}$ & $56.6_{\pm 10.3}$ & $27.7_{\pm 15.4}$ & $66.1_{\pm 1.3}$ & $5.4_{\pm 1.6}$  \\
\& val & MMPFrel  & $66.9_{\pm 1.0}$ & $59.5_{\pm 9.3}$ & $33.9_{\pm 14.6}$ & $66.1_{\pm 1.4}$ & $2.9_{\pm 2.0}$ \\
\end{tabular}}
\end{sc}
\end{small}
\end{center}
\end{table*}

\begin{figure}[h!]
\begin{center}
\centerline{\includegraphics[width=0.7\textwidth]{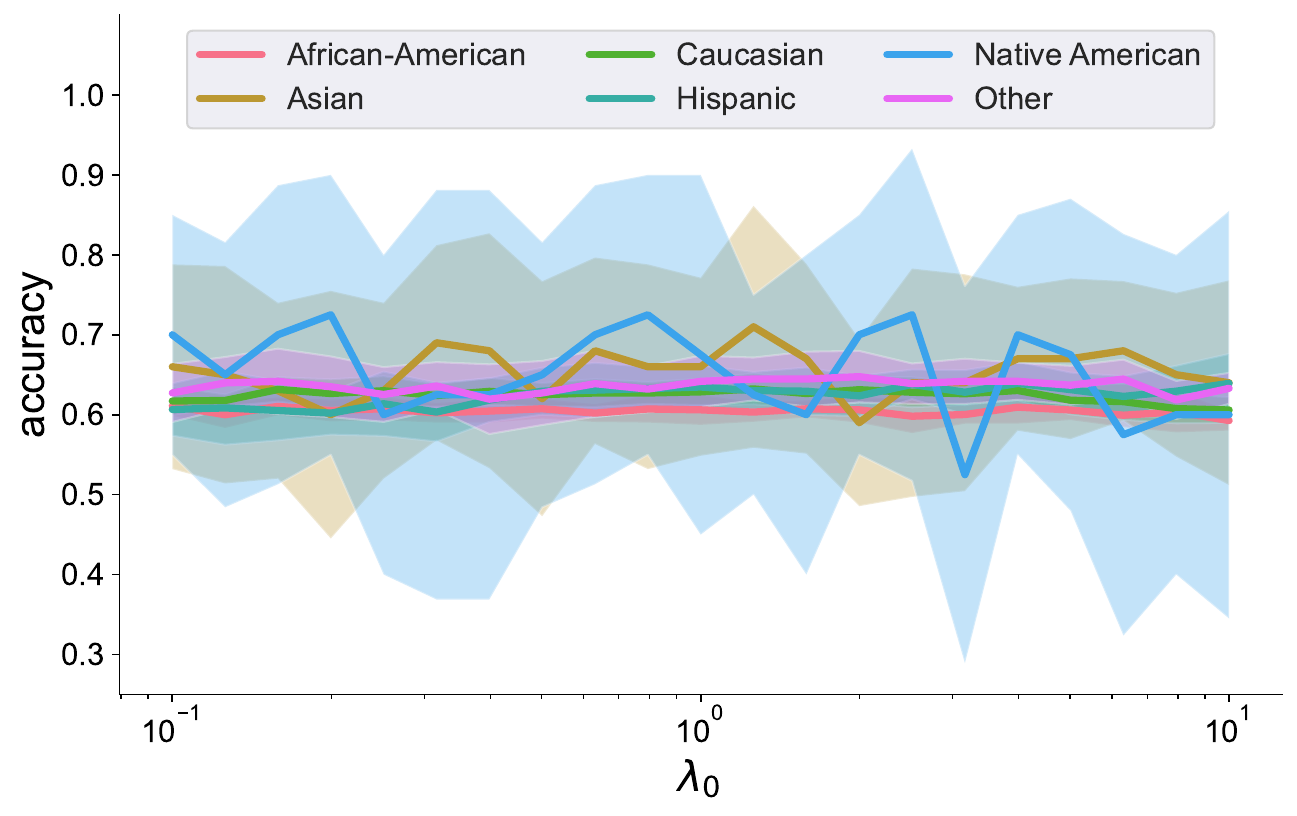}}
\caption{The predictive performance of MRC on the \textsc{COMPAS} dataset trained and tested individually for the different racial groups for varying value of $\lambda_0$. The worst-group accuracy does not experience significant enhancement with \texttt{SPECTRE}, as the MRC trained individually within the groups struggles to achieve a high level of predictive performance.}
\label{fig:compas_lambda}
\end{center}
\end{figure}

Among the blind methods (including baselines), \texttt{SPECTRE} attains the highest overall accuracy and second best worst-group accuracy, only 0.5\% lower than the best performing LR. However, we see that as the other RRM based approaches it faces challenges in significantly improving the worst-group accuracy.
\emph{What accounts for this phenomenon?}
As explained by \citet{slowik2021algorithmic}, a RRM framework can enhance the worst-group accuracy only when the model attains satisfactory performance during training on instances exclusively from that particular group.
Figure \ref{fig:compas_lambda} illustrates the performance of the traditional MRC trained on individual subgroups, with varying $\lambda_0$. It is evident that, for most of the racial groups, the MRC struggles to identify an effective classification rule when trained only on instances of that group. Consequently, when employing \texttt{SPECTRE} on the entire population, it struggles to enhance the worst-group performance.

\paragraph{Results for GERMAN CREDIT dataset.} We also study the performance of  \texttt{SPECTRE} and SOTA methods on the \textsc{German Credit} dataset. For one-dimensional sensitive attributes, we consider personal status (which combines gender and marital status) as the primary sensitive attribute for assessing fairness guarantees, as it exhibited the highest level of unfairness in terms of worst-group accuracy. We additionally evaluate fairness guarantees with respect to age groups, and extend the analysis to intersectional groups defined by the combination of age and personal status.
The performance of \texttt{SPECTRE} and the SOTA methods is evaluated in terms of overall accuracy and worst-group accuracy, along with two widely adopted fairness metrics: equality of opportunity (EOp) and demographic parity (DP).

The results for one-dimensional sensitive groups are presented in Table \ref{tab:results_german_per}, while those for intersectional groups are reported in Table \ref{tab:results_german_int}. These results show that \texttt{SPECTRE} achieves the highest worst-group accuracy across different group characterizations, including intersectional groups, with only a 3\% decrease compared to the method with the highest overall accuracy (ARL). When considering one-dimensional sensitive attributes, the performance of ARL and \texttt{SPECTRE} is comparable: ARL attains slightly higher overall accuracy (by 3\%) but lower worst-group accuracy (by 2\%). Other methods such as LR and XGB achieve marginally higher overall accuracy (by 1–2\%) than \texttt{SPECTRE} but exhibit substantially lower worst-group accuracy (by 7–8\%). However, when intersectional groups are taken into account, the worst-group accuracy of ARL, LR, and XGB decreases markedly—by approximately 20–30\%. This underscores the importance of evaluating multiple group characterizations to obtain a more informative and comprehensive understanding of the fairness guarantees across different dimensions. Besides, these results demonstrate that \texttt{SPECTRE} remains effective even when applied to smaller datasets.

Regarding the parity-based metrics EOp and DP, disparities in these measures are generally high, which is expected since the evaluated methods are designed to optimize worst-group accuracy and improve performance across all groups rather than enforce parity. Among the minimax fairness-oriented approaches, \texttt{SPECTRE} shows the lowest disparity values after SURE and FairEns, while achieving higher overall and worst-group accuracy than both. This suggests that strong results on parity-based metrics may conceal suboptimal performance for all sensitive groups and, consequently, for the population as a whole. These observations underscore the importance of considering multiple fairness criteria to obtain a more comprehensive and informative evaluation of the fairness guarantees of a model in empirical analyses.

\begin{table*}[ht!]
\caption{ Average ($\pm$ std) results on the \textsc{German Credit} dataset, considering personal status (gender + marital status) as the main sensitive attribute and age as the secondary sensitive attribute. We also report gender-group metrics to evaluate how gains in one form of sensitive attribute characterization affect another. Among the methods that are blind to demographics, the best result for each metric (column) is emphasized in \textbf{bold}, and the second-best result is \underline{underlined}. }
\label{tab:results_german_per}
\begin{center}
\begin{small}
\begin{sc}
\scalebox{0.78}{\begin{tabular}{clccccccc}
\toprule
& & av.acc  & w.acc  (p) & max$\Delta$acc(p) & EOp (p) & DP (p) & w.acc(a)  & max$\Delta$acc(a)  \\
& Method & $(\uparrow)$ & $(\uparrow)$& $(\downarrow)$ & $(\downarrow)$ & $(\downarrow)$ & $(\uparrow)$ &  $(\downarrow)$  \\
\midrule
Base- & LR & $72.7 \pm 2.0$ & $54.7 \pm 4.5$ & $25.8 \pm 4.5$ & $27.4 \pm 5.1$ & $21.4 \pm 9.9$ & $58.4 \pm 5.3$ & $16.8 \pm 6.3$ \\
lines & XGB & $\underline{73.5 \pm 2.2}$ & $55.1 \pm 8.4$ & $23.3 \pm 9.2$ & $20.7 \pm 7.4$ & $13.0 \pm 6.8$ & $\underline{65.0 \pm 5.1}$ & $10.0 \pm 5.7$ \\
\midrule
& RLM & $54.9 \pm 1.2$ & $46.5 \pm 6.7$ & $21.4 \pm 5.2$ & $25.3 \pm 10.2$ & $26.3 \pm 13.3$ & $48.4 \pm 7.2$ & $11.1 \pm 7.2$ \\
& ARL & $\boldsymbol{74.0 \pm 3.1}$ & $60.9 \pm 8.3$ & $19.5 \pm 9.0$ & $19.8 \pm 8.3$ & $21.4 \pm 7.9$ & $64.2 \pm 3.8$ & $11.5 \pm 4.6$ \\
& BPF & $47.1 \pm 1.4$ & $41.4 \pm 2.4$ & $16.8 \pm 5.6$ & $35.7 \pm 14.0$ & $38.0 \pm 15.6$ & $43.8 \pm 2.1$ & $10.0 \pm 5.1$ \\
w/o & SURE & $65.9 \pm 7.3$ & $54.5 \pm 8.6$ & $15.2 \pm 3.8$ & $\boldsymbol{6.2 \pm 12.4}$ & $\boldsymbol{4.6 \pm 9.2}$ & $58.4 \pm 5.8$ & $9.8 \pm 5.4$ \\
Demo- & FairEns & $52.7 \pm 15.1$ & $44.0 \pm 10.6$ & $\underline{13.5 \pm 3.4}$ & $\underline{8.2 \pm 6.3}$ & $\underline{9.6 \pm 9.2}$ & $49.2 \pm 13.1$ & $9.0 \pm 5.4$ \\
graphics & \texttt{SPECTRE}+WCE & $70.5 \pm 3.6$ & $61.4 \pm 7.9$ & $16.2 \pm 9.5$ & $22.1 \pm 8.1$ & $22.4 \pm 7.7$ & $63.0 \pm 0.7$ & $9.8 \pm 2.5$ \\
 & \texttt{SPECTRE}+WCE+T+A & $70.4 \pm 3.3$ & $61.6 \pm 6.7$ & $19.6 \pm 9.2$ & $26.4 \pm 10.9$ & $18.9 \pm 3.8$ & $61.1 \pm 5.8$ & $9.6 \pm 6.2$ \\
& \texttt{SPECTRE}+TOPN+WCE & $70.6 \pm 3.7$ & $\underline{62.0 \pm 6.7}$ & $\boldsymbol{13.0 \pm 7.9}$ & $16.5 \pm 11.0$ & $18.4 \pm 10.6$ & $\boldsymbol{68.4 \pm 3.9}$ & $\boldsymbol{3.6 \pm 1.7}$ \\
& \texttt{SPECTRE}+ACC & $71.1 \pm 3.4$ & $\boldsymbol{62.9 \pm 10.3}$ & $17.0 \pm 12.2$ & $20.9 \pm 10.7$ & $22.6 \pm 12.1$ &  $64.3 \pm 4.6$ & $\underline{8.0 \pm 2.3}$ \\
\midrule
S in & MMPF & $71.3 \pm 4.7$ & $49.8 \pm 8.9$ & $30.1 \pm 4.9$ & $24.5 \pm 8.3$ & $18.9 \pm 9.4$ & $62.9 \pm 6.3$ & $13.2 \pm 5.1$ \\
train & GDRO & $74.1 \pm 2.3$ & $54.5 \pm 7.5$ & $27.0 \pm 4.9$ & $22.3 \pm 7.1$ & $17.4 \pm 8.5$ & $63.3 \pm 4.3$ & $12.8 \pm 4.1$ \\
\& val & MMPFrel & $71.8 \pm 5.8$ & $56.7 \pm 8.7$ & $25.1 \pm 5.7$ & $25.3 \pm 9.2$ & $17.3 \pm 10.9$ & $63.1 \pm 8.5$ & $12.4 \pm 7.2$ \\
\end{tabular}}
\end{sc}
\end{small}
\end{center}
\end{table*}

\begin{table*}[ht!]
\caption{ Average ($\pm$ std) results on the \textsc{German Credit} dataset considering intersectional sensitive groups based on personal status and age. Among the methods that are blind to demographics, the best result for each metric (column) is emphasized in \textbf{bold}, and the second-best result is \underline{underlined}. }
\label{tab:results_german_int}
\begin{center}
\begin{small}
\begin{sc}
\scalebox{0.8}{\begin{tabular}{clccccc}
\toprule
& & av.acc  & w.acc & max$\Delta$acc & EOp & DP  \\
& Method & (${\uparrow}$) & (${\uparrow}$) & (${\downarrow}$) & (${\downarrow}$) & (${\downarrow}$) \\
\midrule
Base- & LR & $72.7 \pm 2.0$ & $12.5 \pm 15.8$ & $71.1 \pm 15.2$ & $73.3 \pm 30.4$ & $63.9 \pm 28.3$ \\
lines & XGB & $\underline{73.5 \pm 2.2}$ & $20.0 \pm 24.5$ & $64.8 \pm 29.5$ & $71.7 \pm 31.9$ & $64.4 \pm 23.7$ \\
\midrule
& RLM & $54.9 \pm 1.2$ & $29.6 \pm 19.0$ & $58.3 \pm 17.8$ & $62.9 \pm 23.2$ & $51.9 \pm 19.0$  \\
& ARL & $\boldsymbol{74.0 \pm 3.1}$ & $20.0 \pm 24.5$ & $61.4 \pm 26.4$ & $75.7 \pm 29.7$ & $70.8 \pm 19.6$ \\
 & BPF & $47.1 \pm 1.4$ & $34.5 \pm 9.0$ & $54.3 \pm 14.4$ & $75.5 \pm 16.1$ & $66.0 \pm 19.9$ \\
w/o& SURE & $65.9 \pm 7.3$ & $40.8 \pm 9.3$ & $45.9 \pm 19.2$ & $\underline{39.1 \pm 24.2}$ & $\underline{35.8 \pm 25.5}$ \\
Demo- & FairEns & $52.7 \pm 15.1$ & $25.9 \pm 14.0$ & $47.5 \pm 20.3$ & $\boldsymbol{36.5 \pm 41.4}$ & $\boldsymbol{32.8 \pm 37.5}$ \\
graphics & \texttt{SPECTRE}+WCE & $70.5 \pm 3.6$ & $\underline{42.3 \pm 21.2}$ & $\underline{45.2 \pm 22.6}$ & $54.5 \pm 22.6$ & $46.1 \pm 16.9$  \\
 &  \texttt{SPECTRE}+WCE+T+A & $70.4 \pm 3.3$ & $41.1 \pm 21.0$ & $46.4 \pm 24.3$ & $52.8 \pm 24.5$ & $45.2 \pm 16.5$ \\
&  \texttt{SPECTRE}+TOPN+WCE & $70.6 \pm 3.7$ & $41.5 \pm 21.9$ & $45.3 \pm 25.1$ & $48.4 \pm 23.7$ & $44.9 \pm 16.3$ \\
&  \texttt{SPECTRE}+ACC & $71.1 \pm 3.4$ & $\boldsymbol{44.1 \pm 20.5}$ & $\boldsymbol{43.7 \pm 21.7}$ & $54.8 \pm 23.1$ & $45.2 \pm 15.5$  \\
\midrule
S in & MMPF & $71.3 \pm 3.4$ & $17.8 \pm 24.5$ & $64.7 \pm 23.7$ & $75.3 \pm 32.4$ & $69.3 \pm 23.8$ \\
train & GDRO & $73.7 \pm 2.6$ & $18.9 \pm 23.2$ & $63.4 \pm 24.8$ & $74.2 \pm 31.7$ & $67.7 \pm 22.7$ \\
\& val & MMPFrel & $72.8 \pm 5.1$ & $19.2 \pm 24.1$ & $62.1 \pm 25.2$ & $72.1 \pm 33.8$ & $66.5 \pm 23.4$ \\
\end{tabular}}
\end{sc}
\end{small}
\end{center}
\end{table*}

\color{black}

\subsection{Performance Guarantees and Worst-Case Distributions on Group Errors}
\label{sec:experiments_bounds}

In this section, we validate the performance bounds of \texttt{SPECTRE} on group-specific errors and provide additional insights into the worst-case distributions that derive the extremal performance of each group. We consider the same toy dataset from Section \ref{sec:sigma_toy} and split it into train, validation, and test sets as in previous experiments. \texttt{SPECTRE} is trained using the training and validation sets, and its performance is evaluated on the test set, which we refer to as the true error.
The bounds are calculated using a subset (30\%) of the training set, where sensitive group information is assumed to be available. Moreover, we employ the 0-1 loss to solve optimization problem (\ref{eq:optim-group}).
These results demonstrate that, with a portion of the training data containing available sensitive information, it is possible to confidently estimate a feasible range for \texttt{SPECTRE}'s performance across different sensitive groups. Additionally, this allows for identifying the worst-case distributions that lead to extremal performance in \texttt{SPECTRE}.

\subsubsection{Bounds on Group Error} 

\begin{figure}[!h] 
    \centering
    \subfigure[]{
            \includegraphics[width=0.75\textwidth]{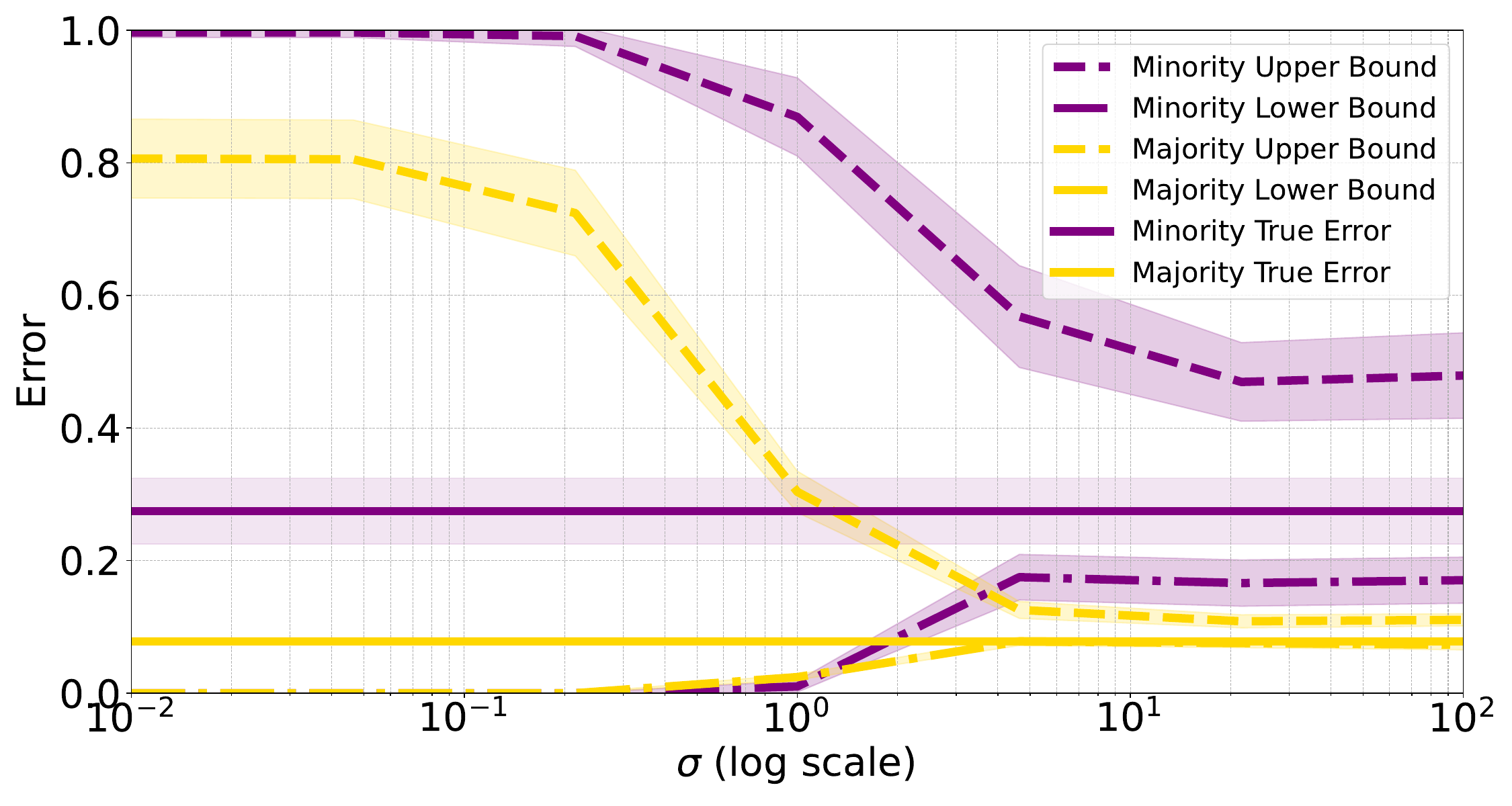}
            \label{fig:bounds_toy_subplot1}
        }
        \hfill
        \subfigure[]{
            \includegraphics[width=0.75\textwidth]{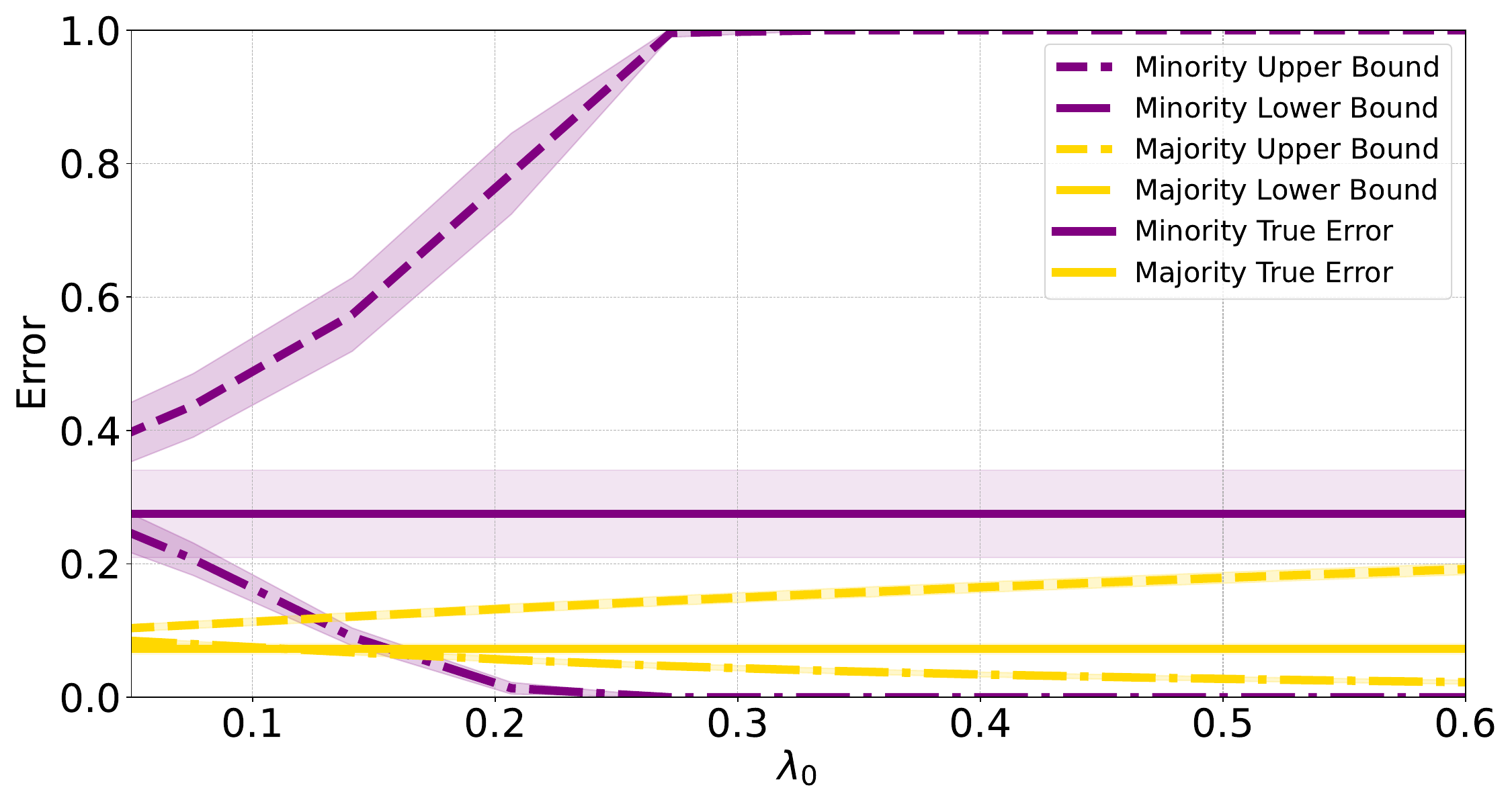}
            \label{fig:bounds_toy_subplot2}
        }
    \caption{The performance bounds of \texttt{SPECTRE} for the \color{Purple} \textbf{minority group ($\boldsymbol{S=0}$)} \color{black} and the \color{Goldenrod} \textbf{majority group ($\boldsymbol{S=1}$)} \color{black} under varying (a) scaling parameter ($\sigma$) and (b) maximum allowed divergence ($\lambda_0$). For (a), we fix $\lambda_0 = 0.1$, and for (b), we fix $\sigma = 10$. 
    }
    \label{fig:bounds_toy}
\end{figure}

Figure \ref{fig:bounds_toy} illustrates the group error bounds for varying values of $\sigma$ and $\lambda_0$. In all cases, the bounds are tighter for the majority group, reflecting greater confidence in \texttt{SPECTRE}'s performance for this group, as it is better characterized. On the contrary, \texttt{SPECTRE}'s performance on the minority group is more vulnerable to shifts in deployment environments. For the same value of $\lambda_0$ and, therefore, $\boldsymbol{\lambda}$ (which defines the maximum deviation from the empirical distribution), the worst-case loss is consistently higher for the unprivileged group.
Additionally, Figure \ref{fig:bounds_toy_subplot1} shows that the values yielding the tightest bounds align closely with the near-optimal $\sigma$ values for \texttt{SPECTRE} (see Figure \ref{fig:sigma_subplot2}). This suggests that the optimal spectrum characterization used during \texttt{SPECTRE}'s training can also be leveraged to compute effective performance bounds.
Moreover, higher values of $\lambda_0$ lead to looser bounds, as broader confidence vectors allow for greater deviation of the worst-case distribution from the empirical distribution. For further empirical validation of the bounds on real-world datasets, see Appendix \ref{ap:bounds_real}.

\subsubsection{Worst-Case Distributions}

In this section, we analyze the worst-case distributions that drive the extremal performance of \texttt{SPECTRE} for both minority and majority groups under varying values of $\lambda_0$ in the toy dataset. The original dataset statistics are presented in Figure~\ref{fig:toy_stats}, while Figure~\ref{fig:extremal_toy} illustrates the corresponding worst-case distributions for each group across the different $\lambda_0$ settings.

\begin{figure}[h!]
\begin{center}
\centerline{\includegraphics[width=0.5\textwidth]{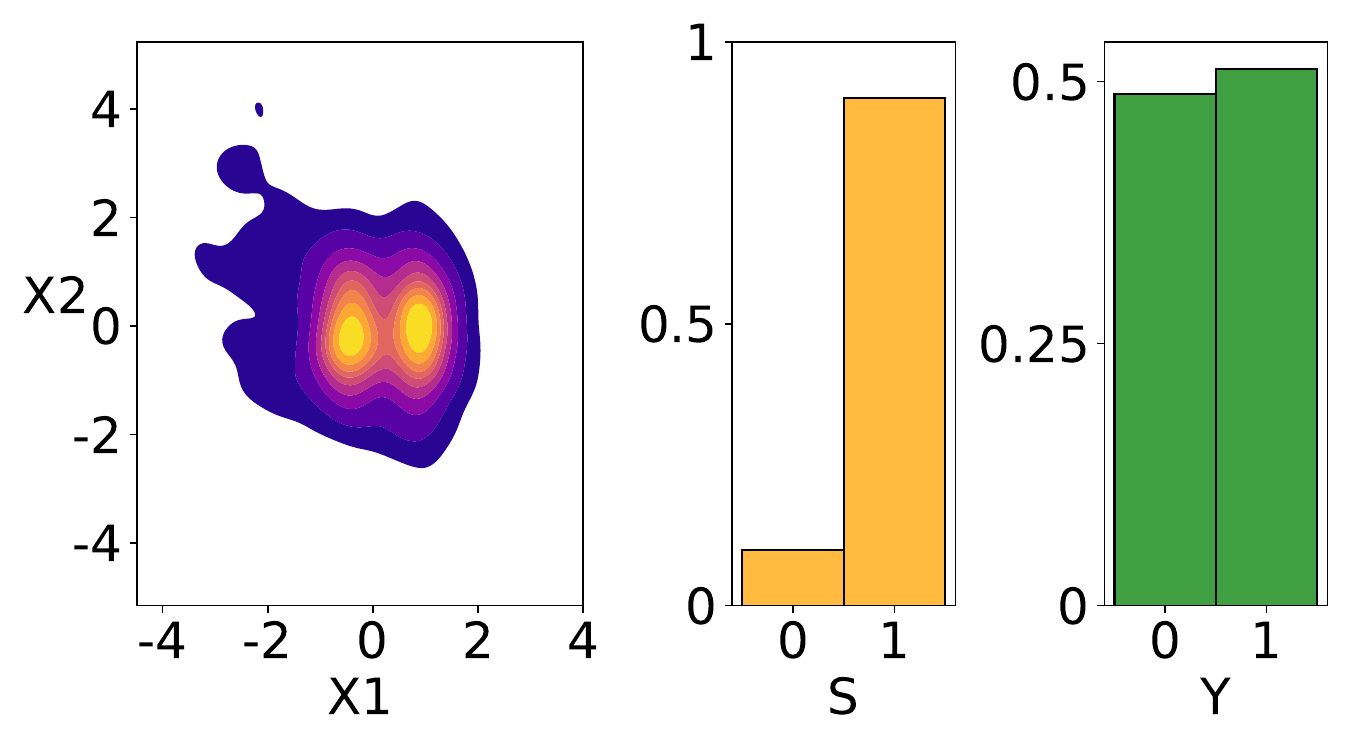}}
\caption{The statistics of the original toy dataset. From left to right the figures show the joint distribution of $X_1$ and $X_2$, and the marginal distributions of $S$ and $Y$.}
\vskip -0.2in
\label{fig:toy_stats}
\end{center}
\end{figure}

\begin{figure}[h!] 
    \centering
    \subfigure[$\lambda_0 = 0.1$]{
    \includegraphics[width=0.48\textwidth]{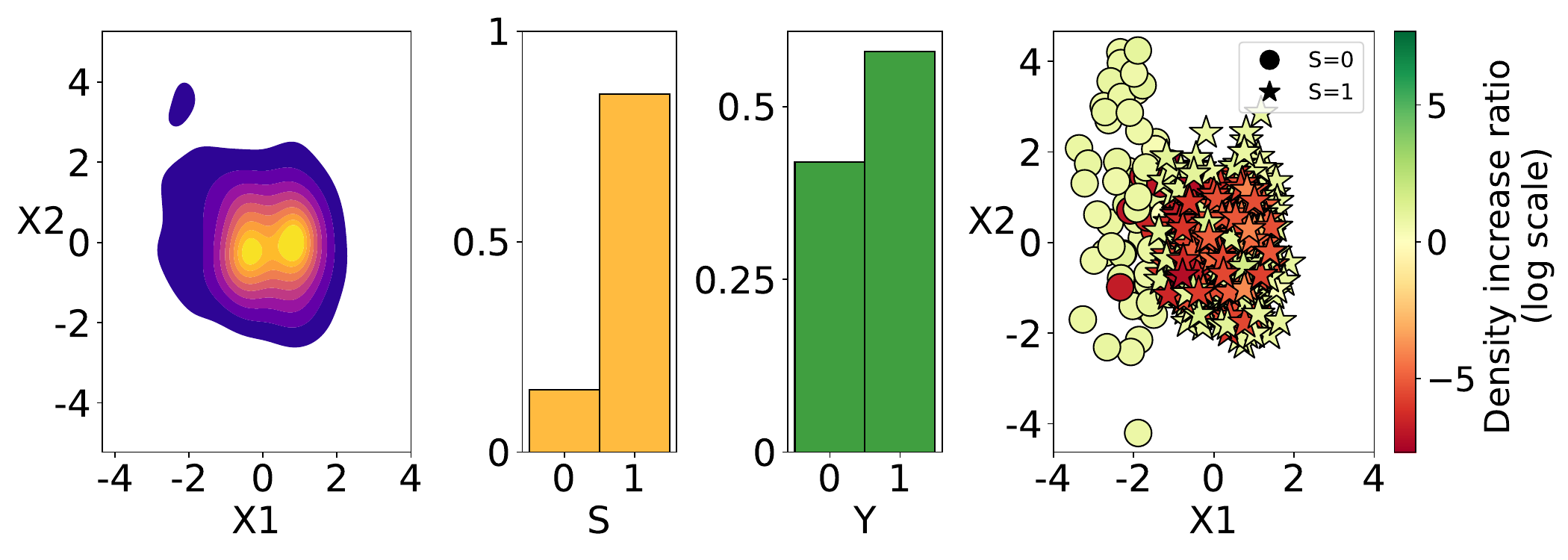}
    \label{fig:subplot1_extreme_minority}
  }
  \hfill 
  \subfigure[$\lambda_0 = 0.1$]{
    \includegraphics[width=0.48\textwidth]{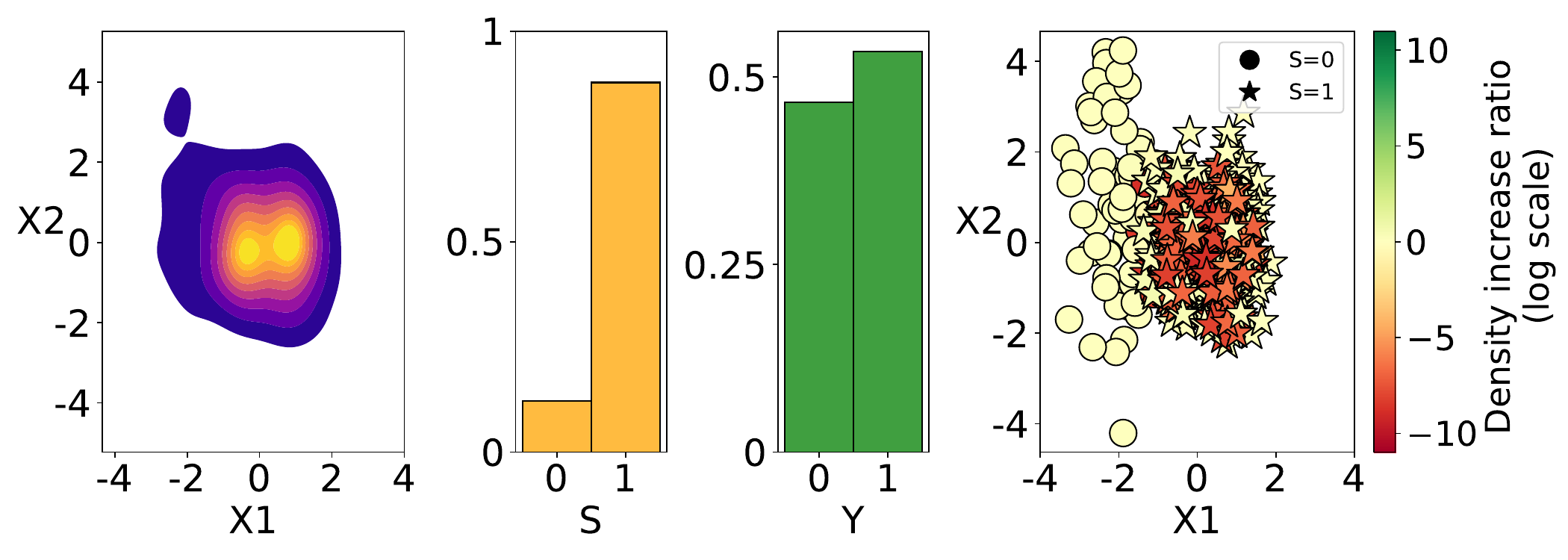}
    \label{fig:subplot1_extreme_majority}
  }
  \hfill 
  \subfigure[$\lambda_0 = 0.5$]{
    \includegraphics[width=0.48\textwidth]{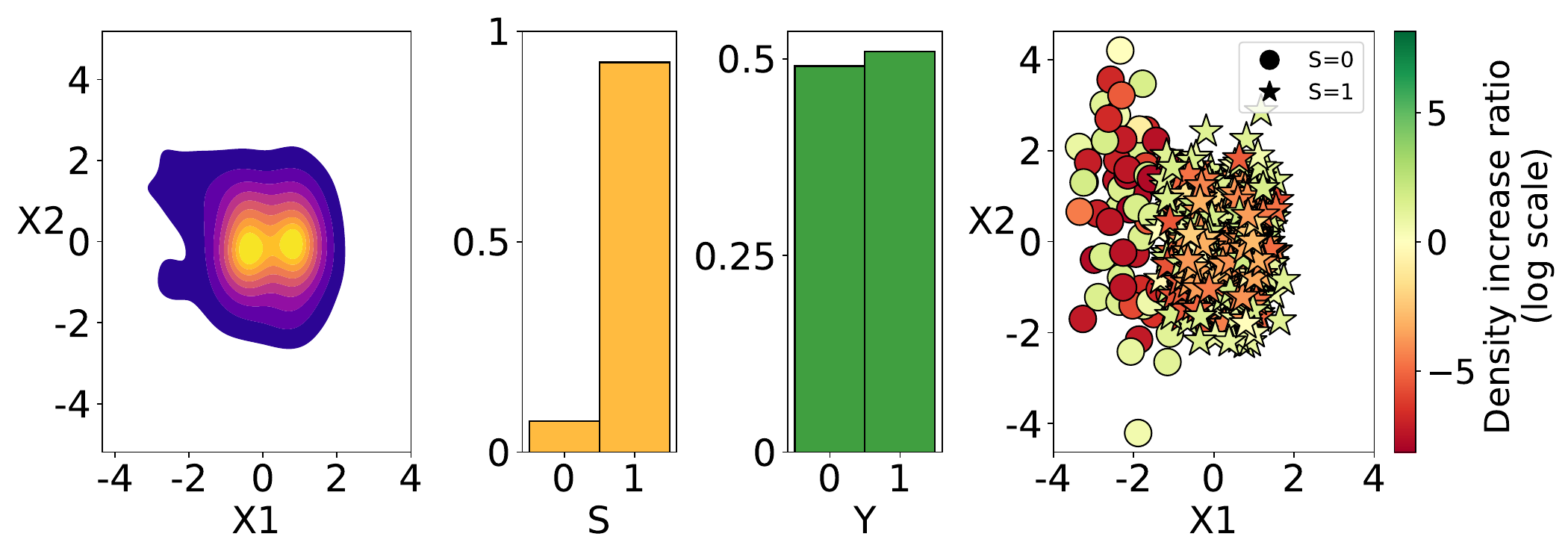}
    \label{fig:subplot2_extreme_minority}
  }
  \hfill 
  \subfigure[$\lambda_0 = 0.5$]{
    \includegraphics[width=0.48\textwidth]{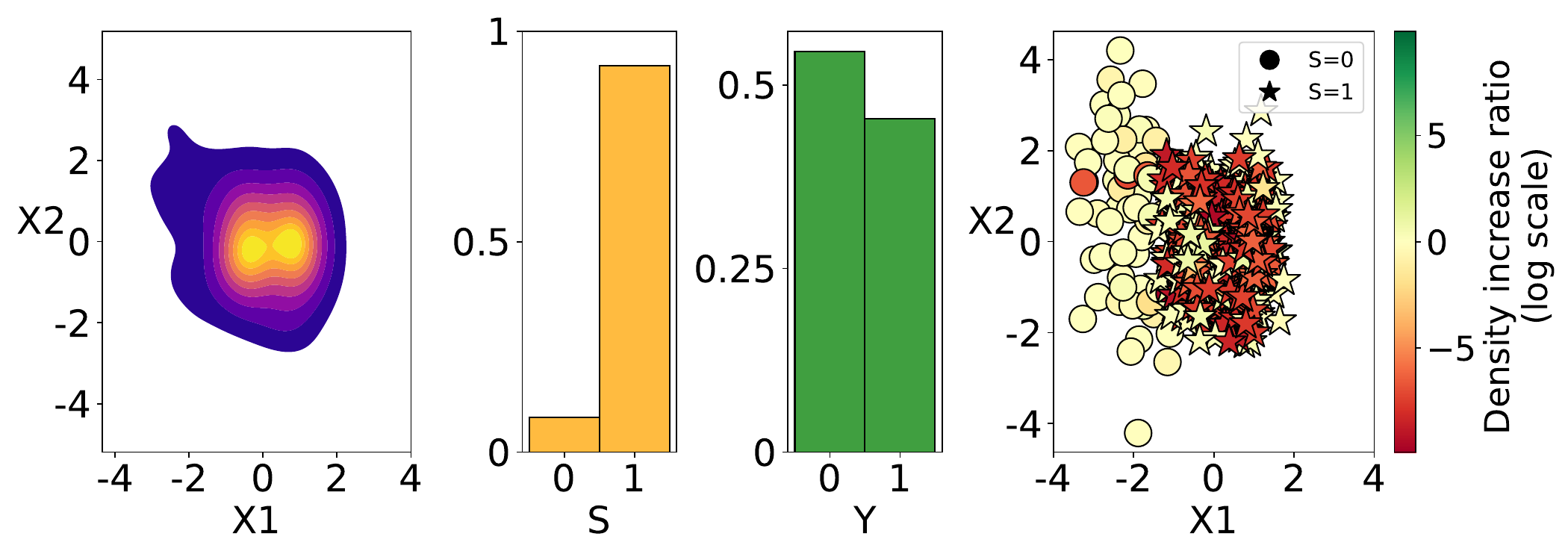}
    \label{fig:subplot2_extreme_majority}
  }
  \hfill
  \subfigure[$\lambda_0 = 1.0$]{
    \includegraphics[width=0.48\textwidth]{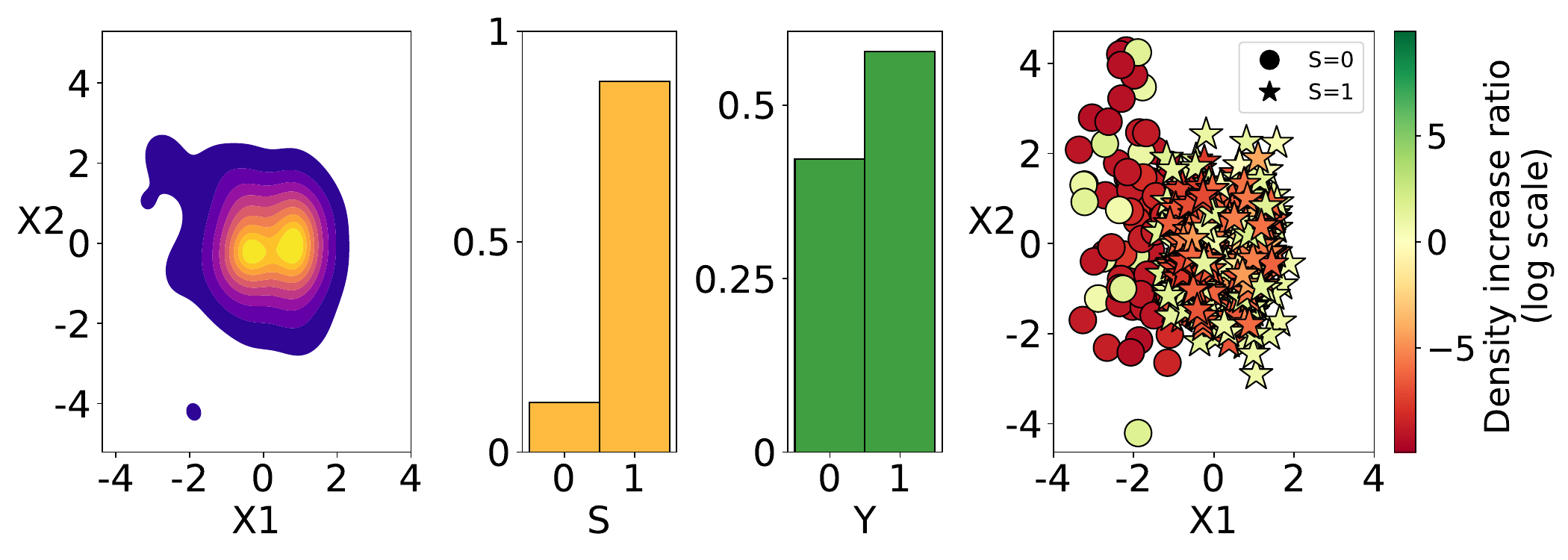}
    \label{fig:subplot3_extreme_minority}
  }
  \hfill 
  \subfigure[$\lambda_0 = 1.0$]{
    \includegraphics[width=0.48\textwidth]{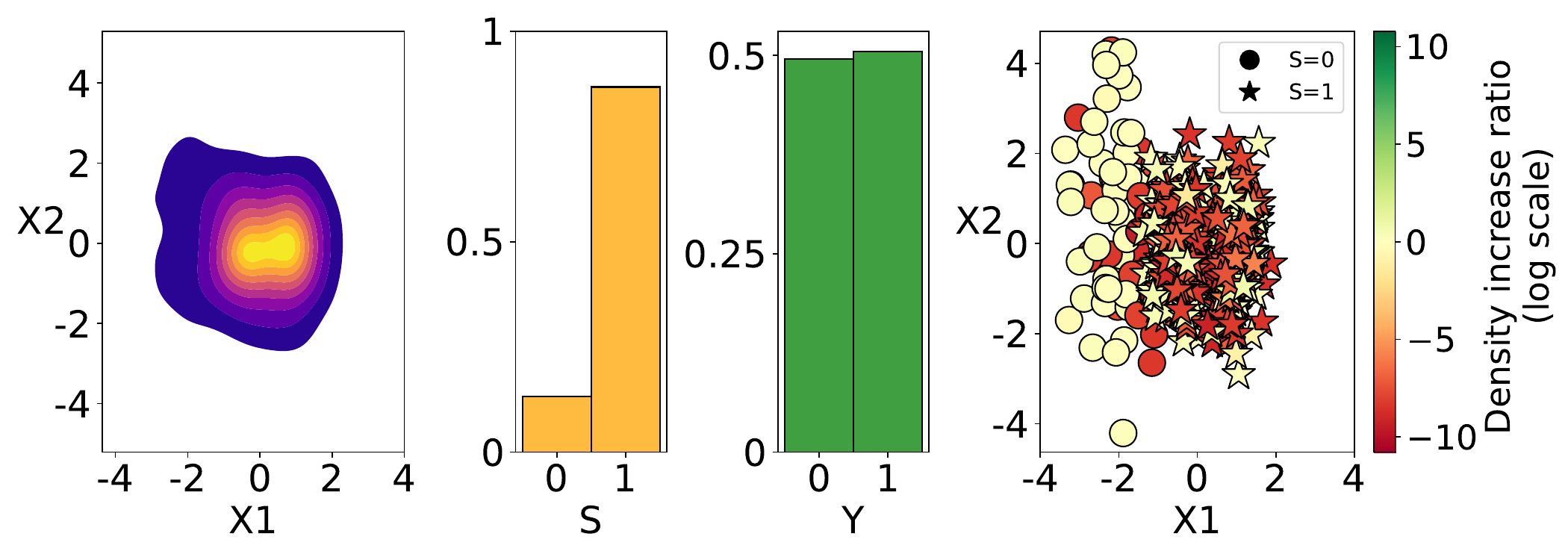}
    \label{fig:subplot3_extreme_majority}
  }
  \hfill 
  \subfigure[$\lambda_0 = 5.0$]{
    \includegraphics[width=0.48\textwidth]{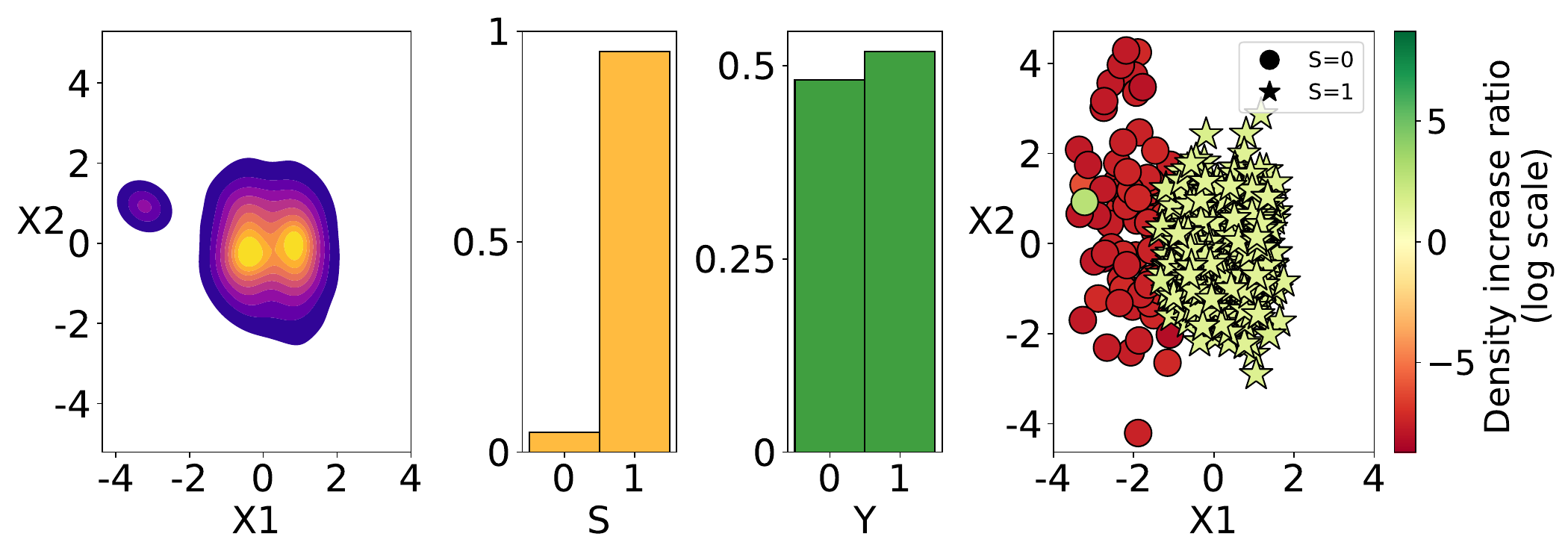}
    \label{fig:subplot4_extreme_minority}
  }
  \hfill 
  \subfigure[$\lambda_0 = 5.0$]{
    \includegraphics[width=0.48\textwidth]{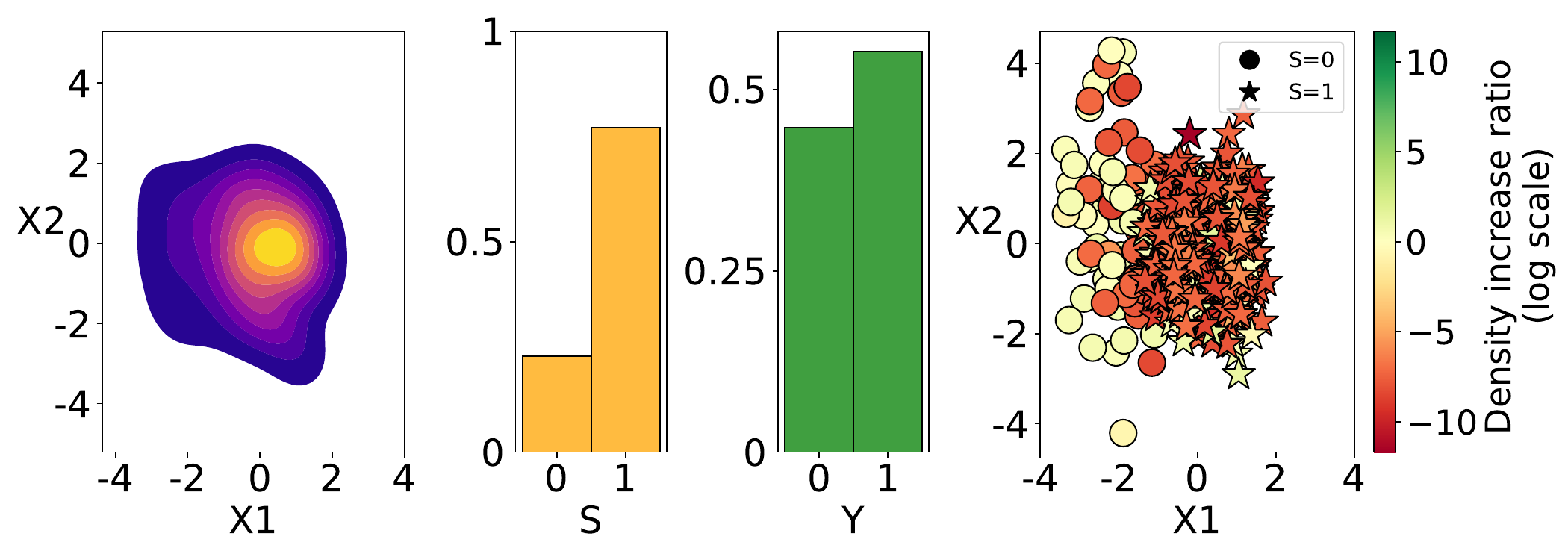}
    \label{fig:subplot4_extreme_majority}
  }
    \caption{The statistics of the worst-case distributions w.r.t. \texttt{SPECTRE} for the minority (left column) and majority (right column) groups, with $\lambda_0 = \{ 0.1, 0.5, 1.0, 5.0\}$.
    In all the cases we set $\sigma = 5$. From left to right the figures show the joint distribution of $X_1$ and $X_2$, the marginal distributions of $S$ and $Y$ and the up-weighing (green) and down-weighing (red) of the instances to create the worst-case distribution. 
    }
    \label{fig:extremal_toy}
\end{figure}

Intuitively, higher values of $\lambda_0$ allow for greater discrepancies between the worst-case and the empirical distributions of the groups, as evidenced by greater variations in the probability given to the points.
The figure also demonstrates that, when analyzing the worst-case scenario for a specific group, the probabilities of other groups are also adjusted to satisfy the constraints of the overall distribution (Equation \ref{eq:uncertainty}). In particular, the worst-case distribution does not simply up-weight instances from the target group while down-weighting all others; rather, it modifies the probabilities across groups to maintain feasibility.  
This effect is especially interesting in the case of \( \lambda_0 = 5.0 \), where, when computing the worst-case distribution for the minority group, nearly all minority group instances are down-weighted except for a single point, while all majority group instances are slightly up-weighted. 
That is, the extremal distribution that provides the upper bound for the error of minority group does not solely focus on the most outlier instance; it also increases the probabilities of many privileged instances to satisfy that the distance between the empirical distribution and this worst-case distribution is at most $\boldsymbol{\lambda}$.

With this, the estimated bounds are less influenced by outliers while maintaining a global population context.
This showcases the advantage of \texttt{SPECTRE} over SOTA approaches, particularly in mitigating pessimism and reducing over-reliance on outliers.

Besides, the effect of $\lambda_0$ is most significant in the worst-case distribution of the minority group. Small variations in $\lambda_0$ result in larger discrepancies between the worst-case distribution and the empirical distribution for the minority group. This occurs because the constraints on the maximum allowable deviation apply to the entire population, with the majority group carrying the greatest weight. The majority group compensates for substantial changes in the minority group's distribution.
In other words, it is easier to create extremal distributions that deviate significantly from the empirical distribution for the minority group, as the majority group helps satisfy the overall constraint. By contrast, for the majority group’s worst-case distribution, higher values of $\lambda_0$ are required to produce meaningful changes in its distribution when creating the worst-case scenario.

\clearpage

\subsection{Performance Guarantees and Worst-Case Distributions on Overall Error}
\label{sec:experiments_bounds_overall}

In the previous section, we analyzed the worst-case distributions for group errors, providing an explicit evaluation of the fairness guarantees offered by the classification rule derived from \texttt{SPECTRE}. However, it is also possible to compute the worst-case distributions for the overall risk, which accounts for the entire population. This allows us to study the out-of-sample generalization guarantees of the overall error and identify the worst-case distributions that lead to these worst-case errors. It is important to emphasize that these worst-case distributions correspond to the distributions the classification rule optimizes against. By examining them, we can directly observe the worst-case scenario that influenced the learned classification rule and analyze its characteristics. Notably, computing the worst-case error and worst-case distribution for the overall error does not require knowledge of the sensitive attributes of the instances. This means that all available training points can be utilized to compute the bounds and analyze the worst-case distributions.

\subsubsection{Worst-Case Distributions on Overall Error}
\label{ap:overall_extremal_distri}

Figure \ref{fig:all_extremal_lambda_ap} illustrates the worst-case distributions for the overall population across different values of \( \lambda_0 \). As \( \lambda_0 \) increases, the worst-case distribution deviates further from the empirical distribution, and the probability assigned to unprivileged instances increases. This demonstrates how modifying the confidence vector directly impacts the fairness guarantees of the classifier in terms of minimax fairness. 
In fact, in this case, the minority group represents the unprivileged group where the classifier incurs a high loss. Therefore, in the minimax formulation, the classifier focuses on these instances as much as the constraints allow.
However, even for very high values of \( \lambda_0 \), we observe that the worst-case distribution does not solely increase the probability of difficult points from the minority group. Instead, it also increases the probability of multiple instances of varying nature, which addresses the pessimism and over-reliance on outliers seen in existing RRM-based minimax-fair approaches (with or without access to demographic information). In other words, the worst-case scenario is not an unrealistic or outlier-driven realization, but one that still maintains a meaningful context.

\begin{figure}[h!] 
    \centering
    \subfigure[$\lambda_0 = 0.1$]{
    \includegraphics[width=0.48\textwidth]{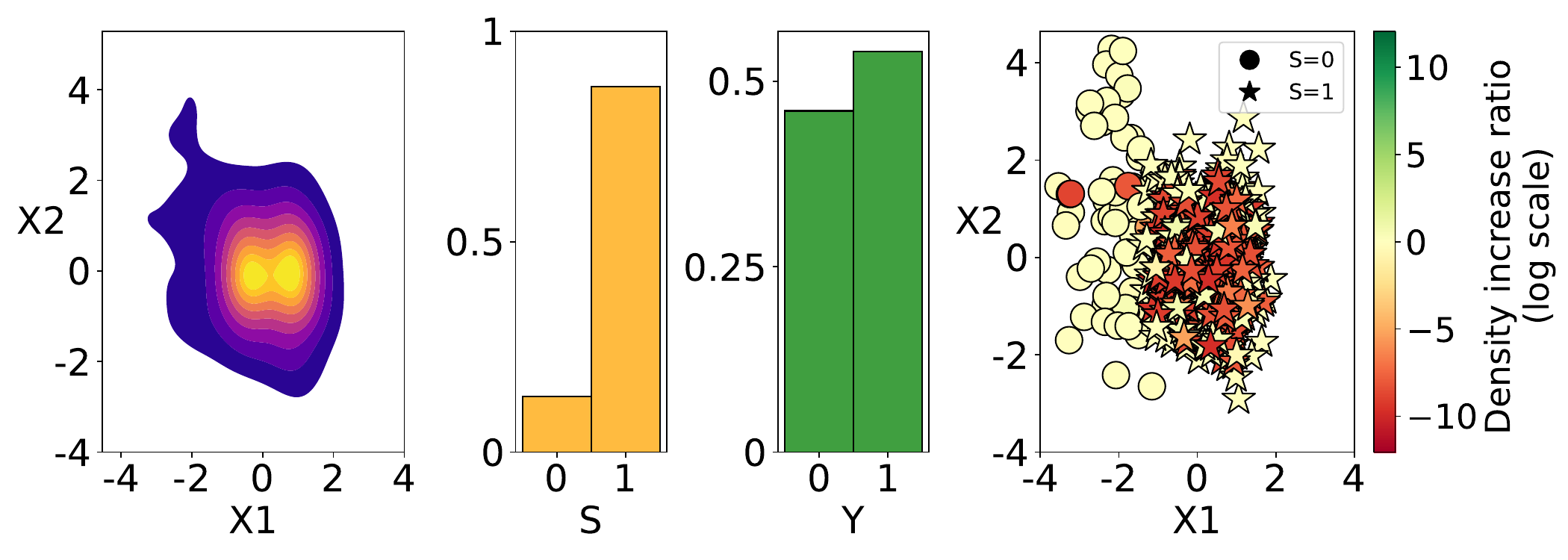}
    \label{fig:subplot1_extreme_all}
  }
  \hfill 
  \subfigure[$\lambda_0 = 0.5$]{
    \includegraphics[width=0.48\textwidth]{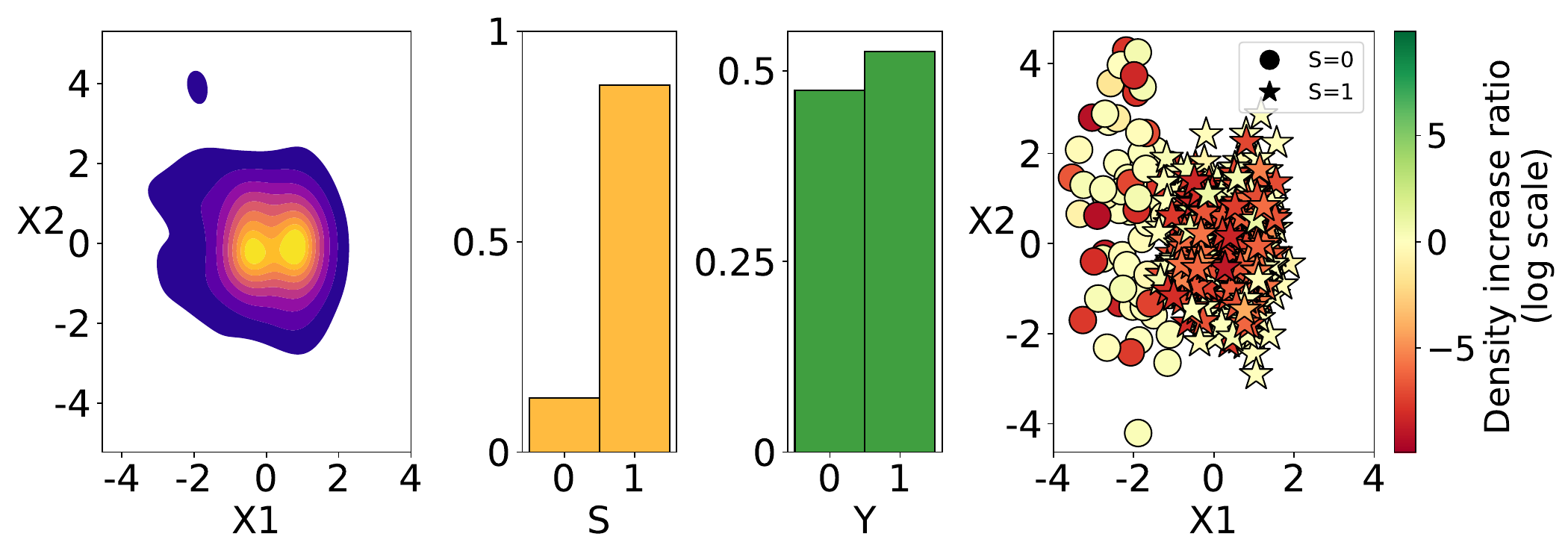}
    \label{fig:subplot2_extreme_all}
    }
  \hfill 
  \subfigure[$\lambda_0 = 1.0$]{
    \includegraphics[width=0.48\textwidth]{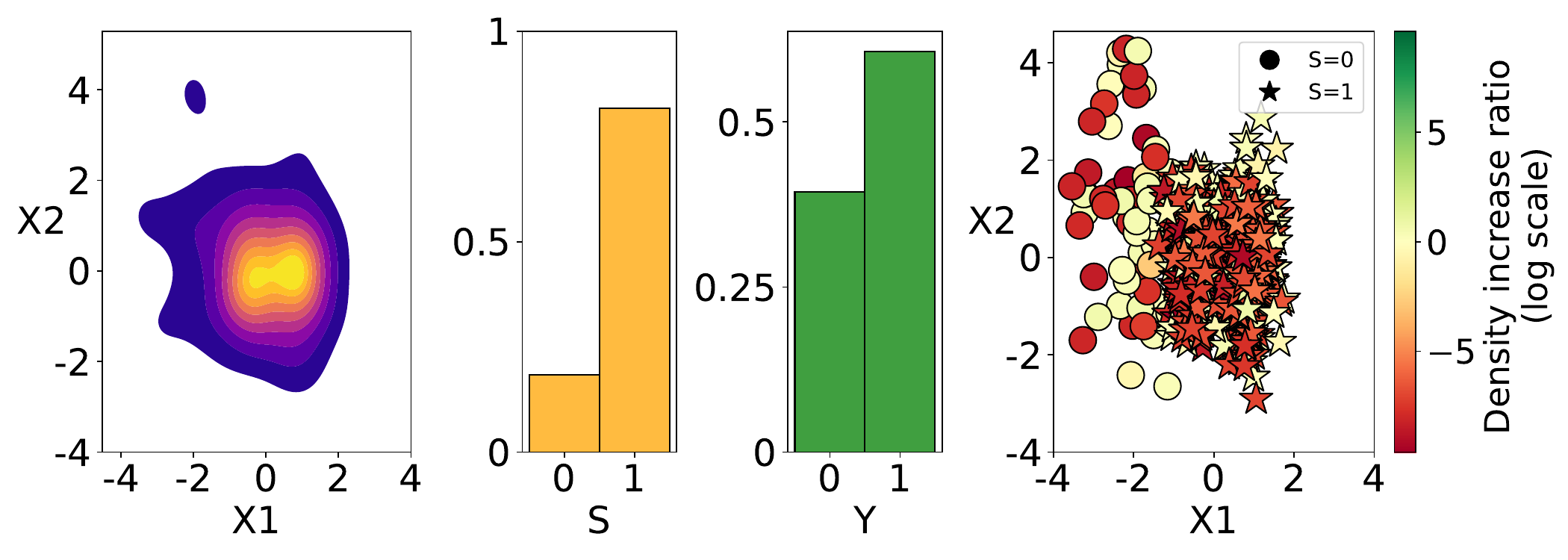}
    \label{fig:subplot3_extreme_all}
    }
  \hfill 
  \subfigure[$\lambda_0 = 5.0$]{
    \includegraphics[width=0.48\textwidth]{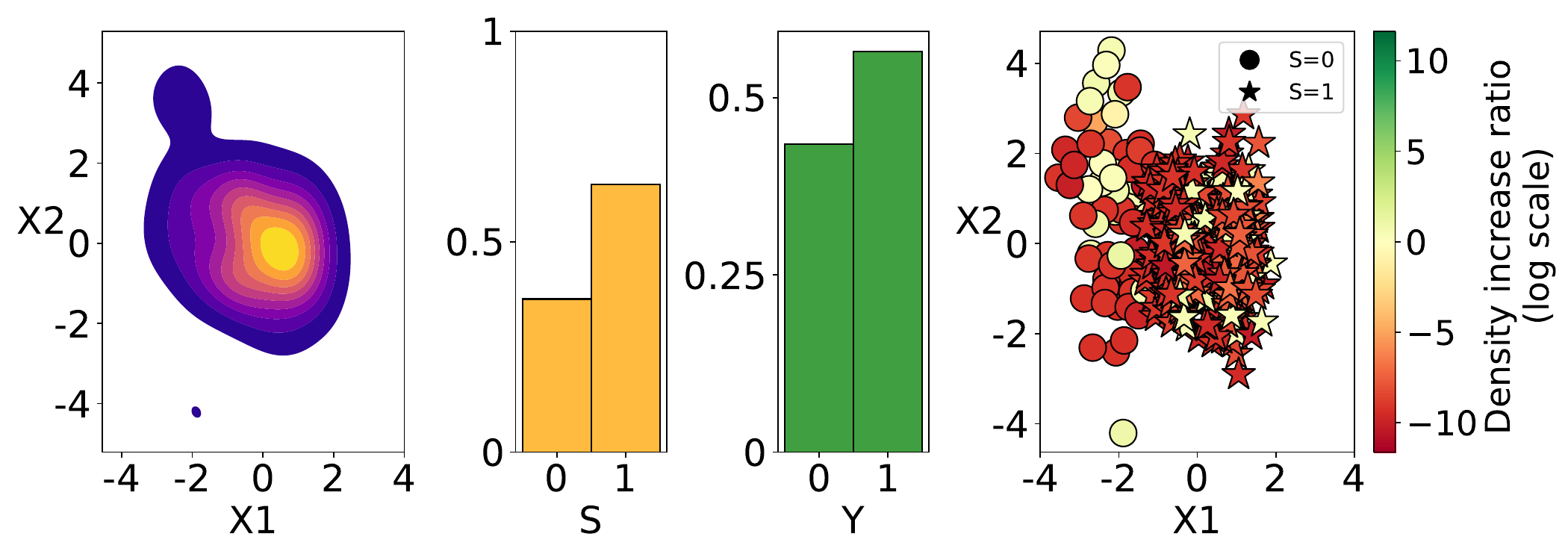}
    \label{fig:subplot4_extreme_all}
    }
    \caption{The statistics of the worst-case distributions w.r.t. \texttt{SPECTRE} for the overall population, for different values of $\lambda_0$. In all the cases we set $\sigma = 5$. Within each subfigure, from left to right the figures show the joint distribution of $X_1$ and $X_2$, the marginal distributions of $S$ and $Y$ and the up-weighing (green) and down-weighing (red) of the instances to create the worst-case distribution.
    }
    \label{fig:all_extremal_lambda_ap}
\end{figure}

\subsubsection{Out-of-Sample Guarantees on Overall Risk}
\label{ap:overall_bounds}

In this section, we show the out-of-sample performance guarantees of \texttt{SPECTRE} in terms of overall risk for the toy problem. We follow the same experimental setup as in Section~\ref{sec:experiments_bounds}, and the results are presented in Figure~\ref{fig:overal_error_bounds}. Notably, computing bounds on the overall risk does not require knowledge of the sensitive attributes of the instances.  

The patterns observed in the bounds of the overall error closely mirror those of the majority group, which is expected given that they constitute the largest portion of the data. Besides, as in the case of the out-of-sample guarantees on group errors, the tightest bounds are achieved with the \( \sigma \) value that provides near-optimal or optimal performance for \texttt{SPECTRE}. Additionally, as \( \lambda_0 \) increases—corresponding to a higher confidence vector—the bounds become looser.

\begin{figure}[!h] 
    \centering
    \subfigure[]{
            \includegraphics[width=0.65\textwidth]{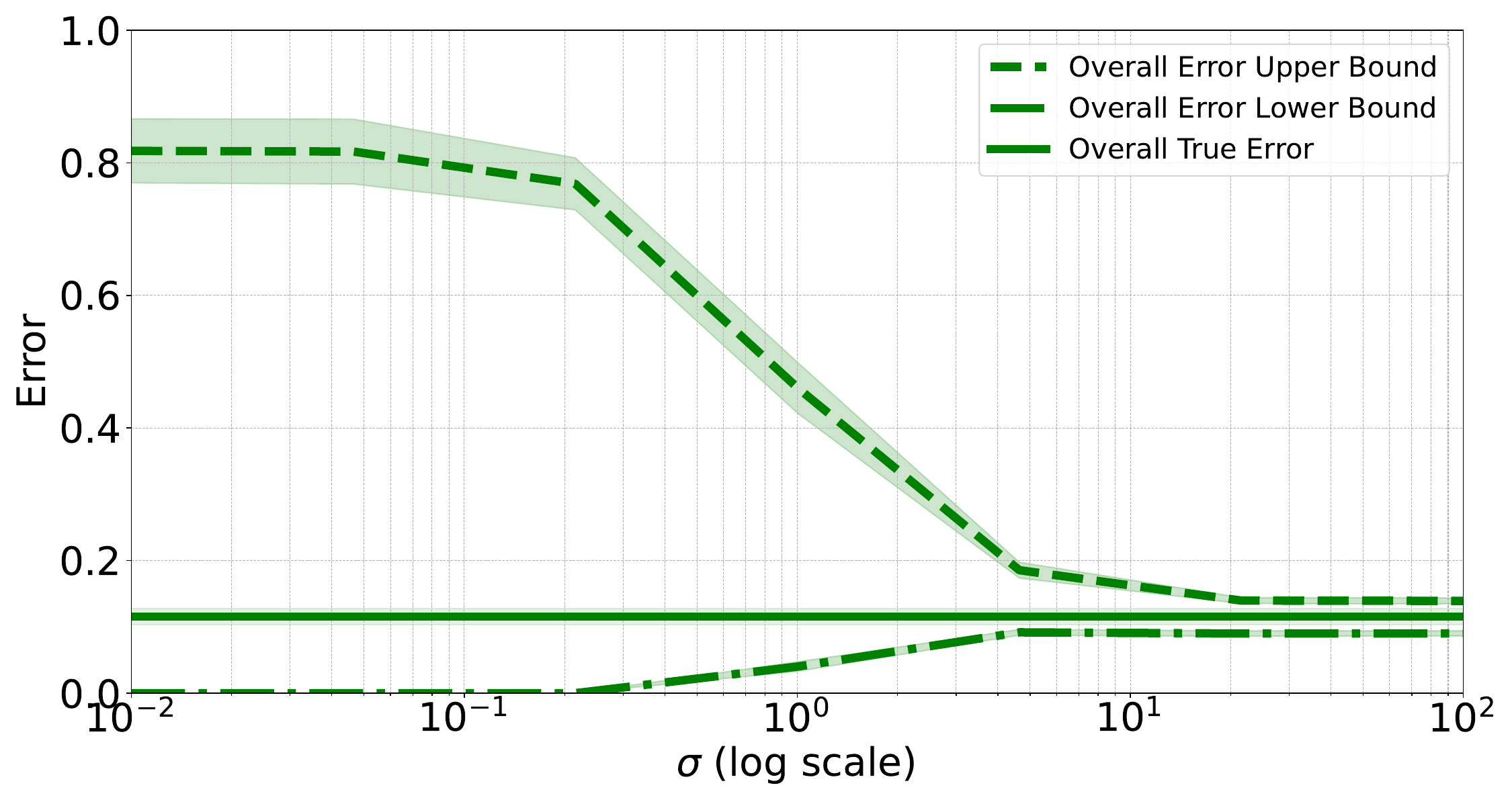}
            \label{fig:overall_bounds_sigma}
        }
        \hfill
        \subfigure[]{
            \includegraphics[width=0.65\textwidth]{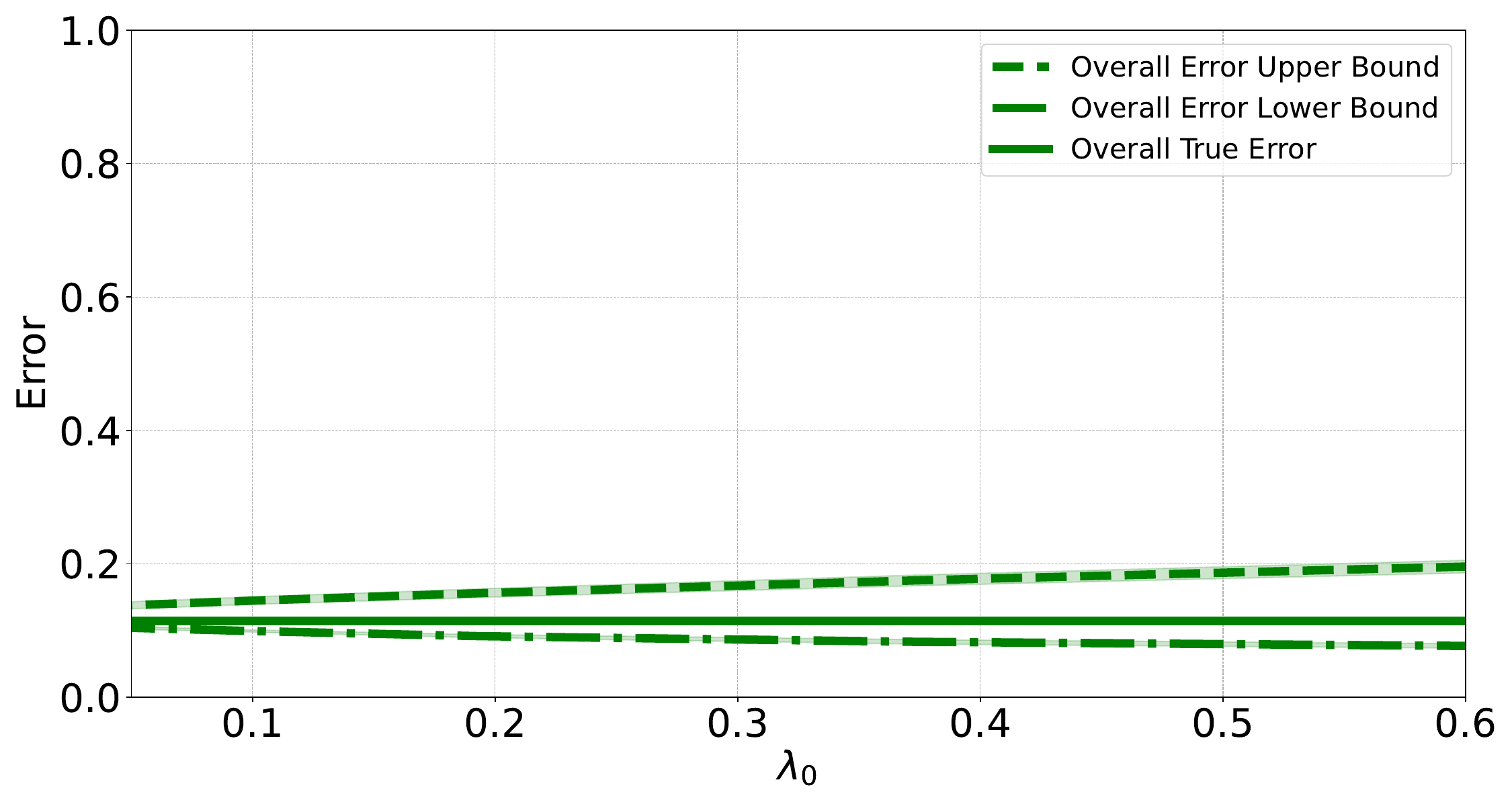}
            \label{fig:overall_bounds_lmbd}
        }
    \caption{Out-of-sample performance guarantees on overall risk for varying (a) $\sigma$ and (b) $\lambda_0$. For (a), we fix $\lambda_0 = 0.1$, and for (b), we fix $\sigma = 10$. 
    }
    \label{fig:overal_error_bounds}
\end{figure}

\subsection{Computational Cost}
\label{sec:spectre_cost}

In this section, we provide insights regarding the computational cost of training \texttt{SPECTRE} (as well as SOTA methods) along different perspectives: for varying numbers of instances (see Figure \ref{fig:time_vs_inst}), features (see Figure \ref{fig:time_vs_feat}) and Fourier components (see Figure \ref{fig:time_vs_comp}). 

Figure \ref{fig:time_vs_inst} shows how the performance of different methods scales with dataset size, using the \textsc{ACSIncome} data from various states (with different number of instances). These results demonstrate that \texttt{SPECTRE} scales well with dataset size. XGBoost and LR are the methods that require the lowest training time, running about 10 times faster than the next fastest approach, \texttt{SPECTRE}. Among fairness-enhancing methods, \texttt{SPECTRE} is the most efficient, with a computational cost similar to ARL but significantly lower than RLM, MMPF, and GDRO, which are approximately three times slower.

\begin{figure}[!h]
\begin{center}
\centerline{\includegraphics[width=0.7\textwidth]{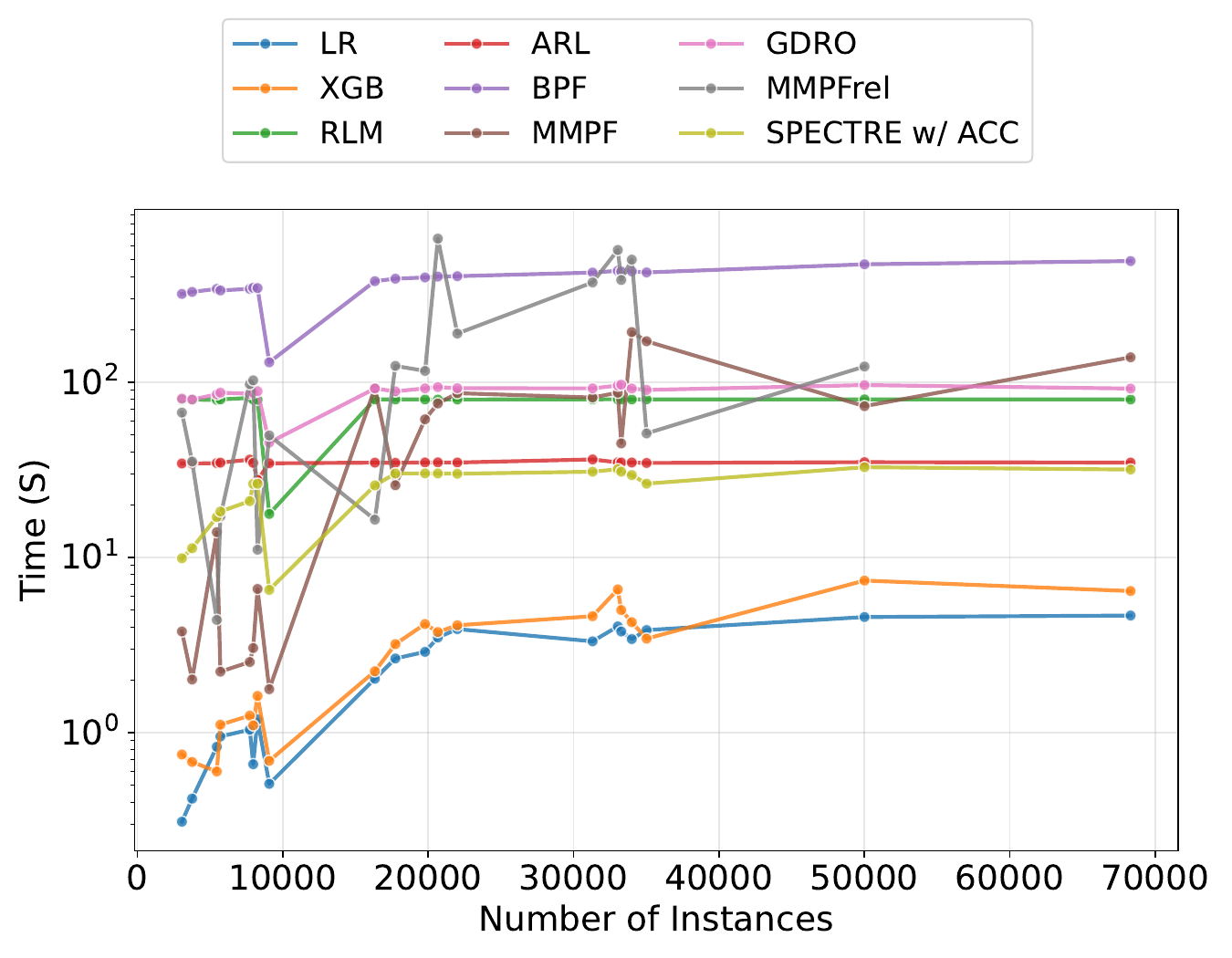}}
\caption{Training computational cost (y-axis) for various methods across different states with varying number of instances (x-axis) in the \textsc{ACSIncome} dataset. LR and XGBoost are the most efficient, followed by \texttt{SPECTRE} and ARL.}
\label{fig:time_vs_inst}
\end{center}
\end{figure}

Moreover, to study the effect of the number of features and Fourier components, we conducted experiments on the toy dataset.
First, we generated variants of the toy data by varying only the number of original features. As Figure \ref{fig:time_vs_feat} shows, increasing the number of features has minimal effect on \texttt{SPECTRE}’s training time. The reason behind this behavior is that the number of features only affects the cost of computing the RFF transformation, which grows linearly with the number of features. Nonetheless, the cost of performing the transformation is negligible compared to the time required to solve the mini-max optimization problem that derives the optimal classifier (Equation~\ref{eq:rrm}). Besides, the time required to solve this optimization problem heavily depends on the size of the feature mapping (i.e., the number of Fourier components in our case). This is clearly shown in Figure \ref{fig:time_vs_comp}, were we fix the original feature set and vary the number of Fourier components used in the RFF. That is, the predominant source of the training cost derives from solving the optimization problem, whose cost rises with the dimensionality of the feature map. In practice, we found that 600 Fourier features provide a favorable balance between efficiency and predictive performance, and we therefore adopt this setting throughout our empirical evaluation of \texttt{SPECTRE}.

\begin{figure}[!h]
\begin{center}
\centerline{\includegraphics[width=0.6\textwidth]{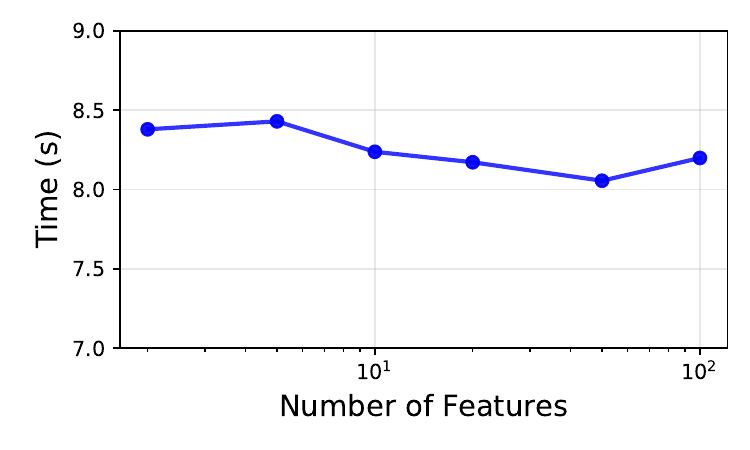}}
\caption{Training time for \texttt{SPECTRE} in the toy dataset as the number of features increases. While the RFF transformation scales linearly with features, its impact is negligible compared to the computational cost of solving the mini-max optimization problem to derive the optimal classification rule (Equation~\ref{eq:rrm}). }
\label{fig:time_vs_feat}
\end{center}
\end{figure}

\begin{figure}[!h]
\begin{center}
\centerline{\includegraphics[width=0.7\textwidth]{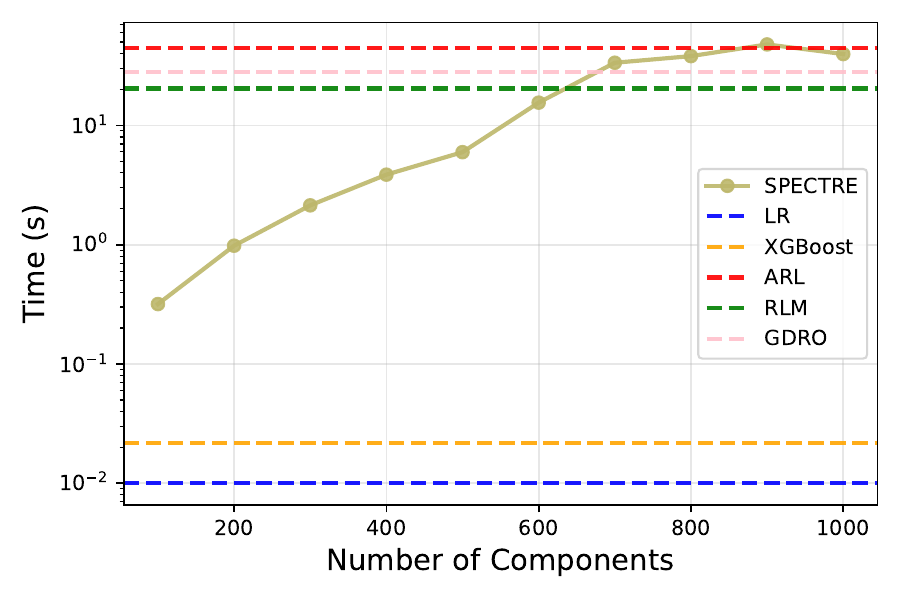}}
\caption{Training time of \texttt{SPECTRE} in the toy dataset for increasing number of Fourier components. As expected, computational cost increases with more Fourier components, but our experiments, as well as those by \citet{mazuelas2023minimax}, demonstrate that 600 components provide a strong balance between efficiency and performance, which is why we adopted this setting for our empirical analysis. }
\label{fig:time_vs_comp}
\end{center}
\end{figure}

\section{Conclusions and Future Work}
\label{sec:conclusions_spectre}

We present \texttt{SPECTRE}, a novel fairness-enhancing intervention that improves worst-group accuracy without demographics. In particular, \texttt{SPECTRE} leverage a novel strategy based on spectrum adjustment to enhance the performance of MRC on unprivileged groups through the uncertainty set. \texttt{SPECTRE} exhibits exceptional robustness performance in terms of worst-group-accuracy, outperforming SOTA minimax-fair approaches that operate without demographic information, and even methods that have access to sensitive information for all instances, with minimal loss in accuracy. 
This highlights how \texttt{SPECTRE} successfully overcomes the pessimism and over-reliance on outliers commonly seen in existing SOTA RRM-based minimax fairness methods. 
Moreover, when a portion of the training data includes sensitive information, it becomes possible to derive out-of-sample generalization guarantees for \texttt{SPECTRE}'s performance across different sensitive groups. 

As with all RRM-based approaches, SPECTRE can improve worst-group accuracy only if the model achieves satisfactory performance on training instances from that specific group; otherwise, such improvement is not guaranteed \cite{slowik2021algorithmic}.
Besides, it is important to note that the out-of-sample guarantees on group errors can only be computed if sensitive information is available for, at least, a subset of the training instances.


In future work, we plan to improve the hyperparameter tuning strategies by incorporating additional characteristics from data. We also aim to extend the applicability of this approach to scenarios involving distribution shifts, which are common in real-world applications. Lastly, we believe that developing methods to assess out-of-sample guarantees without relying on sensitive information is a promising and vital direction for future research.

\section*{Acknowledgments}
This research was funded by the European Union. Views and opinions expressed are however those of the author(s) only and do not necessarily reflect those of the European Union or the European Health and Digital Executive Agency (HaDEA). Neither the European Union nor the granting authority can be held responsible for them. This work is supported by the European Research Council under the European Union's Horizon 2020 research and innovation programme Grant Agreement no. 851538 - BayesianGDPR, Horizon Europe research and innovation programme Grant Agreement no. 101120763 - TANGO. This work is also supported by
the Basque Government under grant IT2109-26 and through the BERC 2022-2025 program; by the Spanish Ministry of Science and Innovation under the grant PID2022-137442NB-I00, and through
BCAM Severo Ochoa accreditation CEX2021-001142-S / MICIN / AEI / 10.13039/501100011033.

\bibliography{ref}
\bibliographystyle{plainnat}

\appendix

\counterwithin{figure}{section}
\counterwithin{table}{section}
\counterwithin{equation}{section}

\section{Uncertainty Set and Worst-Case Realization}
\label{ap:modify_uncertainty_set}

Adjusting the uncertainty set within a RRM framework directly influences the fairness guarantees of the trained classification rule, as it alters the emphasis on unprivileged instances. Consequently, the modification of the uncertainty set has an impact on the worst-group accuracy. In order to see this, let us reformulate the
results of the work by \citet{mazuelas2023minimax} on the worst-case risk for any classification rule  $f \in T(\mathcal{X}, \mathcal{Y})$ and the uncertainty set $\mathcal{U}$ defined as in Equation \ref{eq:uncertainty} as follows: 
    \begin{equation}
    \begin{aligned}
        \sup_{p\in \mathcal{U}} \mathcal{R}(f,p) = \min_{\boldsymbol \mu \in \mathbb{R}^m} \Bigg [ &  \boldsymbol\lambda^T |\boldsymbol \mu| + \\
        & \sup_{\boldsymbol{x} \in \mathcal{X}, y \in \mathcal{Y}} \{  (\Phi(\boldsymbol{x},y) - \mathbb{E}_{p_n} \{ \Phi \} )^T \boldsymbol\mu + \mathcal{L}(f(\boldsymbol{x}),y) \} \Bigg]
        \label{eq:worst_risk}
    \end{aligned}
    \end{equation}
This reformulation of the dual optimization problem of the worst case risk shows that protecting against worst case realization is equivalent to optimizing the tail performance of the model: the objective finds the point in which the classifier achieves high loss $ \mathcal{L}(f(\boldsymbol{x}),y)$ and is far from the expected value of the feature mapping $(\Phi(\boldsymbol{x},y) - \mathbb{E}_{p_n} \{ \Phi \} )$ in the direction pointed out by the optimal value $\boldsymbol{\mu}^*$ (i.e., the solution of the optimization problem \ref{eq:worst_risk}). Usually, the unprivileged instances that differ from the general population and in which the classifier incurs a high loss are associated with minority groups. 
Furthermore, this expression enables to directly see the effect of the confidence vector of the uncertainty set 
through the term $\boldsymbol\lambda^T |\boldsymbol \mu|$ within the worst-case loss. As $\boldsymbol{\lambda}$ decreases, the uncertainty set shrinks, leading to a less conservative worst-case risk (i.e., a smaller worst-case loss). 

\section{Summary of RRM-Based Approaches for Minimax Fairness}

In this section, we summarize the main characteristics of the RRM-based approaches proposed to enhance minimax fairness, including \texttt{SPECTRE} (see Table \ref{tab:summary_rrm}). Most existing methods construct the uncertainty set by reweighting the empirical distribution and do not impose constraints on the degree of divergence from it. In contrast, \texttt{SPECTRE} extends beyond simple reweighting: it models the expectations of distributions in the frequency domain and explicitly bounds how much the distributions within the uncertainty set can deviate from the empirical distribution (in that domain). Furthermore, while the other approaches rely on surrogate loss functions during training, \texttt{SPECTRE} is trained directly using the 0–1 loss.

\begin{table}[!h]
\caption{ Key characteristics of existing RRM-based minimax fairness approaches. The table reports each method’s name, whether it is blind to sensitive information, the construction of its uncertainty set, whether it constraints the maximum deviation from the empirical distribution, and the loss function (noting when a surrogate loss is used). }
\label{tab:summary_rrm}
\begin{center}
\begin{small}
\begin{sc}
\scalebox{0.95}{\begin{tabular}{lcccc}
\toprule
Method & Blind? & Uncertainty set & Constraint? & Loss function  \\
\midrule
MMPF \cite{martinez2020minimax} & \cross & Reweighing & \cross & Cross-entropy loss (surrogate)  \\
& &  & & and Brier Score (surrogate) \\
GDRO \cite{sagawa2020distributionally} & \cross & Reweighing & \cross & Cross-entropy loss (surrogate) \\
RLM \cite{hashimoto2018fairness} & \tick & Reweighing & \tick & Cross-entropy loss (surrogate) \\
ARL \cite{lahoti2020fairness} & \tick & Reweighing & \cross & Cross-entropy loss (surrogate) \\
BPF \cite{martinez2021blind} & \tick & Reweighing & \cross & Cross-entropy loss (surrogate) \\
\texttt{SPECTRE} (ours) & \tick & Spectrum expectation matching & \tick & 0-1 loss \\
\bottomrule
\end{tabular}}
\end{sc}
\end{small}
\end{center}
\end{table}

\section{Alternative General Feature Mappings}
\label{ap:other_featmap}

\begin{figure}[h!]
\begin{center}
\centerline{\includegraphics[width=0.55\textwidth]{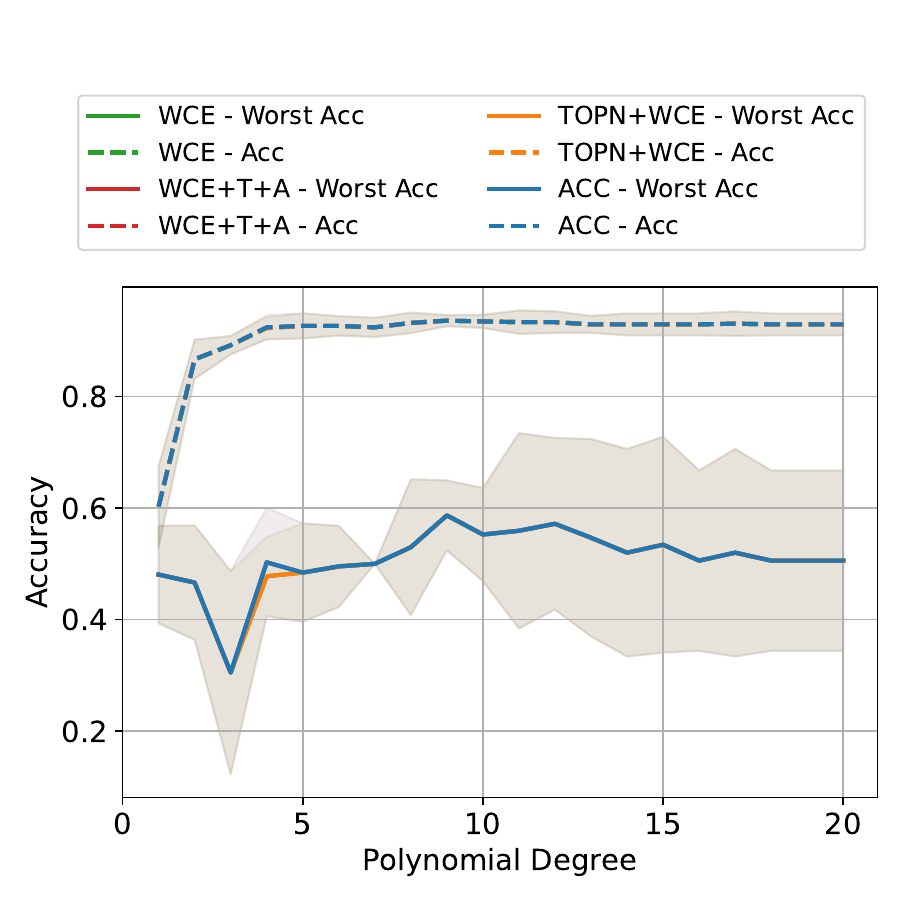}}
\caption{The worst-group accuracy and the overall accuracy of the MRC with a polynomial feature mapping of varying degrees on the toy dataset, employing different strategies to estimate the value of the hyperparameter $\lambda_0$.}
\label{fig:poly_vs_degree}
\end{center}
\end{figure}

In this section, we investigate an alternative to the Fourier feature mapping: the polynomial feature mapping. Figure \ref{fig:poly_vs_degree} presents the performance of applying polynomial feature mappings of varying degrees in combination with MRC on the toy dataset. As the degree of the polynomial increases, overall accuracy initially improves and then plateaus around degree 5. In contrast, the worst-group accuracy does not exhibit a clear trend with respect to the polynomial degree. 

Figure~\ref{fig:other_feat_map} further compares the performance of polynomial and Fourier feature mappings when combined with MRC. We focus on the polynomial mapping of degree 9, which achieves the highest worst-group accuracy among the polynomial variants (see Figure \ref{fig:poly_vs_degree}) and also yields optimal overall accuracy. However, despite its strong overall performance, the polynomial mapping still performs significantly worse than the Fourier mapping in terms of worst-group accuracy. Since our primary objective is to enhance fairness guarantees in terms of worst-group accuracy, the Fourier feature mapping is better suited for this purpose and will be the one employed in conjunction with MRC.

\begin{figure}[h!] 
    \centering
    \subfigure[]{
            \includegraphics[width=0.65\textwidth]{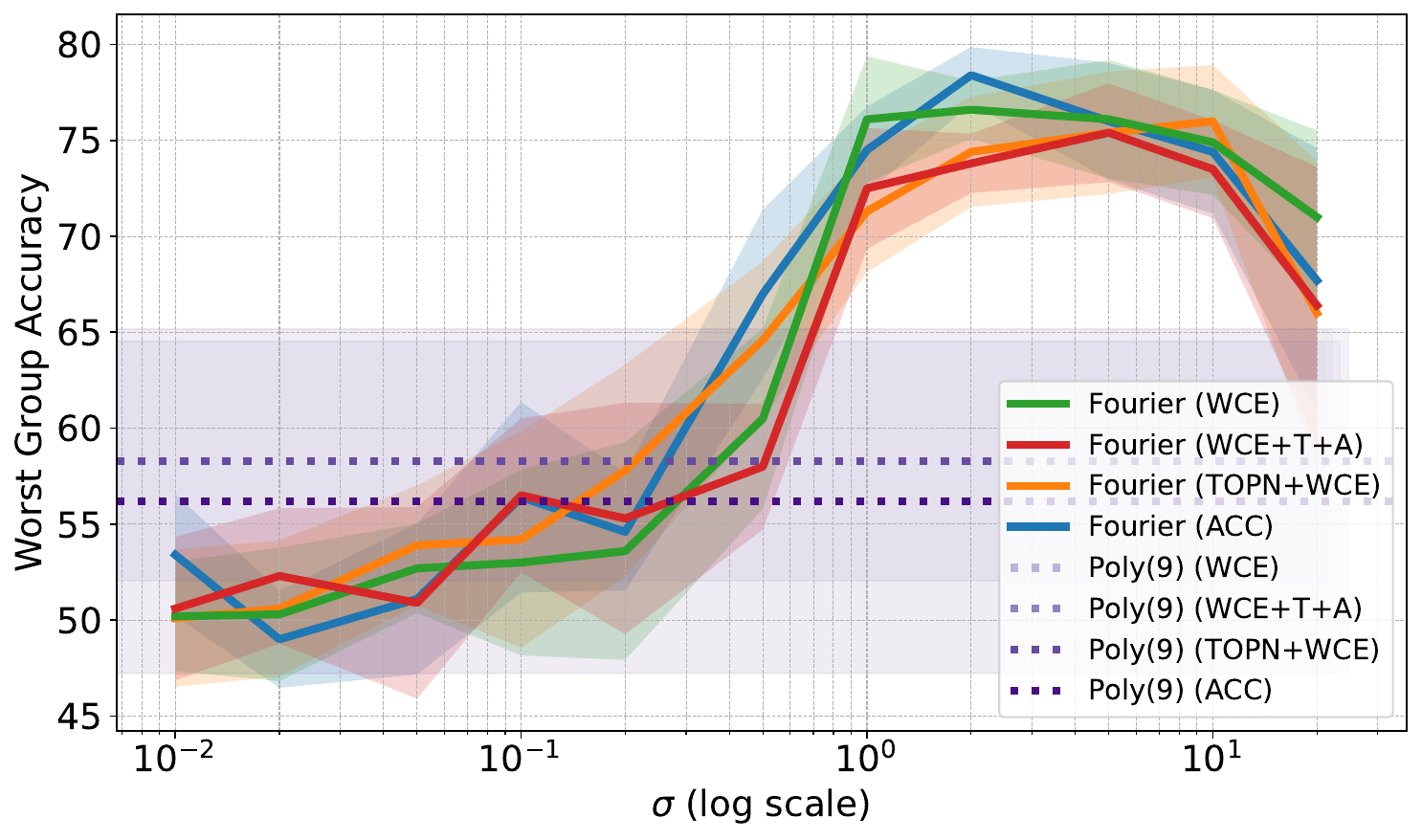}
            \label{fig:other_feat_map_worstacc}
        }
        \hfill
        \subfigure[]{
            \includegraphics[width=0.65\textwidth]{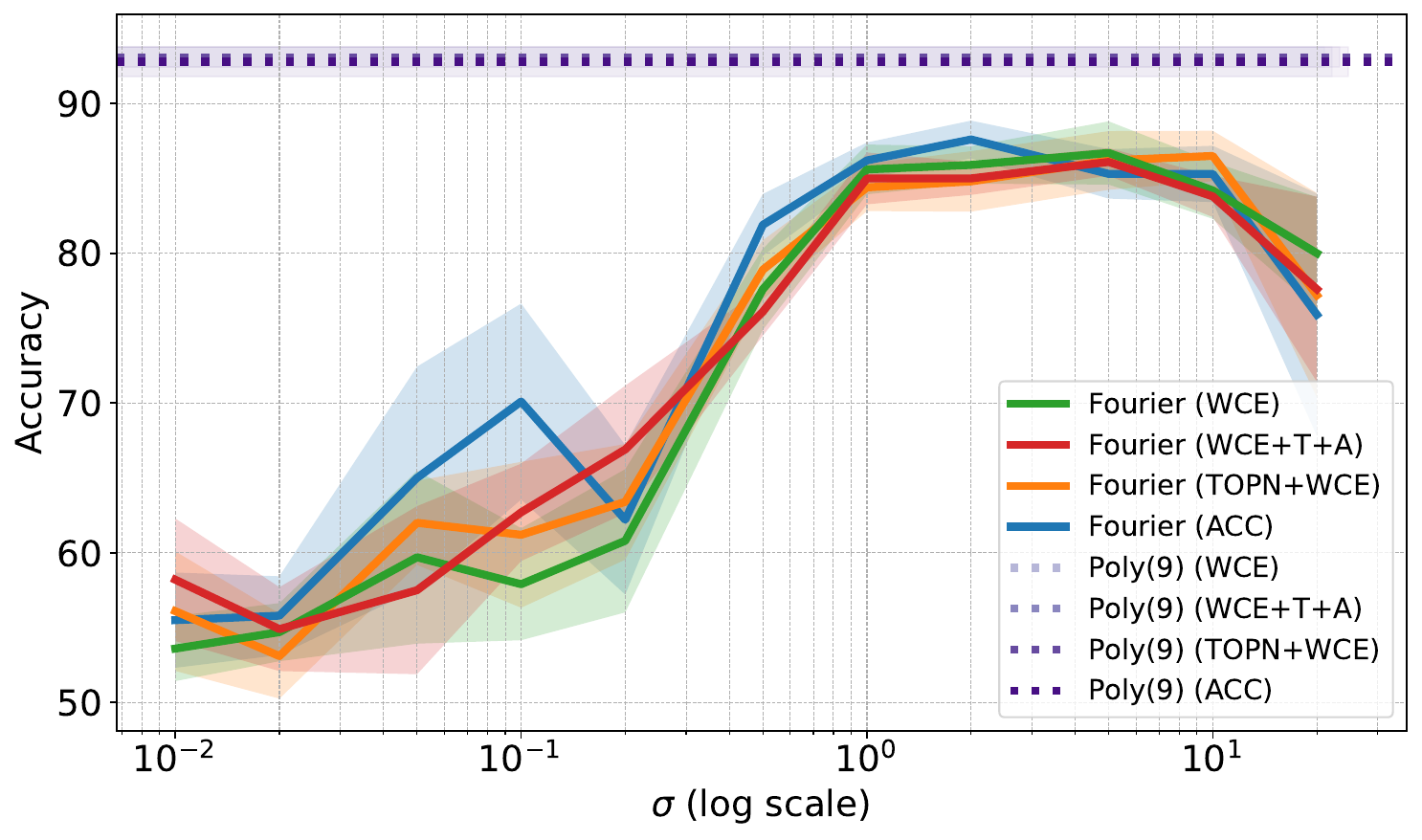}
            \label{fig:other_feat_map_acc}
        }
    \caption{ The (a) worst-group accuracy and (b) overall accuracy on the toy datasets for the Fourier feature mapping and the polynomial feature mapping with degree 9 with various strategies for hyperparameter tuning. The polynomial feature mapping offers better overall accuracy but suffers a significant drop in worst-group accuracy.}
    \label{fig:other_feat_map}
\end{figure}

\clearpage

\section{Additional Results on the Performance of \texttt{SPECTRE}}
\label{ap:additional_results}

\subsection{State-Level Results on ACS Data}
\label{ap:detailed_table}

This section provides additional insights into the performance of \texttt{SPECTRE} and SOTA approaches across various classification tasks using ACS data, as well as different characterizations of sensitive groups. Specifically, we summarize the results for different states in tables. Besides, since it has been demonstrated that improving a classifier's performance for subgroups based on one sensitive attribute can negatively impact the balance of subgroup accuracy for another sensitive attribute, in addition to evaluating sensitive groups based on \textit{race}, gender-based sensitive groups are also considered as a reference to assess this potential impact. In particular, when considering one-dimensional sensitive attributes, we report the average ($\pm$ standard deviation) for overall accuracy (AV. ACC),  the worst-group accuracy (W ACC (R)) maximum accuracy disparity (MAX $\Delta$ ACC (R)) across groups based on race and the worst-group accuracy (W. ACC (G)) maximum accuracy disparity (MAX $\Delta$ ACC (G)) across groups based on gender. 
On the other hand, when considering intersectional groups based on gender and race, we report the average ($\pm$ standard deviation) for overall accuracy (AV. ACC),  the worst-group accuracy (W ACC) and maximum accuracy disparity (MAX $\Delta$ ACC) across these groups.
The experimental setup follows the same methodology described in Section \ref{sec:experimental_setting}.

The overall accuracy and worst-group accuracy for groups based on gender the performance of \texttt{SPECTRE} decreases slightly relative to the most accurate method (XGBoost), with a maximum accuracy drop of 2–4\% (column 3) and a worst-group accuracy reduction of 1–6\% (columns 6–7). However, the fairness improvements observed for racial groups are substantially larger, reaching gains of up to 47\% (Table \ref{tab:results5}), 25\% (Table \ref{tab:results7}), 24\% (Table \ref{tab:results9}), and 20\% (Table \ref{tab:results2}), among others. Given these inherent trade-offs, we believe that a modest accuracy reduction of 2–4\% (or up to 6\% for worst-group performance among gender groups) is a reasonable cost for achieving fairness improvements of 20–40\% in the most vulnerable racial groups. Notably, in intersectional evaluations that jointly consider race and gender (see Tables \ref{tab:results8}–\ref{tab:results10} and Figures \ref{fig:genexp_subplot3}–\ref{fig:genexp_subplot4}), \texttt{SPECTRE} attains the best worst-group accuracy.

In addition to reporting the average values of the evaluation metrics, we also include their standard deviations across multiple repetitions of the experiment. This is particularly important for assessing the variability of performance, especially in smaller or underrepresented groups, where fluctuations tend to be more pronounced. 
Notably, \texttt{SPECTRE} consistently exhibits lower standard deviations compared to standard SOTA methods. This suggests that \texttt{SPECTRE} not only improves fairness but also achieves more stable and reliable performance across different demographic groups.

These results highlight \texttt{SPECTRE}'s ability to improve worst-group accuracy without the need for demographic information, even in scenarios with a large number of groups (up to 18 intersectional groups) and very small group sizes (just a few samples), all while incurring only a minimal reduction in overall accuracy. Furthermore, while it achieves substantial improvements for one sensitive attribute (e.g., race), the negative impact on fairness across other sensitive attributes (e.g., gender) remains minimal. 

\begin{table*}[!h]
\caption{ Average ($\pm$ std) results on the ACSEmployment dataset for state CT, with 8 sensitive groups and the smallest group containing 8 samples. Among the methods that are blind to demographics, the best result for each metric (column) is emphasized in \textbf{bold}, and the second-best result is \underline{underlined}. }
\label{tab:results1}
\begin{center}
\begin{small}
\begin{sc}
\scalebox{0.9}{\begin{tabular}{clccccc}
\toprule
& & av.acc  & w.acc  (r) & max$\Delta$acc(r) & w.acc(g)  & max$\Delta$acc(g)  \\
& Method & $(\uparrow)$ & $(\uparrow)$& $(\downarrow)$ & $(\uparrow)$ &  $(\downarrow)$  \\
\midrule
Base- & LR & $77.3 \pm 0.4$ & $41.2 \pm 25.5$ & $46.6 \pm 18.2$ & $75.0 \pm 0.9$ & $4.9 \pm 1.2$ \\
lines & XGB & $\boldsymbol{79.3 \pm 0.4}$ & $58.0 \pm 12.6$ & $35.4 \pm 7.9$ & $\boldsymbol{76.8 \pm 0.5}$ & $5.2 \pm 0.7$ \\
\midrule
& RLM & $52.3 \pm 5.3$ & $13.4 \pm 18.9$ & $54.5 \pm 14.9$ & $49.9 \pm 4.3$ & $4.8 \pm 2.4$ \\
& ARL & $75.8 \pm 0.8$ & $35.0 \pm 21.5$ & $55.7 \pm 19.4$ & $73.0 \pm 0.9$ & $5.9 \pm 0.8$ \\
& BPF & $70.1 \pm 2.7$ & $39.9 \pm 14.8$ & $54.2 \pm 18.2$ & $66.7 \pm 3.0$ & $7.2 \pm 1.1$ \\
w/o & SURE & $77.2 \pm 0.3$ & $55.8 \pm 7.6$ & $37.4 \pm 7.6$ & $74.7 \pm 0.4$ & $5.1 \pm 0.6$ \\ 
& FairEns & $53.9 \pm 8.8$ & $39.9 \pm 11.2$ & $39.9 \pm 19.0$ & $50.4 \pm 9.1$ & $7.3 \pm 2.4$ \\
Demo- & \texttt{SPECTRE}+WCE & $77.5 \pm 0.3$ & $61.7 \pm 9.6$ & $31.4 \pm 3.2$  & $\underline{75.6 \pm 0.5}$ & $\boldsymbol{3.8 \pm 0.7}$ \\
graphics & \texttt{SPECTRE}+WCE+T+A & $77.4 \pm 0.4$ & $62.4 \pm 10.1$ & $\underline{29.6 \pm 1.3}$ & $75.4 \pm 0.7$ & $\underline{4.2 \pm 0.7}$ \\
& \texttt{SPECTRE}+TOPN+WCE & $\underline{77.7 \pm 0.4}$ & $\underline{62.6 \pm 10.3}$ & $30.6 \pm 3.5$ & $75.6 \pm 0.7$ & $4.4 \pm 0.7$ \\
& \texttt{SPECTRE}+ACC & $77.5 \pm 0.5$ & $\boldsymbol{62.7 \pm 10.4}$ & $\boldsymbol{29.2 \pm 1.3}$ & $75.4 \pm 0.5$ & $4.5 \pm 0.6$ \\
\midrule
S in & MMPF & $72.5 \pm 0.3$ & $45.6 \pm 7.9$ & $40.5 \pm 8.5$ & $69.3 \pm 0.4$  & $6.7 \pm 0.6$ \\
train & GDRO & $74.2 \pm 0.4$ & $54.1 \pm 5.6$ & $37.0 \pm 8.8$ & $71.2 \pm 0.4$ & $6.3 \pm 0.3$ \\
\& val & MMPFrel & $71.4 \pm 2.1$ & $44.1 \pm 15.8$ & $41.5 \pm 14.9$ & $68.2 \pm 2.1$ & $6.7 \pm 0.8$ \\
\end{tabular}}
\end{sc}
\end{small}
\end{center}
\end{table*}

\begin{table*}[!h]
\caption{ Average ($\pm$ std) results on the ACSEmployment dataset for state LA, with 9 sensitive groups and the smallest group containing 1 sample. Among the methods that are blind to demographics, the best result for each metric (column) is emphasized in \textbf{bold}, and the second-best result is \underline{underlined}. }
\label{tab:results2}
\begin{center}
\begin{small}
\begin{sc}
\scalebox{0.9}{\begin{tabular}{clccccc}
\toprule
& & av.acc  & w.acc  (r) & max$\Delta$acc(r) & w.acc(g)  & max$\Delta$acc(g)  \\
& Method & $(\uparrow)$ & $(\uparrow)$& $(\downarrow)$ & $(\uparrow)$ &  $(\downarrow)$  \\
\midrule
Base - & LR & $75.4 \pm 0.3$ & $41.9 \pm 26.1$ & $47.4 \pm 27.8$ & $72.9 \pm 0.2$ & $5.2 \pm 0.4$ \\
lines & XGB & $\boldsymbol{77.5 \pm 0.2}$ & $46.6 \pm 26.6$ & $46.5 \pm 32.9$ & $\boldsymbol{74.6 \pm 0.2}$ & $6.0 \pm 0.2$ \\
\midrule
& RLM & $50.5 \pm 2.9$ & $25.1 \pm 20.7$ & $53.6 \pm 25.1$ & $49.1 \pm 2.9$ & $\boldsymbol{2.9 \pm 2.3}$ \\
& ARL & $75.1 \pm 0.8$ & $31.7 \pm 26.0$ & $54.3 \pm 37.4$ & $72.6 \pm 0.9$ & $5.1 \pm 0.4$ \\
& BPF & $68.5 \pm 0.3$ & $21.4 \pm 21.1$ & $61.1 \pm 33.6$ & $65.7 \pm 0.3$ & $5.8 \pm 0.7$ \\
w/o & SURE & $75.8 \pm 0.2$ & $42.1 \pm 26.3$ & $42.4 \pm 23.2$ & $73.0 \pm 0.3$ & $5.9 \pm 0.3$ \\
Demo- & FairEns & $49.9 \pm 2.2$ & $20.7 \pm 18.1$ & $37.8 \pm 16.7$ & $47.1 \pm 2.7$ & $5.4 \pm 1.9$ \\
graphics & \texttt{SPECTRE}+WCE & $75.4 \pm 0.7$ & $66.4 \pm 3.8$ & $22.0 \pm 9.5$ & $73.5 \pm 0.5$ & $\underline{4.0 \pm 0.9}$ \\
& \texttt{SPECTRE}+WCE+T+A & $\underline{76.0 \pm 0.4}$ & $\underline{67.1 \pm 2.7}$ & $\boldsymbol{20.0 \pm 10.4}$ & $\underline{73.6 \pm 0.5}$ & $4.9 \pm 0.5$ \\
& \texttt{SPECTRE}+TOPN+WCE & $75.5 \pm 0.7$ & $\boldsymbol{67.3 \pm 3.8}$ & $24.7 \pm 8.2$ & $73.5 \pm 0.6$ & $4.3 \pm 1.1$ \\
& \texttt{SPECTRE}+ACC & $76.0 \pm 0.3$ & $66.1 \pm 2.2$ & $\underline{21.9 \pm 10.2}$ & $73.6 \pm 0.4$ & $5.0 \pm 0.5$ \\
\midrule
S in & MMPF & $72.8 \pm 0.5$ & $31.3 \pm 22.6$ & $56.6 \pm 29.0$ & $70.6 \pm 0.5$ & $4.5 \pm 0.6$ \\
train  & GDRO & $73.7 \pm 0.3$ & $21.0 \pm 18.5$ & $63.3 \pm 23.1$ & $71.5 \pm 0.2$ & $4.6 \pm 0.3$ \\
\& val & MMPFrel & $71.4 \pm 1.3$ & $31.9 \pm 23.5$ & $54.3 \pm 31.6$ & $68.8 \pm 1.5$ & $5.6 \pm 1.0$ \\
\end{tabular}}
\end{sc}
\end{small}
\end{center}
\end{table*}

\begin{table*}[ht!]
\caption{ Average ($\pm$ std) results on the ACSEmployment dataset for state MI, with 8 sensitive groups and the smallest group containing 21 samples. Among the methods that are blind to demographics, the best result for each metric (column) is emphasized in \textbf{bold}, and the second-best result is \underline{underlined}. }
\label{tab:results3}
\begin{center}
\begin{small}
\begin{sc}
\scalebox{0.9}{\begin{tabular}{clccccc}
\toprule
& & av.acc  & w.acc  (r) & max$\Delta$acc(r) & w.acc(g)  & max$\Delta$acc(g)  \\
& Method & $(\uparrow)$ & $(\uparrow)$& $(\downarrow)$ & $(\uparrow)$ &  $(\downarrow)$  \\
\midrule
Base- & LR & $77.5 \pm 0.1$ & $61.0 \pm 2.1$ & $17.8 \pm 2.1$ & $75.7 \pm 0.2$ & $3.7 \pm 0.3$ \\
lines & XGB & $\boldsymbol{79.5 \pm 0.2}$ & $65.6 \pm 5.7$ & $18.3 \pm 9.8$ & $\boldsymbol{77.6 \pm 0.3}$ & $3.8 \pm 0.3$ \\
\midrule
& RLM & $50.0 \pm 5.0$ & $38.5 \pm 5.7$ & $26.4 \pm 8.8$ & $48.5 \pm 6.1$ & $3.0 \pm 2.5$ \\
& ARL & $76.0 \pm 1.0$ & $51.7 \pm 8.1$ & $25.8 \pm 7.5$ & $74.4 \pm 1.1$ & $3.1 \pm 0.5$ \\
 & BPF & $70.2 \pm 0.3$ & $34.7 \pm 12.4$ & $39.6 \pm 11.3$ & $67.7 \pm 0.4$ & $5.1 \pm 0.2$ \\
 w/o & SURE & $\underline{78.6 \pm 0.2}$ & $41.5 \pm 17.9$ & $37.7 \pm 17.8$ & $76.5 \pm 0.2$ & $4.1 \pm 0.3$ \\
 Demo- & FairEns & $46.1 \pm 1.9$ & $36.0 \pm 8.4$ & $32.8 \pm 11.5$ & $43.3 \pm 2.2$ & $5.5 \pm 1.2$ \\
 graphics & \texttt{SPECTRE}+WCE & $77.7 \pm 0.4$ & $\boldsymbol{66.2 \pm 5.7}$ & $22.4 \pm 5.2$ & $76.4 \pm 0.4$ & $\boldsymbol{2.5 \pm 0.2}$ \\
 & \texttt{SPECTRE}+WCE+T+A & $77.8 \pm 0.3$ & $65.1 \pm 5.0$ & $\underline{14.4 \pm 5.0}$ & $\underline{76.5 \pm 0.3}$ & $\underline{2.6 \pm 0.4}$ \\
& \texttt{SPECTRE}+TOPN+WCE & $77.7 \pm 0.4$ & $\underline{66.1 \pm 5.6}$ & $\boldsymbol{14.3 \pm 4.9}$ & $76.4 \pm 0.2$ & $2.7 \pm 0.4$ \\
& \texttt{SPECTRE}+ACC & $77.8 \pm 0.3$ & $64.5 \pm 5.5$ & $16.4 \pm 5.4$ & $76.4 \pm 0.3$ & $3.0 \pm 0.1$ \\
\midrule
S in & MMPF & $73.5 \pm 0.6$ & $34.0 \pm 16.8$ & $40.8 \pm 17.2$ & $71.7 \pm 0.6$ & $3.6 \pm 0.6$ \\
train & GDRO & $74.0 \pm 0.5$ & $49.1 \pm 16.0$ & $29.0 \pm 15.5$ & $72.6 \pm 0.5$ & $3.0 \pm 0.5$ \\
\& val & MMPFrel & $70.1 \pm 1.4$ & $52.1 \pm 16.3$ & $23.9 \pm 15.7$ & $67.5 \pm 1.5$ & $5.2 \pm 0.4$ \\
\end{tabular}}
\end{sc}
\end{small}
\end{center}
\end{table*}

\begin{table*}[ht!]
\caption{ Average ($\pm$ std) results on the ACSEmployment dataset for state OR, with 9 sensitive groups and the smallest group containing 21 samples. Among the methods that are blind to demographics, the best result for each metric (column) is emphasized in \textbf{bold}, and the second-best result is \underline{underlined}. }
\label{tab:results4}
\begin{center}
\begin{small}
\begin{sc}
\scalebox{0.9}{\begin{tabular}{clccccc}
\toprule
& & av.acc  & w.acc  (r) & max$\Delta$acc(r) & w.acc(g)  & max$\Delta$acc(g)  \\
& Method & $(\uparrow)$ & $(\uparrow)$& $(\downarrow)$ & $(\uparrow)$ &  $(\downarrow)$  \\
\midrule
Base- & LR & $77.3 \pm 0.2$ & $67.8 \pm 3.4$ & $17.6 \pm 6.0$ & $74.7 \pm 0.3$ & $5.3 \pm 0.7$ \\
lines & XGB & $\boldsymbol{78.9 \pm 0.3}$ & $63.2 \pm 6.5$ & $20.1 \pm 5.8$ & $\boldsymbol{76.3 \pm 0.4}$ & $5.4 \pm 0.6$ \\
\midrule
& RLM & $47.2 \pm 3.3$ & $36.6 \pm 11.0$ & $32.8 \pm 9.6$ & $45.8 \pm 3.0$ & $\boldsymbol{2.8 \pm 2.0}$ \\
& ARL & $76.5 \pm 0.4$ & $60.4 \pm 6.1$ & $25.1 \pm 7.8$ & $74.5 \pm 0.3$ & $4.1 \pm 0.8$ \\
& BPF & $69.3 \pm 0.3$ & $50.8 \pm 7.1$ & $33.3 \pm 7.4$ & $67.6 \pm 0.1$ & $\underline{3.5 \pm 0.7}$ \\
w/o & SURE & $77.5 \pm 0.3$ & $62.0 \pm 5.5$ & $21.3 \pm 4.1$ & $75.1 \pm 0.3$ & $4.9 \pm 0.5$ \\
Demo- & FairEns & $48.8 \pm 6.4$ & $26.2 \pm 7.3$ & $37.2 \pm 7.3$ & $46.2 \pm 6.1$ & $5.3 \pm 2.4$ \\
graphics & \texttt{SPECTRE}+WCE & $\underline{77.7 \pm 0.2}$ & $\underline{69.4 \pm 3.8}$ & $15.3 \pm 8.0$ & $\underline{75.9 \pm 0.2}$ & $3.8 \pm 0.6$ \\
 & \texttt{SPECTRE}+WCE+T+A & $77.6 \pm 0.3$ & $69.3 \pm 3.7$ & $14.8 \pm 4.8$ & $75.7 \pm 0.4$ & $3.8 \pm 0.3$ \\
& \texttt{SPECTRE}+TOPN+WCE & $77.6 \pm 0.3$ & $69.3 \pm 3.8$ & $\underline{13.9 \pm 4.5}$ & $75.7 \pm 0.6$ & $4.0 \pm 0.7$ \\
& \texttt{SPECTRE}+ACC & $77.5 \pm 0.3$ & $\boldsymbol{69.9 \pm 2.7}$ & $\boldsymbol{12.3 \pm 4.5}$ & $75.6 \pm 0.4$ & $3.8 \pm 0.5$ \\
\midrule
S in & MMPF & $74.1 \pm 0.3$ & $53.4 \pm 11.5$ & $28.9 \pm 13.0$ & $71.8 \pm 0.5$ & $4.8 \pm 0.6$ \\
train & GDRO & $74.3 \pm 0.4$ & $58.0 \pm 13.3$ & $31.6 \pm 10.2$ & $72.3 \pm 0.8$ & $4.0 \pm 1.0$ \\
\& val & MMPFrel & $70.3 \pm 1.9$ & $62.4 \pm 4.2$ & $21.9 \pm 6.5$ & $67.3 \pm 2.2$ & $6.0 \pm 1.1$ \\
\end{tabular}}
\end{sc}
\end{small}
\end{center}
\end{table*}


\begin{table*}[ht!]
\caption{ Average ($\pm$ std) results on the ACSIncome dataset for state MT, with 8 sensitive groups and the smallest group containing 3 samples. Among the methods that are blind to demographics, the best result for each metric (column) is emphasized in \textbf{bold}, and the second-best result is \underline{underlined}. }
\label{tab:results5}
\begin{center}
\begin{small}
\begin{sc}
\scalebox{0.9}{\begin{tabular}{clccccc}
\toprule
& & av.acc  & w.acc  (r) & max$\Delta$acc(r) & w.acc(g)  & max$\Delta$acc(g)  \\
& Method & $(\uparrow)$ & $(\uparrow)$& $(\downarrow)$ & $(\uparrow)$ &  $(\downarrow)$  \\
\midrule
Base- & LR & $\underline{76.5 \pm 0.9}$ & $51.0 \pm 12.1$ & $49.0 \pm 12.1$ & $72.1 \pm 0.7$ & $9.6 \pm 1.1$ \\
lines & XGB & $\boldsymbol{77.8 \pm 0.7}$ & $15.2 \pm 30.4$ & $84.8 \pm 30.4$ & $\underline{73.4 \pm 1.3}$ & $9.5 \pm 1.6$ \\
\midrule
& RLM & $54.2 \pm 3.5$ & $7.9 \pm 10.2$ & $87.1 \pm 11.2$ & $52.6 \pm 2.8$ & $\boldsymbol{3.6 \pm 2.4}$ \\
& ARL & $76.2 \pm 0.8$ & $60.6 \pm 10.2$ & $39.4 \pm 10.2$ & $71.4 \pm 0.4$ & $10.2 \pm 1.1$ \\
& BPF & $63.1 \pm 4.1$ & $16.7 \pm 21.1$ & $79.7 \pm 19.4$ & $59.6 \pm 2.8$ & $7.4 \pm 3.6$ \\
w/o & SURE & $76.4 \pm 1.4$ & $48.4 \pm 26.1$ & $51.6 \pm 26.1$ & $\boldsymbol{73.6 \pm 1.4}$ & $\underline{5.7 \pm 1.8}$ \\
Demo- & FAIREns & $51.0 \pm 16.4$ & $32.0 \pm 28.6$ & $53.0 \pm 16.0$ & $46.3 \pm 16.9$ & $9.3 \pm 5.3$ \\
graphics & \texttt{SPECTRE}+WCE & $73.0 \pm 0.8$ & $\boldsymbol{62.5 \pm 10.5}$ & $\boldsymbol{34.3 \pm 13.3}$ & $67.7 \pm 1.1$ & $11.4 \pm 0.9$ \\
graphics & \texttt{SPECTRE}+WCE+T+A & $72.8 \pm 0.7$ & $\underline{62.3 \pm 10.3}$ & $37.7 \pm 10.3$ & $67.2 \pm 1.0$ & $11.9 \pm 0.9$ \\
& \texttt{SPECTRE}+TOPN+WCE & $73.0 \pm 0.7$ & $\boldsymbol{62.5 \pm 10.5}$ & $34.6 \pm 12.9$ & $67.7 \pm 1.0$ & $11.4 \pm 0.8$ \\
& \texttt{SPECTRE}+ACC & $72.9 \pm 0.7$ & $\boldsymbol{62.5 \pm 10.4}$ & $\underline{34.4 \pm 13.2}$ & $67.7 \pm 1.0$ & $11.3 \pm 0.8$ \\
\midrule
S in & MMPF & $73.8 \pm 0.6$ & $44.5 \pm 16.7$ & $55.5 \pm 16.7$ & $68.6 \pm 1.2$ & $11.2 \pm 1.9$ \\
train & GDRO & $75.9 \pm 0.2$ & $53.4 \pm 28.2$ & $46.6 \pm 28.2$ & $71.2 \pm 0.6$ & $10.1 \pm 1.5$ \\
\& val & MMPFrel & $72.7 \pm 1.4$ & $57.1 \pm 10.9$ & $42.9 \pm 10.9$ & $66.7 \pm 2.0$ & $12.9 \pm 1.9$ \\
\end{tabular}}
\end{sc}
\end{small}
\end{center}
\end{table*}

\begin{table*}[ht!]
\caption{ Average ($\pm$ std) results on the ACSIncome dataset for state RI, with 8 sensitive groups and the smallest group containing 1 sample. Among the methods that are blind to demographics, the best result for each metric (column) is emphasized in \textbf{bold}, and the second-best result is \underline{underlined}. }
\label{tab:results6}
\begin{center}
\begin{small}
\begin{sc}
\scalebox{0.9}{\begin{tabular}{clccccc}
\toprule
& & av.acc  & w.acc  (r) & max$\Delta$acc(r) & w.acc(g)  & max$\Delta$acc(g)  \\
& Method & $(\uparrow)$ & $(\uparrow)$& $(\downarrow)$ & $(\uparrow)$ &  $(\downarrow)$  \\
\midrule
Base- & LR & $\underline{78.7 \pm 0.6}$ & $61.9 \pm 18.7$ & $38.1 \pm 18.7$ & $\underline{77.3 \pm 0.9}$ & $2.8 \pm 1.9$ \\
lines & XGB & $\boldsymbol{79.2 \pm 0.3}$ & $59.1 \pm 29.7$ & $40.9 \pm 29.7$ & $\boldsymbol{78.3 \pm 0.1}$ & $1.7 \pm 0.7$ \\
\midrule
& RLM & $51.5 \pm 2.0$ & $6.7 \pm 13.3$ & $79.2 \pm 17.2$ & $49.5 \pm 2.8$ & $4.0 \pm 2.9$ \\
& ARL & $78.5 \pm 0.4$ & $66.3 \pm 8.8$ & $33.7 \pm 8.8$ & $76.9 \pm 0.7$ & $3.1 \pm 0.9$ \\
 & BPF & $71.4 \pm 0.7$ & $50.7 \pm 14.4$ & $45.6 \pm 16.0$ & $70.7 \pm 0.8$ & $\underline{1.4 \pm 0.6}$ \\
 w/o & SURE & $76.4 \pm 0.5$ & $62.0 \pm 18.7$ & $38.0 \pm 18.7$ & $75.3 \pm 1.1$ & $2.1 \pm 1.4$ \\
 Demo- & FairEns & $53.7 \pm 4.3$ & $43.8 \pm 11.3$ & $50.5 \pm 11.9$ & $48.6 \pm 2.6$ & $10.0 \pm 4.6$ \\
graphics & \texttt{SPECTRE}+WCE & $75.5 \pm 0.6$ & $\underline{72.2 \pm 3.1}$ & $\underline{27.8 \pm 3.1}$ & $74.5 \pm 0.6$ & $1.9 \pm 1.1$ \\
 & \texttt{SPECTRE}+WCE+T+A & $76.0 \pm 1.4$ & $70.8 \pm 3.2$ & $29.2 \pm 3.2$ & $75.1 \pm 1.3$ & $1.8 \pm 0.9$ \\
& \texttt{SPECTRE}+TOPN+WCE & $75.7 \pm 2.0$ & $71.6 \pm 4.1$ & $28.4 \pm 4.1$ & $75.3 \pm 2.1$ & $\boldsymbol{0.8 \pm 0.9}$ \\
& \texttt{SPECTRE}+ACC & $76.8 \pm 0.5$ & $\boldsymbol{73.3 \pm 4.0}$ & $\boldsymbol{26.7 \pm 4.0}$ & $75.9 \pm 0.7$ & $2.0 \pm 1.5$ \\
\midrule
S in & MMPF & $76.0 \pm 0.9$ & $34.8 \pm 20.6$ & $60.8 \pm 16.5$ & $75.3 \pm 0.8$ & $1.4 \pm 0.9$ \\
train & GDRO & $78.5 \pm 0.6$ & $59.4 \pm 19.7$ & $40.6 \pm 19.7$ & $77.0 \pm 0.5$ & $3.0 \pm 1.2$ \\
\& val & MMPFrel & $76.4 \pm 0.7$ & $59.4 \pm 16.8$ & $39.3 \pm 15.1$ & $75.7 \pm 0.7$ & $1.4 \pm 1.5$ \\
\end{tabular}}
\end{sc}
\end{small}
\end{center}
\end{table*}

\begin{table*}[ht!]
\caption{ Average ($\pm$ std) results on the ACSIncome dataset for state VT, with 7 sensitive groups and the smallest group containing 2 samples. Among the methods that are blind to demographics, the best result for each metric (column) is emphasized in \textbf{bold}, and the second-best result is \underline{underlined}. }
\label{tab:results7}
\begin{center}
\begin{small}
\begin{sc}
\scalebox{0.9}{\begin{tabular}{clccccc}
\toprule
& & av.acc  & w.acc  (r) & max$\Delta$acc(r) & w.acc(g)  & max$\Delta$acc(g)  \\
& Method & $(\uparrow)$ & $(\uparrow)$& $(\downarrow)$ & $(\uparrow)$ &  $(\downarrow)$  \\
\midrule
Base- & LR & $\underline{76.4 \pm 0.9}$ & $38.4 \pm 31.9$ & $61.6 \pm 31.9$ & $\underline{74.6 \pm 2.2}$ & $\boldsymbol{3.6 \pm 3.1}$ \\
lines & XGB & $\boldsymbol{77.7 \pm 1.0}$ & $41.9 \pm 34.3$ & $58.1 \pm 34.3$ & $\boldsymbol{76.0 \pm 1.1}$ & $\underline{3.6 \pm 1.6}$ \\
\midrule
& RLM & $55.7 \pm 2.5$ & $16.6 \pm 20.3$ & $73.9 \pm 15.9$ & $52.8 \pm 3.2$ & $5.9 \pm 2.1$ \\
& ARL & $75.6 \pm 0.7$ & $32.4 \pm 29.1$ & $66.3 \pm 29.3$ & $73.5 \pm 2.1$ & $4.2 \pm 3.1$ \\
& BPF & $63.0 \pm 2.2$ & $46.1 \pm 10.0$ & $45.7 \pm 13.2$ & $60.3 \pm 2.2$ & $5.5 \pm 2.8$ \\
w/o & SURE & $66.2 \pm 1.0$ & $52.4 \pm 10.3$ & $43.1 \pm 14.3$ & $61.4 \pm 2.7$ & $9.7 \pm 3.6$ \\
Demo- & FairEns & $61.4 \pm 4.0$ & $27.5 \pm 24.3$ & $54.5 \pm 23.4$ & $57.7 \pm 3.8$ & $7.6 \pm 3.6$ \\
graphics & \texttt{SPECTRE}+WCE & $74.8 \pm 1.1$ & $64.9 \pm 9.1$ & $35.1 \pm 9.1$ & $72.8 \pm 2.3$ & $4.1 \pm 2.9$ \\
 & \texttt{SPECTRE}+WCE+T+A & $74.9 \pm 1.1$ & $64.4 \pm 8.6$ & $\underline{34.3 \pm 9.3}$ & $72.4 \pm 2.1$ & $5.0 \pm 2.6$ \\
& \texttt{SPECTRE}+TOPN+WCE & $75.3 \pm 1.1$ & $60.5 \pm 16.3$ & $39.5 \pm 16.3$ & $73.2 \pm 1.1$ & $4.3 \pm 2.7$ \\
& \texttt{SPECTRE}+ACC & $75.8 \pm 1.4$ & $\boldsymbol{67.1 \pm 9.0}$ & $\boldsymbol{31.5 \pm 9.3}$ & $73.0 \pm 2.6$ & $5.6 \pm 3.5$ \\
\midrule
S in & MMPF & $73.5 \pm 1.6$ & $65.0 \pm 8.7$ & $31.8 \pm 9.4$ & $71.1 \pm 3.1$ & $4.9 \pm 3.5$ \\
train & GDRO & $75.6 \pm 0.6$ & $66.4 \pm 8.7$ & $33.6 \pm 8.7$ & $73.4 \pm 1.5$ & $4.7 \pm 2.1$ \\
\& val & MMPFrel & $74.2 \pm 3.0$ & $66.4 \pm 9.6$ & $31.3 \pm 9.1$ & $72.0 \pm 4.6$ & $4.6 \pm 3.7$ \\
\end{tabular}}
\end{sc}
\end{small}
\end{center}
\end{table*}


\begin{table*}[ht!]
\caption{ Average ($\pm$ std) results on the ACSEmployment dataset for state IA, with 16 sensitive groups and the smallest group containing 6 samples. Among the methods that are blind to demographics, the best result for each metric (column) is emphasized in \textbf{bold}, and the second-best result is \underline{underlined}. }
\label{tab:results8}
\begin{center}
\begin{small}
\begin{sc}
\scalebox{0.9}{\begin{tabular}{clccc}
\toprule
& & av. acc  & w acc & max $\Delta$ acc \\
& Method & $(\uparrow)$ & $(\uparrow)$& $(\downarrow)$  \\
\midrule
Base- & LR & $80.0 \pm 0.2$ & $41.4 \pm 22.0$ & $53.5 \pm 20.9$ \\
lines & XGB & $\boldsymbol{81.3 \pm 0.3}$ & $50.1 \pm 9.3$ & $\underline{47.9 \pm 6.0}$ \\
\midrule
& RLM & $53.7 \pm 6.3$ & $28.6 \pm 5.9$ & $58.8 \pm 11.6$ \\
& ARL & $79.3 \pm 0.9$ & $31.0 \pm 17.7$ & $65.0 \pm 23.9$ \\
 & BPF & $68.1 \pm 1.9$ & $34.6 \pm 5.8$ & $55.1 \pm 13.9$ \\
 w/o & SURE & $78.6 \pm 0.5$ & $25.7 \pm 23.2$ &  $74.3 \pm 23.2$ \\
Demo- & FairEns & $46.2 \pm 9.9$ & $18.6 \pm 15.4$ & $61.1 \pm 24.9$ \\
graphics & \texttt{SPECTRE}+WCE & $\underline{80.6 \pm 0.3}$ & $51.5 \pm 9.7$ & $48.5 \pm 9.7$ \\
&  \texttt{SPECTRE}+WCE+T+A & $80.5 \pm 0.3$ & $\underline{51.7 \pm 10.2}$ & $48.3 \pm 10.2$ \\
&  \texttt{SPECTRE}+TOPN+WCE & $80.4 \pm 0.3$ & $51.7 \pm 10.2$ & $48.3 \pm 10.2$ \\
&  \texttt{SPECTRE}+ACC & $80.3 \pm 0.4$ & $\boldsymbol{52.4 \pm 7.9}$ & $\boldsymbol{47.6 \pm 7.9}$ \\
\midrule
S in & MMPF & $72.2 \pm 0.5$ & $42.2 \pm 15.1$ & $53.4 \pm 18.6$ \\
train & GDRO & $76.6 \pm 0.4$ & $38.0 \pm 11.2$ & $59.3 \pm 12.8$ \\
\& val & MMPFrel & $72.0 \pm 0.5$ & $41.6 \pm 14.9$ & $54.0 \pm 18.2$ \\
\end{tabular}}
\end{sc}
\end{small}
\end{center}
\end{table*}


\begin{table*}[ht!]
\caption{ Average ($\pm$ std) results on the ACSIncome dataset for state AZ, with 18 sensitive groups and the smallest group containing 2 samples. Among the methods that are blind to demographics, the best result for each metric (column) is emphasized in \textbf{bold}, and the second-best result is \underline{underlined}. }
\label{tab:results9}
\begin{center}
\begin{small}
\begin{sc}
\scalebox{0.9}{\begin{tabular}{clccc}
\toprule
& & av. acc  & w acc & max $\Delta$ acc \\
& Method & $(\uparrow)$ & $(\uparrow)$& $(\downarrow)$  \\
\midrule
Base- & LR & $\underline{81.3 \pm 0.4}$ & $35.5 \pm 30.7$ & $62.1 \pm 33.2$ \\
lines & XGB & $\boldsymbol{81.6 \pm 0.2}$ & $41.2 \pm 35.2$ & $55.5 \pm 36.6$ \\
\midrule
& RLM & $47.1 \pm 3.5$ & $19.4 \pm 16.9$ & $71.3 \pm 12.3$ \\
& ARL & $81.1 \pm 0.5$ & $37.6 \pm 31.9$ & $60.0 \pm 32.7$ \\
 & BPF & $78.8 \pm 0.7$ & $42.2 \pm 21.3$ & $54.1 \pm 23.1$ \\
 w/o & SURE & $81.1 \pm 0.4$ & $34.7 \pm 29.6$ & $60.8 \pm 30.3$ \\
Demo- & FairEns & $46.5 \pm 10.5$ & $8.0 \pm 10.7$ & $62.2 \pm 12.8$ \\ 
graphics & \texttt{SPECTRE}+WCE & $76.9 \pm 0.4$ & $43.4 \pm 22.1$ & $50.0 \pm 25.6$ \\
& \texttt{SPECTRE}+WCE+T+A & $77.2 \pm 0.4$ & $\underline{62.2 \pm 10.3}$ & $\underline{35.4 \pm 11.1}$ \\
& \texttt{SPECTRE}+TOPN+WCE & $76.9 \pm 0.8$ & $51.4 \pm 26.8$ & $45.1 \pm 24.0$ \\
& \texttt{SPECTRE}+ACC & $77.2 \pm 0.1$ & $\boldsymbol{65.7 \pm 10.9}$ & $\boldsymbol{34.3 \pm 10.9}$ \\
\midrule
S in & MMPF & $75.9 \pm 0.3$ & $27.8 \pm 29.8$ & $68.2 \pm 30.7$ \\
train & GDRO & $79.7 \pm 0.3$ & $45.3 \pm 23.5$ & $51.3 \pm 26.2$ \\
\& val & MMPFrel & $75.4 \pm 0.2$ & $31.1 \pm 28.2$ & $65.5 \pm 29.4$ \\
\end{tabular}}
\end{sc}
\end{small}
\end{center}
\end{table*}

\begin{table*}[ht!]
\caption{ Average ($\pm$ std) results on the ACSIncome dataset for state RI, with 15 sensitive groups and the smallest group containing 1 sample. Among the methods that are blind to demographics, the best result for each metric (column) is emphasized in \textbf{bold}, and the second-best result is \underline{underlined}. }
\label{tab:results10}
\begin{center}
\begin{small}
\begin{sc}
\scalebox{0.9}{\begin{tabular}{clccc}
\toprule
& & av. acc  & w acc & max $\Delta$ acc \\
& Method & $(\uparrow)$ & $(\uparrow)$& $(\downarrow)$  \\
\midrule
Base- & LR & $\underline{78.7 \pm 0.6}$ & $57.0 \pm 16.3$ & $43.0 \pm 16.3$ \\
lines & XGB & $\boldsymbol{79.2 \pm 0.3}$ & $55.2 \pm 27.9$ & $44.8 \pm 27.9$ \\
\midrule
& RLM & $53.3 \pm 2.4$ & $14.7 \pm 18.6$ & $82.0 \pm 17.1$ \\
& ARL & $78.6 \pm 0.7$ & $50.5 \pm 17.7$ & $49.5 \pm 17.7$ \\
 & BPF & $71.4 \pm 0.7$ & $35.9 \pm 20.7$ & $61.8 \pm 22.6$ \\
 w/o & SURE & $75.9 \pm 1.2$ & $44.8 \pm 27.8$ & $55.2 \pm 27.8$ \\
Demo- & FairEns & $ 50.1 \pm 7.1$ & $18.6 \pm 22.8$ & $76.4 \pm 20.9$ \\
graphics & \texttt{SPECTRE}+WCE & $73.8 \pm 0.9$ & $65.3 \pm 4.0$ & $34.7 \pm 4.0$ \\
 & \texttt{SPECTRE}+WCE+T+A & $76.5 \pm 0.3$ & $\underline{66.6 \pm 4.8}$ & $\underline{33.4 \pm 4.8}$ \\
& \texttt{SPECTRE}+TOPN+WCE & $74.9 \pm 1.9$ & $\boldsymbol{66.8 \pm 6.1}$ & $\boldsymbol{33.2 \pm 6.1}$ \\
& \texttt{SPECTRE}+ACC & $75.4 \pm 0.9$ & $63.4 \pm 8.9$ & $36.6 \pm 8.9$ \\
\midrule
S in & MMPF & $77.4 \pm 0.8$ & $64.2 \pm 8.2$ & $34.8 \pm 8.3$ \\
train & GDRO & $78.2 \pm 0.5$ & $65.1 \pm 8.0$ & $34.9 \pm 8.0$ \\
\& val & MMPFrel & $76.7 \pm 0.8$ & $63.8 \pm 8.7$ & $35.1 \pm 8.2$ \\
\end{tabular}}
\end{sc}
\end{small}
\end{center}
\end{table*}

\clearpage

\subsection{Statistical Test}
\label{ap:statistical_test}

To gain further insight into the empirical results reported, we also conduct a statistical significance analysis. The primary objective is to determine whether the improvements observed in the worst-group accuracy metric, when comparing \texttt{SPECTRE} against SOTA methods that operate without demographics, are statistically significant across the considered ACS datasets and tasks. We follow a two-step, non-parametric methodology suitable for comparing multiple classifiers on multiple datasets. First, we apply the Friedman test \cite{friedman1937use} to test the null hypothesis that all methods perform equivalently. This determines if a statistically significant performance difference exists among the set of methods being compared. In case the Friedman test reveals there is sufficient evidence to reject the null hypothesis, we then perform the Nemenyi post-hoc test \cite{nemenyi1963distribution}. This test provides a critical difference (CD) value, allowing us to identify and study the statistical differences for all pairwise comparisons between methods. In both tests, we adopt a significance level of $\alpha = 0.05$.

\begin{table*}[ht]
\centering
\caption{Nemenyi post-hoc test p-values for worst-group accuracy ($\alpha = 0.05$).  Statistically significant differences (p$< 0.05$) are shown in \textbf{bold}.}
\label{tab:nemenyi_worst_group}
\begin{tabular}{@{\hspace{0pt}}lccccccc}
\toprule
Method  & LR & XGBoost & RLM & ARL & BPF & SURE & FAIREns \\
\midrule
\texttt{SPECTRE} & \textbf{0.005} & \textbf{0.012} & \textbf{0.000} & \textbf{0.000} & \textbf{0.000} & \textbf{0.000} & \textbf{0.000}  \\
LR & -- & 1.000 & \textbf{0.000} & 0.762 & \textbf{0.034} & 0.847 & \textbf{0.000}  \\
XGBoost & -- & -- & \textbf{0.000} & 0.608 & \textbf{0.016} & 0.713 & \textbf{0.000}  \\
RLM &  -- &  -- & -- & \textbf{0.000} & 0.084 & \textbf{0.000} & 1.000  \\
ARL &  -- &  -- &  -- & -- & 0.762 & 1.000 & \textbf{0.001}  \\
BPF &  -- &  -- &  -- &  -- & -- & 0.661 & 0.183  \\
SURE &  -- &  -- &  -- &  -- &  -- & -- & \textbf{0.001}  \\
\bottomrule
\end{tabular}
\end{table*}

\subsection{Additional fairness metrics}
\label{ap:other_fairness_metrics}

Although the primary objective of developing \texttt{SPECTRE} is to provide strong minimax fairness guarantees without relying on demographic information, we also evaluate its performance with respect to other fairness metrics—namely, equality of opportunity (EOp) and statistical parity (SP) (see Figure \ref{fig:other_metrics_results}). The results indicate that \texttt{SPECTRE} attains the second-lowest EOp and SP values (lower is better), closely following FairEns, among the baseline and fairness-enhancing methods that operate without sensitive information. However, it is important to note that FairEns exhibits an overall accuracy approximately 30\% lower than that of \texttt{SPECTRE}. This observation underscores a key limitation of parity-based fairness metrics: parity can sometimes be achieved through uniformly poor performance, meaning that low disparity does not necessarily imply equitable or effective outcomes. Therefore, it is essential to assess algorithmic performance from multiple perspectives and not rely solely on parity-based measures.
In general, the SP and EOp metrics exhibit higher variability because the studied methods were designed to optimize fairness in terms of worst-group accuracy. Since different fairness metrics can conflict, this trade-off is expected.

\begin{figure}[h!] 
    \centering
    \subfigure[]{
    \includegraphics[width=0.85\textwidth]{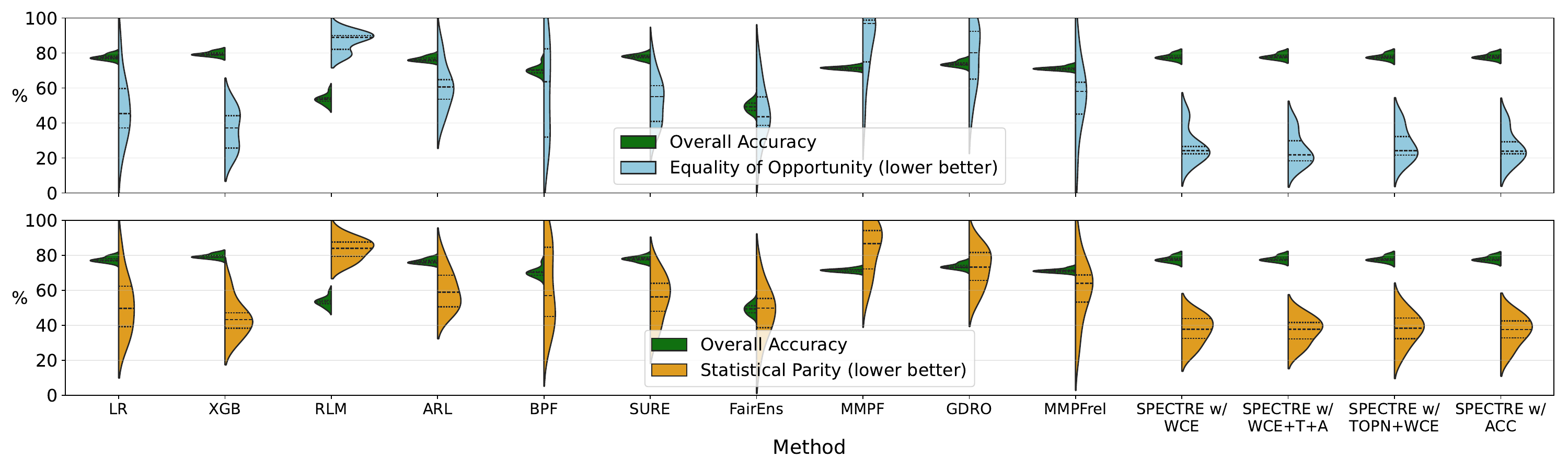}
    \label{fig:ACSEmp_race_other_metrics}
  }
  \hfill 
  \subfigure[]{
    \includegraphics[width=0.85\textwidth]{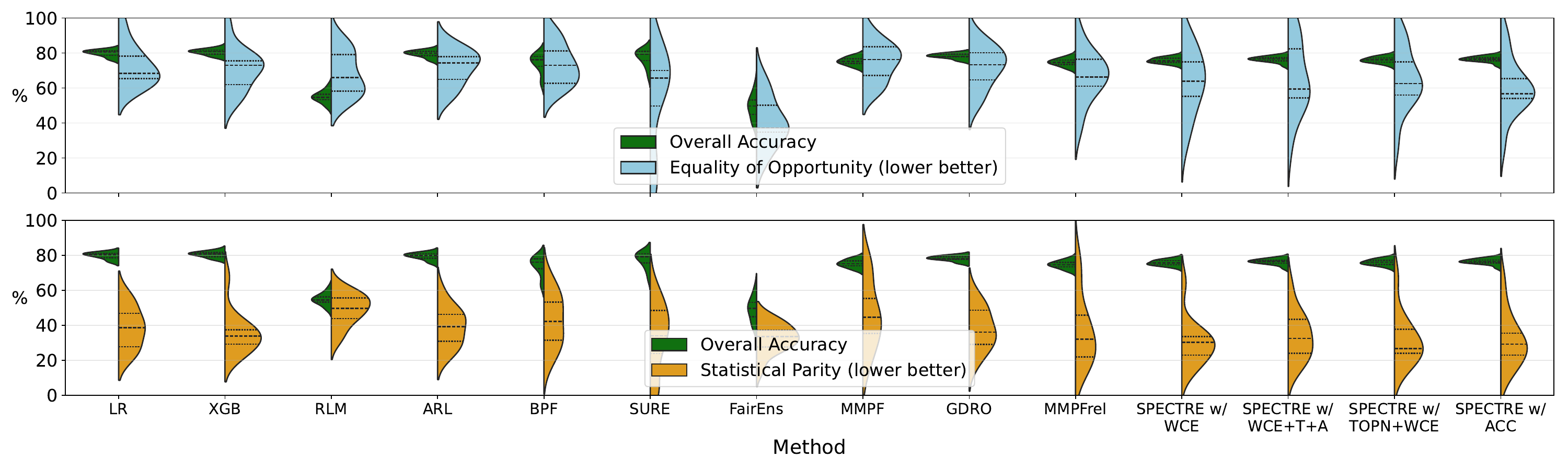}
    \label{fig:ACSInc_race_other_metrics}
  }
  \hfill 
  \subfigure[]{
    \includegraphics[width=0.85\textwidth]{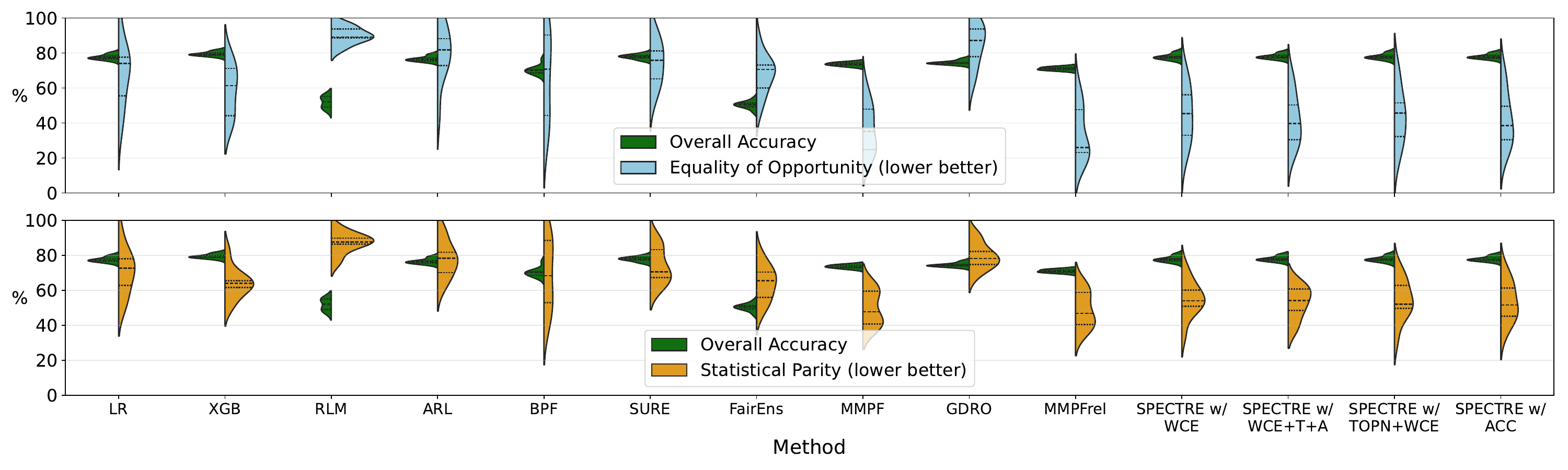}
    \label{fig:ACSEmp_int_other_metrics}
  }
  \hfill 
  \subfigure[]{
    \includegraphics[width=0.85\textwidth]{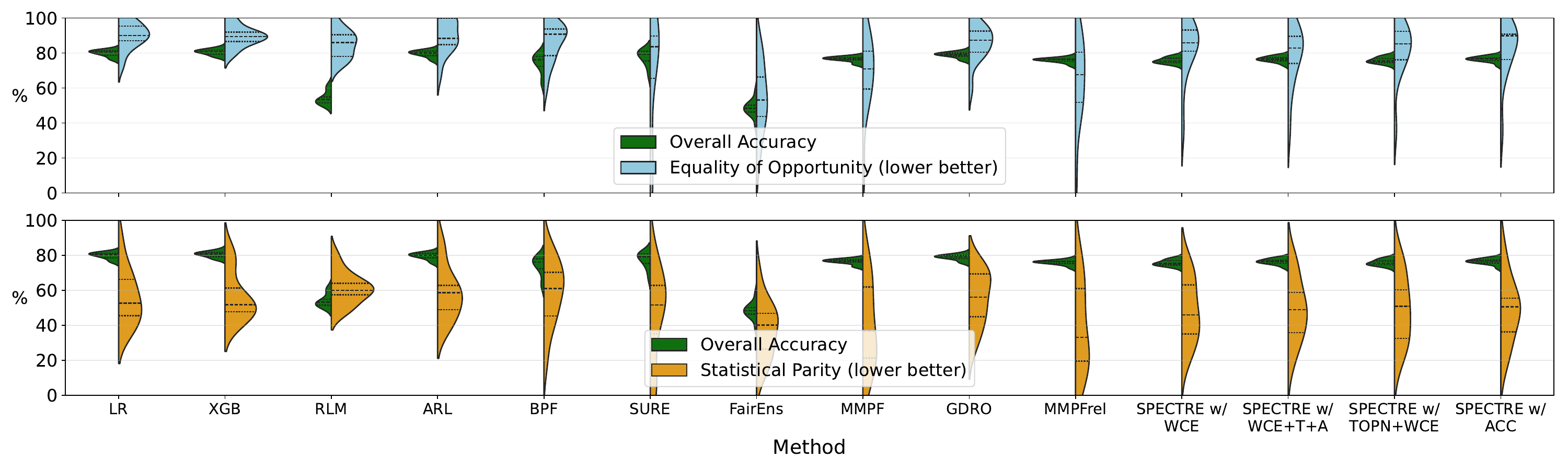}
    \label{fig:ACSInc_int_other_metrics}
  }
    \caption{The distributions of the average \color{OliveGreen} \textbf{overall accuracy} \color{black},  \color{SkyBlue} \textbf{equality of opportunity} \color{black} and \color{YellowOrange} \textbf{statistical parity} \color{black} of SOTA approaches and \texttt{SPECTRE} in (a,c) ACSEmployment and (b,d) ACSIncome datasets, across different collections of 20 randomly selected US states, with (a,b) \textit{race} as the sensitive attribute and (c,d) considering intersectional groups based on \textit{race} and \textit{gender}.
    }
    \label{fig:other_metrics_results}
\end{figure}

\clearpage

\subsection{The Effect of $\lambda_0$}
\label{ap:lambda_effect}

SPECTRE has two hyperparameters: $\sigma$ and $\lambda_0$. In Section~\ref{sec:sigma_toy}, we focused on the effect of $\sigma$, as it directly determines the frequency spectrum of the new representation and, consequently, the uncertainty set. To complement that analysis, we conduct additional experiments to study the impact of $\lambda_0$ on performance for both the toy dataset (Figure~\ref{fig:lambda_exp_toy}) and real-world datasets (Figure~\ref{fig:lambda_exp_ACS}).

Results on the toy dataset show that the influence of $\lambda_0$ diminishes as the quality of the representation (i.e., the choice of $\sigma$) improves. When $\sigma$ yields good performance, variations in $\lambda_0$ have little effect. However, when $\sigma$ is suboptimal, larger deviations from the empirical distribution (higher $\lambda_0$ values) can negatively impact performance.

This pattern is consistent in the real-world experiments (Figure~\ref{fig:lambda_exp_ACS}), where once $\sigma$ is optimally selected, the effect of $\lambda_0$ remains minor but non-negligible. Therefore, while $\sigma$ has a more substantial influence—justifying why we optimize it first in Algorithm~\ref{alg:spectre}—tuning $\lambda_0$ remains important to mitigate potential performance degradation, particularly when $\sigma$ is not ideally chosen.

\begin{figure}[h!] 
    \centering
    \subfigure[]{
    \includegraphics[width=0.6\textwidth]{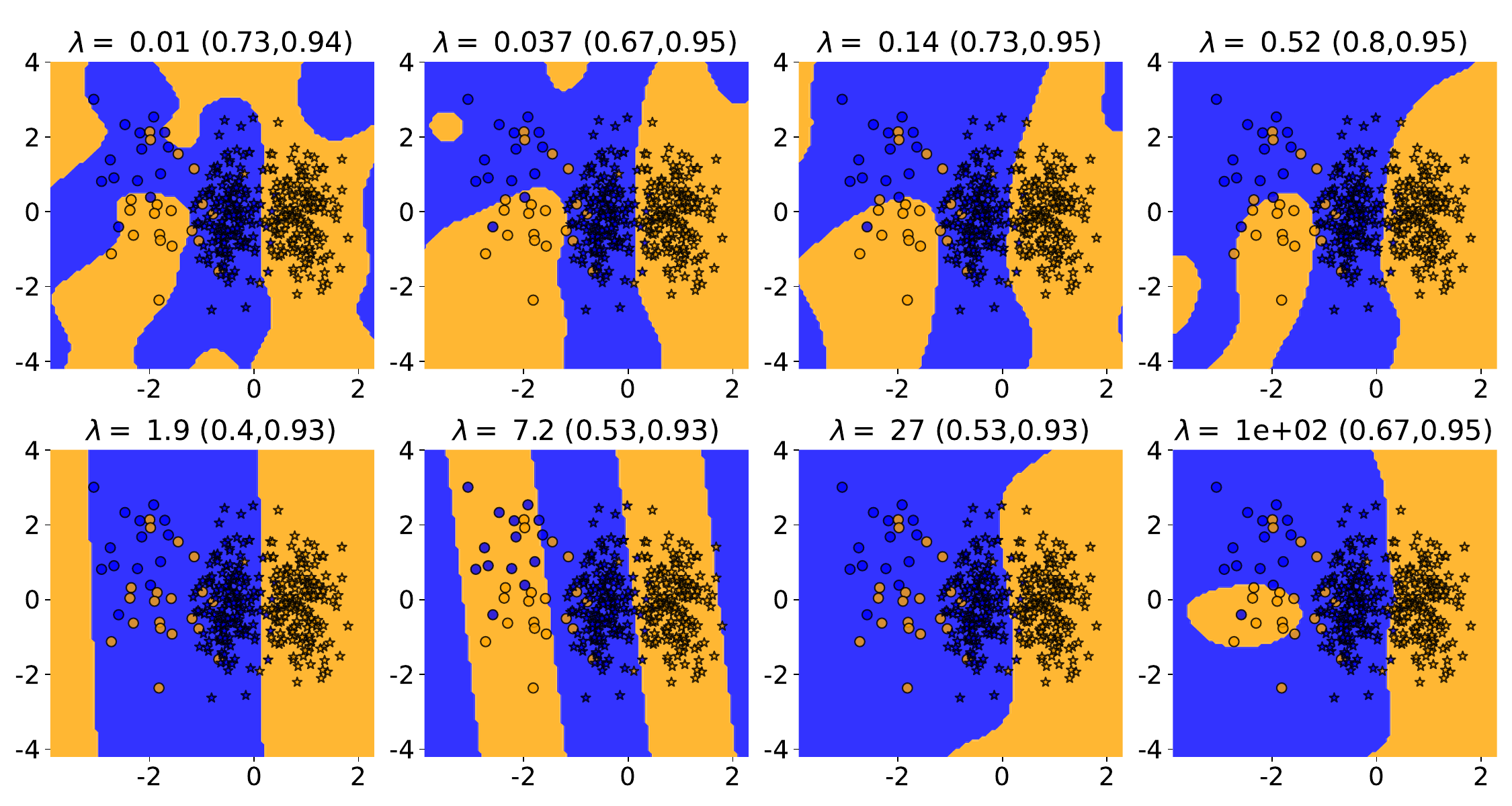}
    \label{fig:lambda_DB_toy}
  }
  \hfill 
  \subfigure[]{
    \includegraphics[width=0.45\textwidth]{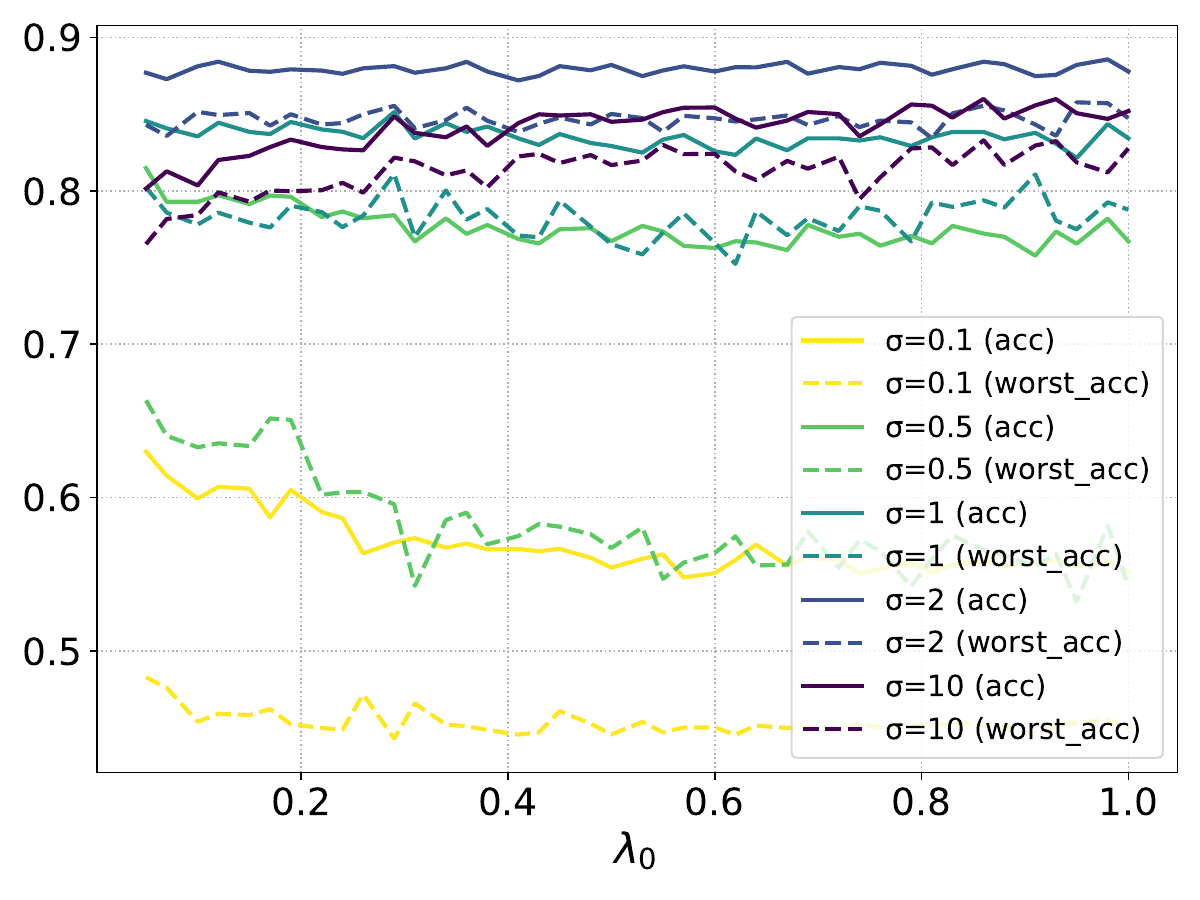}
    \label{fig:lambda_perf_toy}
  }
    \caption{(a) Decision boundary and (b) average performance (in terms of accuracy -acc- and minimax fairness -worst\_acc-) of the resulting classifier over 10 runs for different values of $\lambda_0$. In (a), we fix $\sigma = 1.0$, one of the optimal values, to illustrate the decision boundary. In (b), we show performance curves for multiple $\sigma$ values. 
    }
    \label{fig:lambda_exp_toy}
\end{figure}

\begin{figure}[!h]
\begin{center}
\centerline{\includegraphics[width=0.5\textwidth]{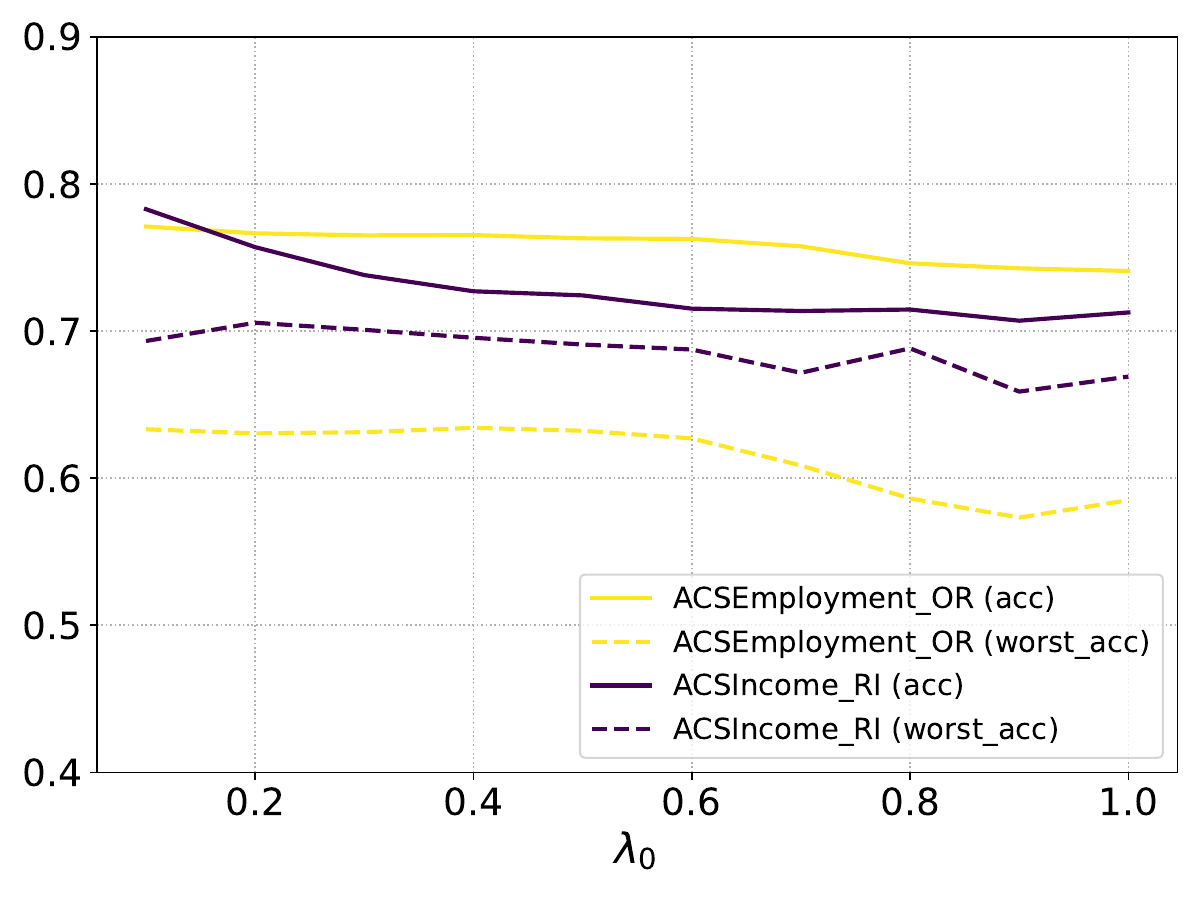}}
\caption{Average performance (in terms of acc and worst\_acc) of the resulting classifier across different values of $\lambda_0$, averaged over 10 repetitions, using the optimal $\sigma$ selected through hyperparameter tuning, for two different datasets.  }
\label{fig:lambda_exp_ACS}
\end{center}
\end{figure}

\section{Empirical Evaluation of the Out-of-Sample Guarantees}

\subsection{Real-World Datasets}
\label{ap:bounds_real}

In this section, we evaluate the bounds using real-world datasets (see Figure \ref{fig:bounds_acs}). Specifically, we employ the ACSIncome dataset, focusing on the states of Hawaii (HI) and Alaska (AK). To compute the bounds, we use 20\% of the data for each group while ensuring that each group contains at least 40 instances. That is, groups that do not retain at least 40 instances after the 20\% reduction are not taken into account in this analysis. As a result, in Alaska (AK), we analyze four racial groups:  
\begin{itemize}
    \item \( S = 1 \) with 424 instances  
    \item \( S = 4 \) with 155 instances  
    \item \( S = 6 \) with 40 instances  
    \item \( S = 9 \) with 48 instances  
\end{itemize}
For Hawaii (HI), three racial groups meet the criteria:  
\begin{itemize}
    \item \( S = 1 \) with 414 instances  
    \item \( S = 6 \) with 577 instances  
    \item \( S = 9 \) with 339 instances  
\end{itemize}

\begin{figure}[h!] 
    \centering
    \subfigure[]{
    \includegraphics[width=0.45\textwidth]{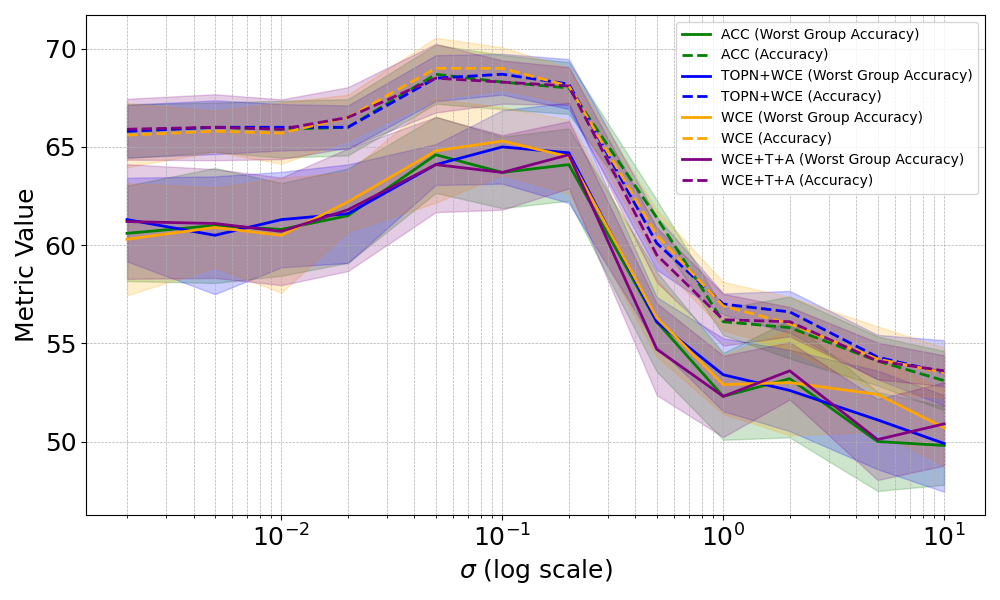}
    \label{fig:bounds_acs_s1}
  }
  \hfill 
  \subfigure[]{
    \includegraphics[width=0.49\textwidth]{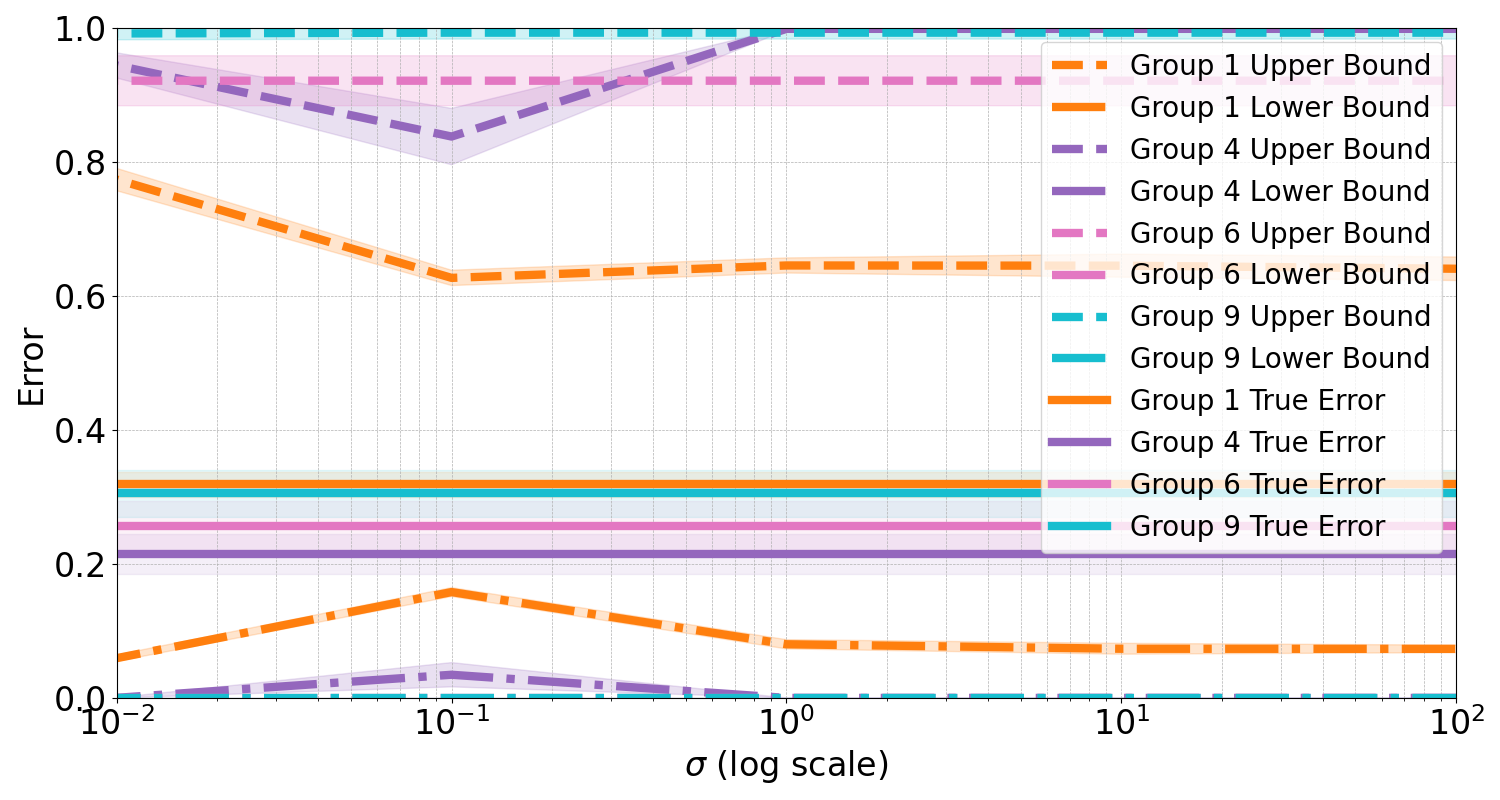}
    \label{fig:bounds_acs_s2}
  }
  \hfill 
  \subfigure[]{
    \includegraphics[width=0.45\textwidth]{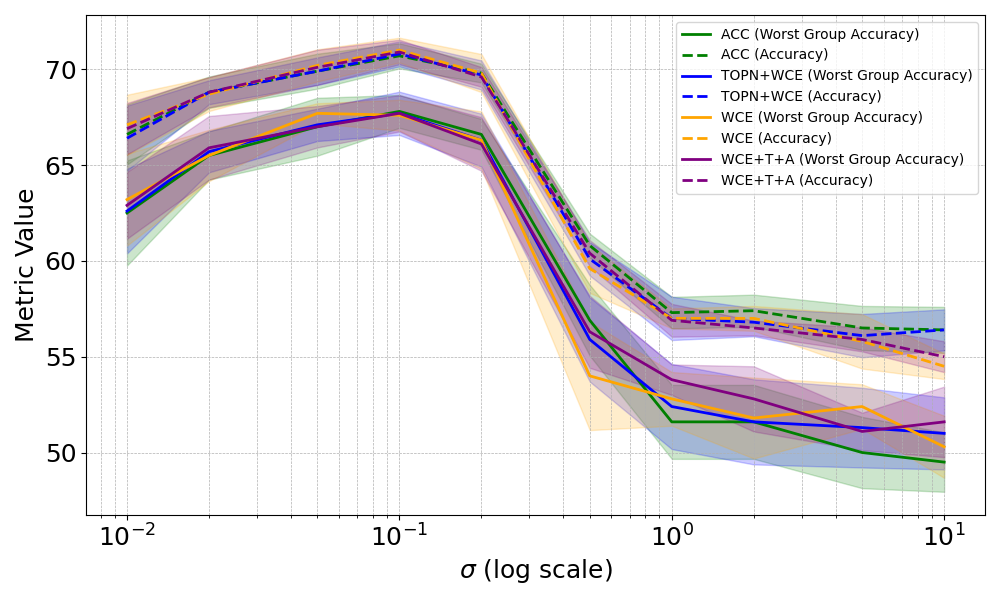}
    \label{fig:bounds_acs_s3}
  }
  \hfill 
  \subfigure[]{
    \includegraphics[width=0.49\textwidth]{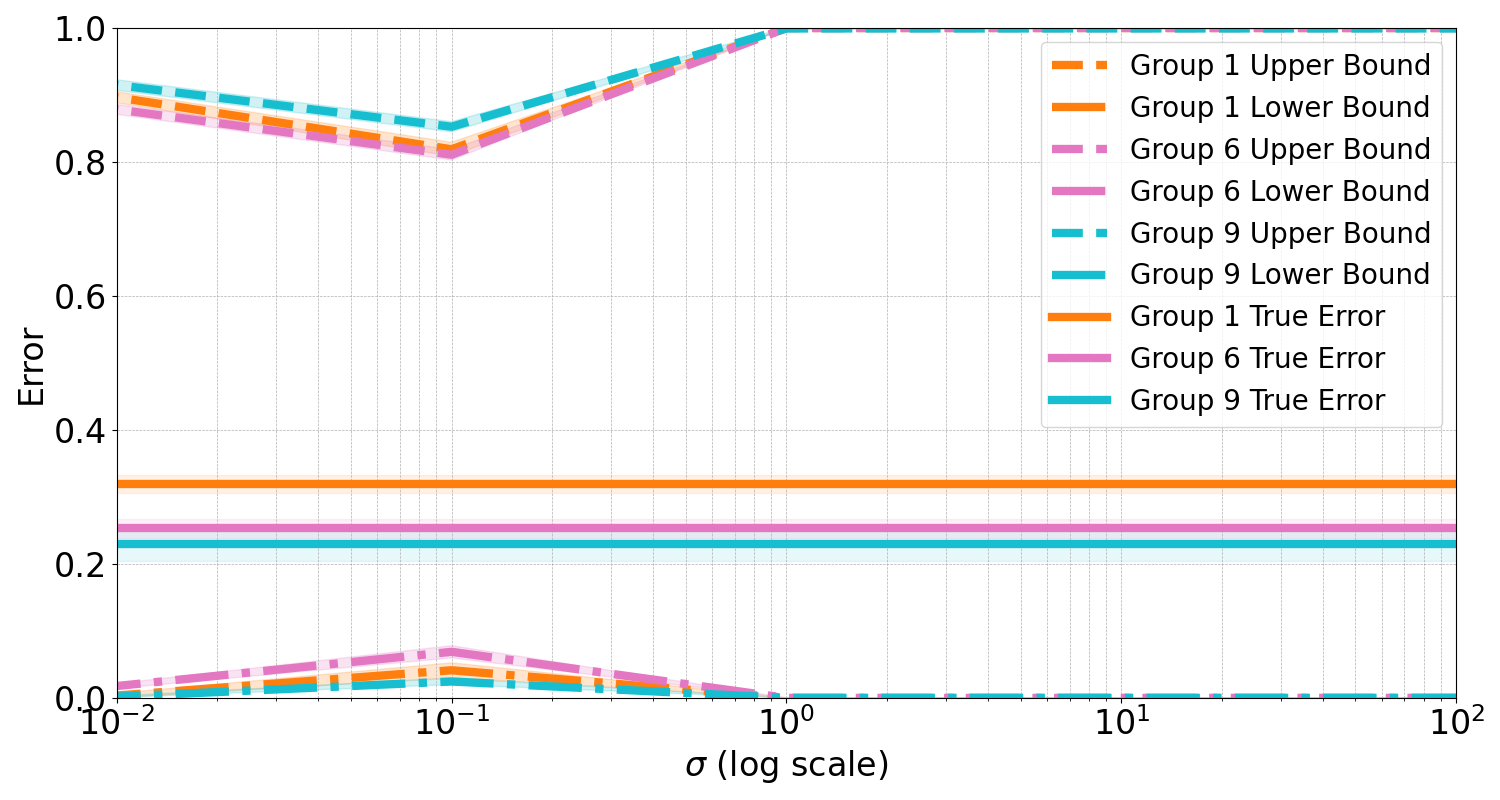}
    \label{fig:bounds_acs_s4}
  }
    \caption{ The overall and worst-group accuracy (left) and the performance bounds of \texttt{SPECTRE} on the different sensitive groups (right) for different states. There is a correspondence between the optimal $\sigma$ for \texttt{SPECTRE} and the $\sigma$ that provides the tightest bounds.
    }
    \label{fig:bounds_acs}
\end{figure}

On the one hand, these results further validates the conclusions drawn from the toy dataset in Section \ref{sec:experiments_bounds}: (a) when a small portion of the training data includes available sensitive information, it is possible to reliably estimate a feasible range for \texttt{SPECTRE}'s performance across different sensitive groups; and (b) the values that produce the tightest bounds closely align with the near-optimal $\sigma$ values for \texttt{SPECTRE}.

Furthermore, the results indicate that, within a given state and at the optimal $\sigma$ (the value that yields the tightest bounds), larger group sizes correspond to tighter bounds. In AK, Group 1 has the tightest bounds, followed by Group 4, whereas in HI, all groups exhibit similar bounds.  A key insight emerges when comparing Group 1 in AK to the groups in HI. Despite having a similar group size and true error, Group 1 in AK has significantly tighter bounds than any group in HI. This difference arises from the relative size of the group within the overall dataset. In AK, Group 1 constitutes a substantially larger proportion of the dataset compared to other groups. In contrast, in HI, no single group dominates—each of the three groups represents a similar proportion of the dataset.  The proportion of a group relative to the entire dataset plays a crucial role in determining the flexibility of worst-case distribution shifts. When a group makes up a significant majority (as Group 1 does in AK), modifying its distribution while still satisfying overall constraints is difficult. However, for smaller groups, substantial variations in their distributions have a limited impact on the overall dataset, as the majority groups can compensate to maintain the required constraints on the overall distribution.  In HI, since all three groups have comparable proportions, altering their distributions has a similar effect across the board, leading to similar bound tightness. Additionally, because their relative proportions within the full HI dataset are smaller than that of Group 1 in AK, more variations can be induced in their distributions while still adhering to the overall constraints. This explains why the groups in HI have looser bounds compared to Group 1 in AK.

\section{Using RFF with Existing Baseline and Blind Fairness-Enhancing Interventions}
\label{ap:RFF_sota}

In this section, we replicate the experiments from Section~\ref{sec:sigma_toy}, applying RFFs and hyperparameter tuning to other SOTA methods: LR, RLM, and ARL. For RLM and ARL, we consider both linear and non-linear versions. The results are presented in Figures~\ref{fig:toy_BD_LR}–\ref{fig:toy_BD_ARL} and they demonstrate that while employing spectrum adjustment and hyperparameter tuning with different strategies modifies the decision boundaries, it does not lead to a significant positive impact (in terms of minimax fairness) when applied to LR, RLM and ARL. This limitation arises primarily from their training framework. In LR, performance is not considerably improved because worst-case configurations are not considered during optimization. In RLM and ARL, their re-weighting nature, combined with the use of steep surrogate losses during training, significantly limits their effectiveness. As a result, spectrum adjustment does not meaningfully improve their robustness or reduce pessimism. This limitation is particularly evident as even substantial variations in spectrum adjustment result in minimal impact on classifier performance. 

\begin{figure}[h!] 
    \centering
    \subfigure[]{
    \includegraphics[width=0.25\textwidth]{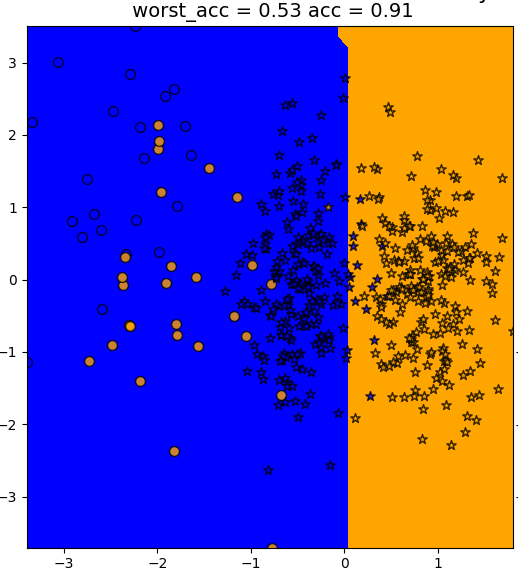}
    \label{fig:DB_LR_subplot1}
  }
  \hfill 
  \subfigure[]{
    \includegraphics[width=0.9\textwidth]{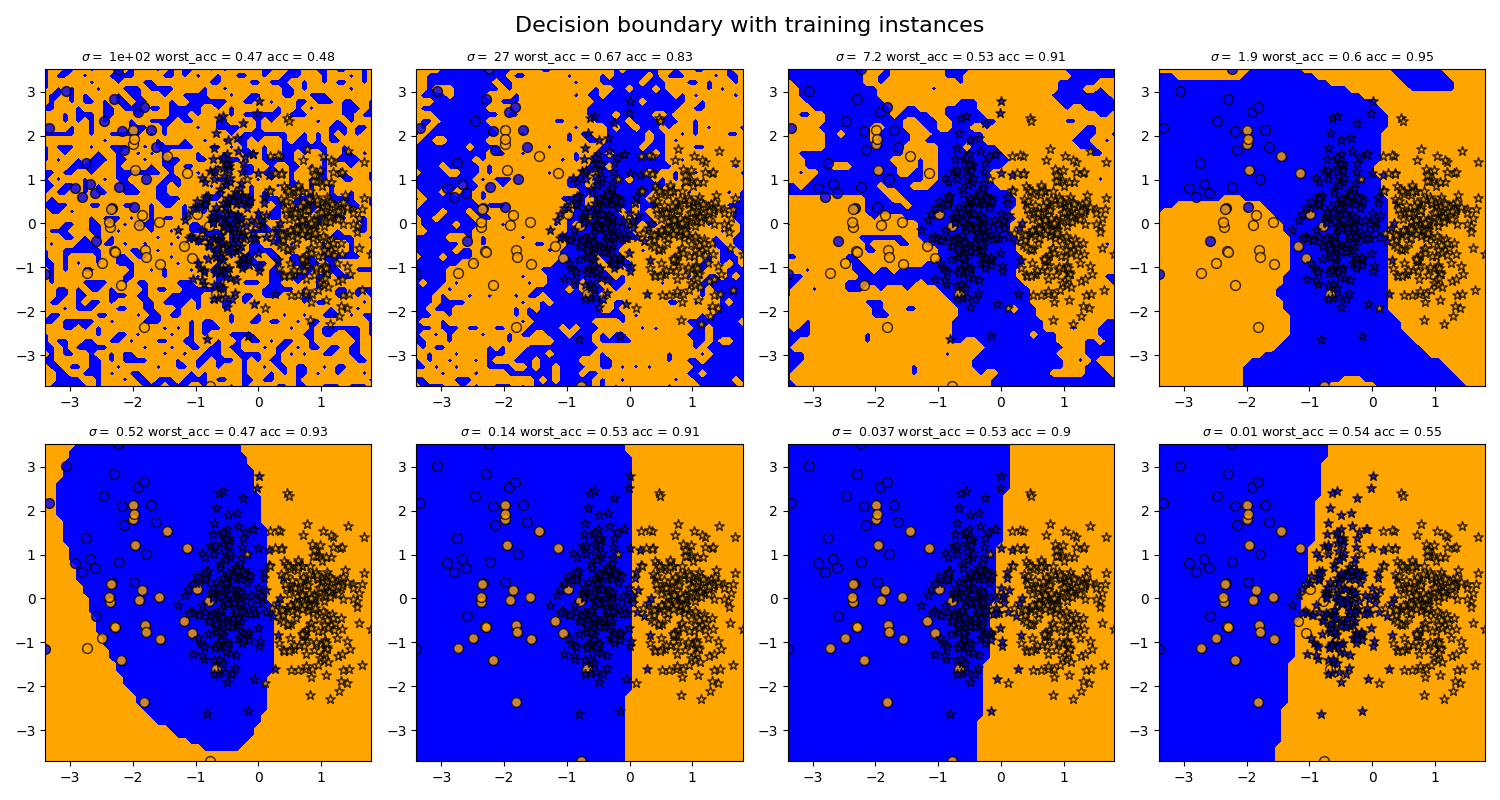}
    \label{fig:DB_LR_subplot2}
  }
  \hfill 
  \subfigure[]{
    \includegraphics[width=0.45\textwidth]{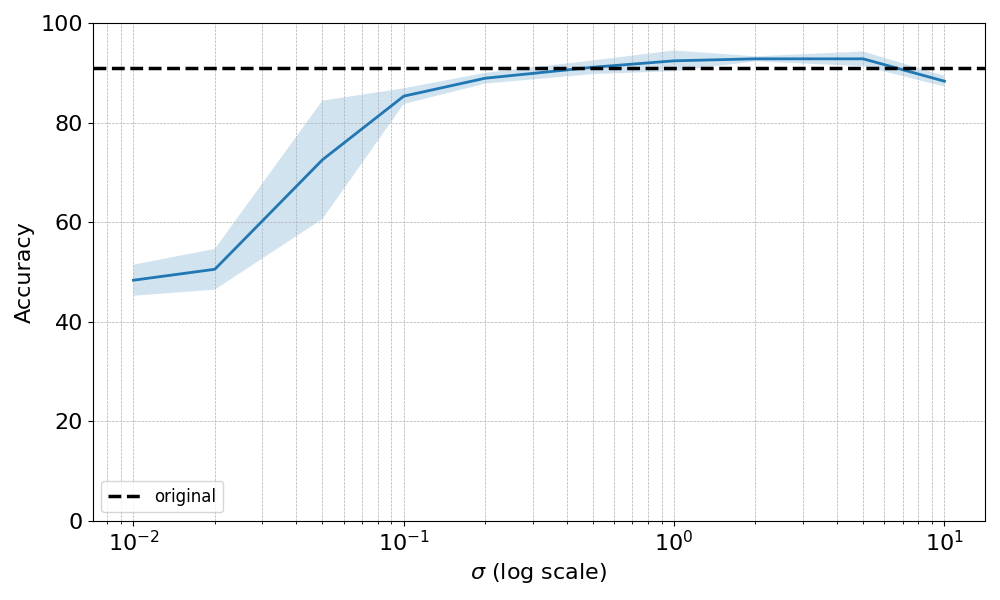}
    \label{fig:DB_LR_subplot3}
  }
  \hfill 
  \subfigure[]{
    \includegraphics[width=0.45\textwidth]{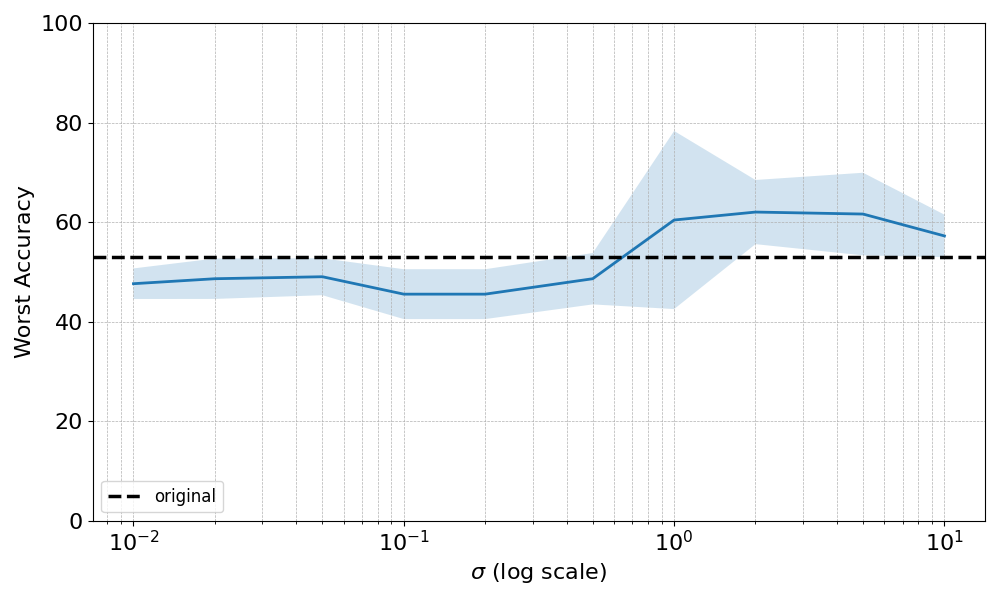}
    \label{fig:DB_LR_subplot4}
  }
    \caption{ Logistic Regression (LR). The (a) original decision boundary obtained with the LR method and the effect of the scaling parameter on (b) the decision boundaries (where training instances are also shown), (c) overall accuracy and (d) worst-group accuracy.
    }
    \label{fig:toy_BD_LR}
\end{figure}

\begin{figure}[h!] 
    \centering
    \subfigure[]{
    \includegraphics[width=0.25\textwidth]{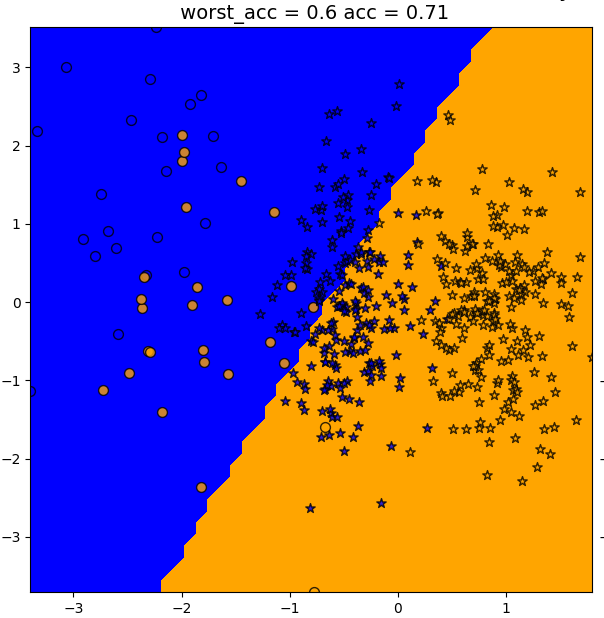}
    \label{fig:DB_linRLM_subplot1}
  }
  \hfill 
  \subfigure[]{
    \includegraphics[width=0.9\textwidth]{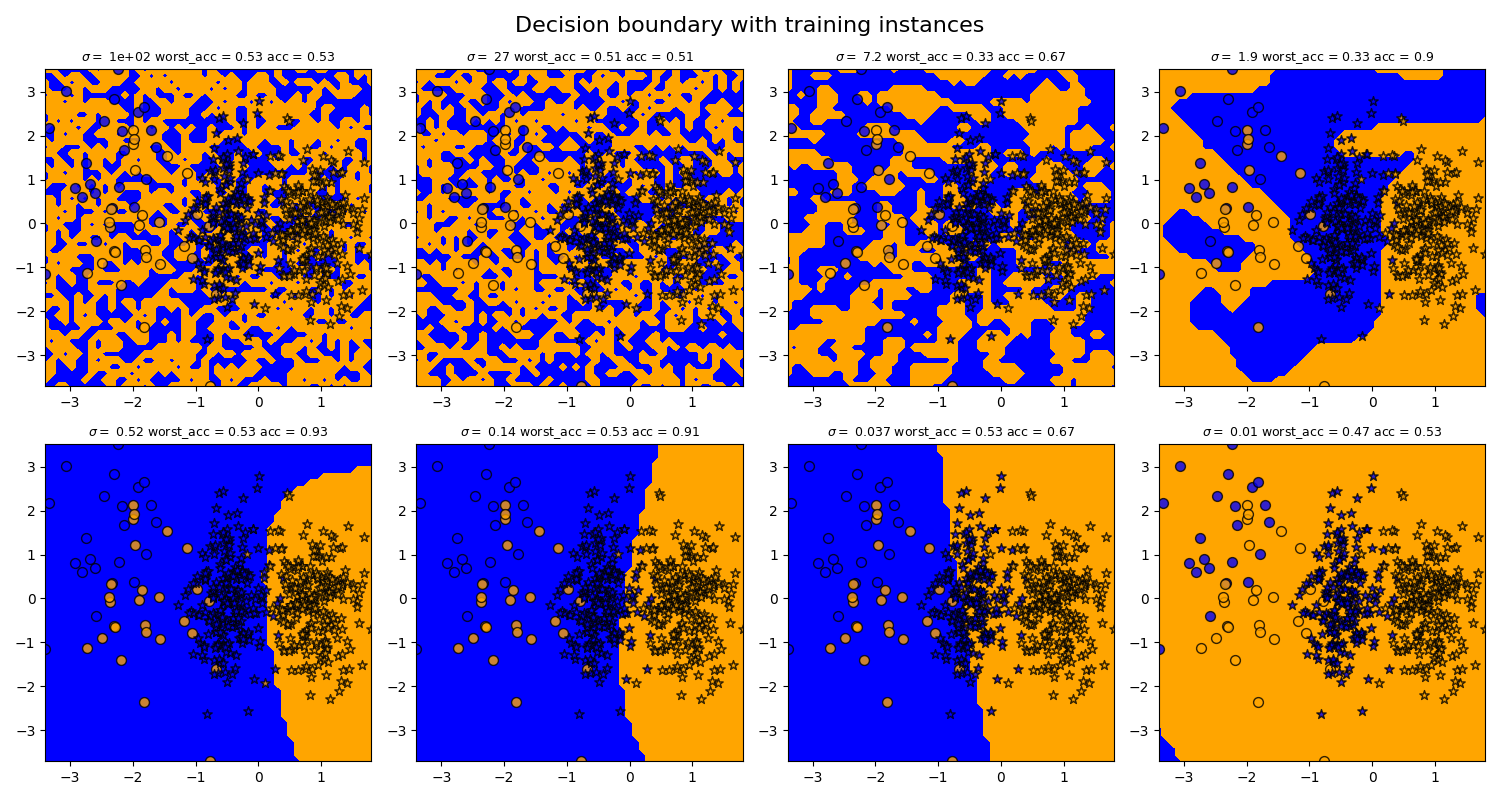}
    \label{fig:DB_linRLM_subplot2}
  }
  \hfill 
  \subfigure[]{
    \includegraphics[width=0.45\textwidth]{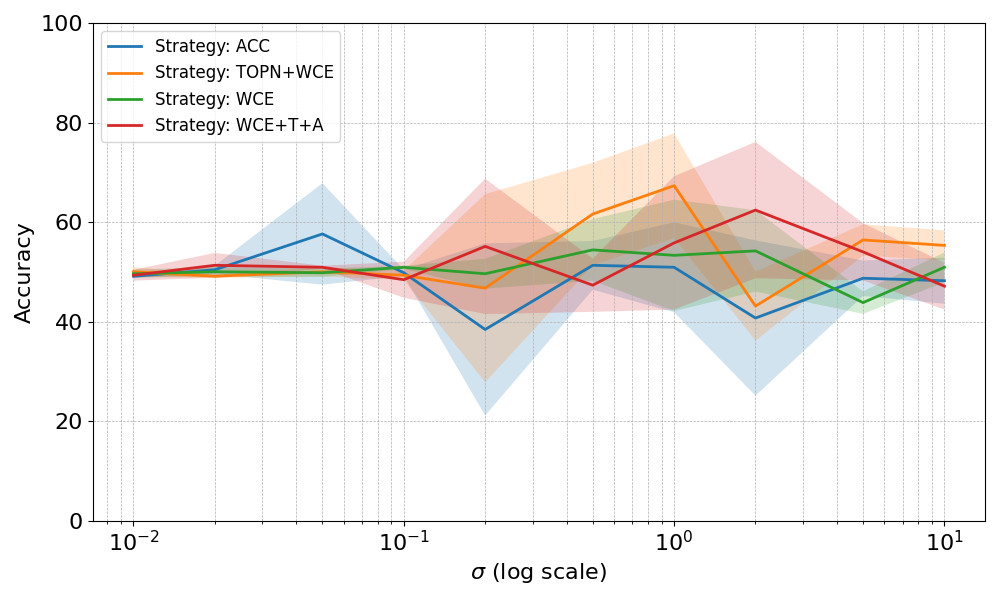}
    \label{fig:DB_linRLM_subplot3}
  }
  \hfill 
  \subfigure[]{
    \includegraphics[width=0.45\textwidth]{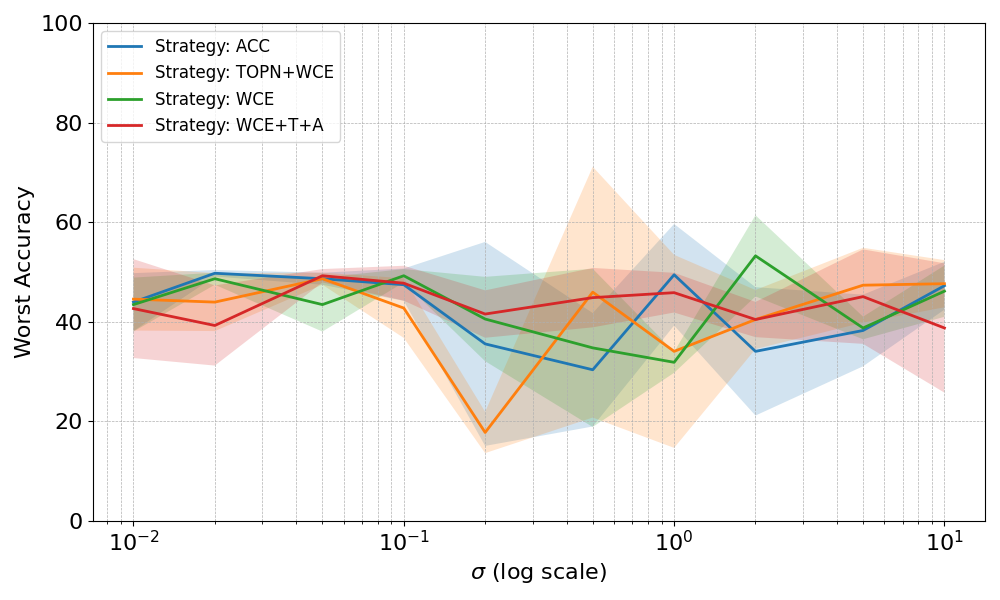}
    \label{fig:DB_linRLM_subplot4}
  }
    \caption{ Linear RLM. The (a) decision boundary obtained with the original RLM and the effect of the scaling parameter on (b) the decision boundaries (where training instances are also shown), (c) overall accuracy and (d) worst-group accuracy. For the decision boundaries we set $\eta = 0.6$ fixed. When studying the worst-group and overall accuracy we consider different strategies to get the value of $\eta$.
    }
    \label{fig:toy_BD_linRLM}
\end{figure}

\begin{figure}[h!] 
    \centering
    \subfigure[]{
    \includegraphics[width=0.25\textwidth]{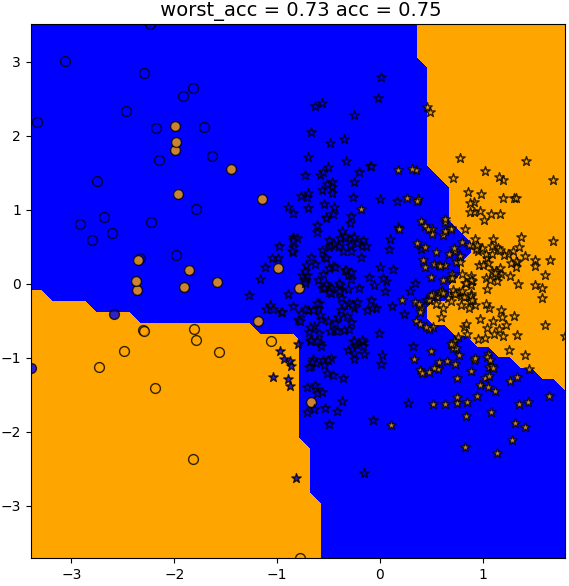}
    \label{fig:DB_RLM_subplot1}
  }
  \hfill 
  \subfigure[]{
    \includegraphics[width=0.9\textwidth]{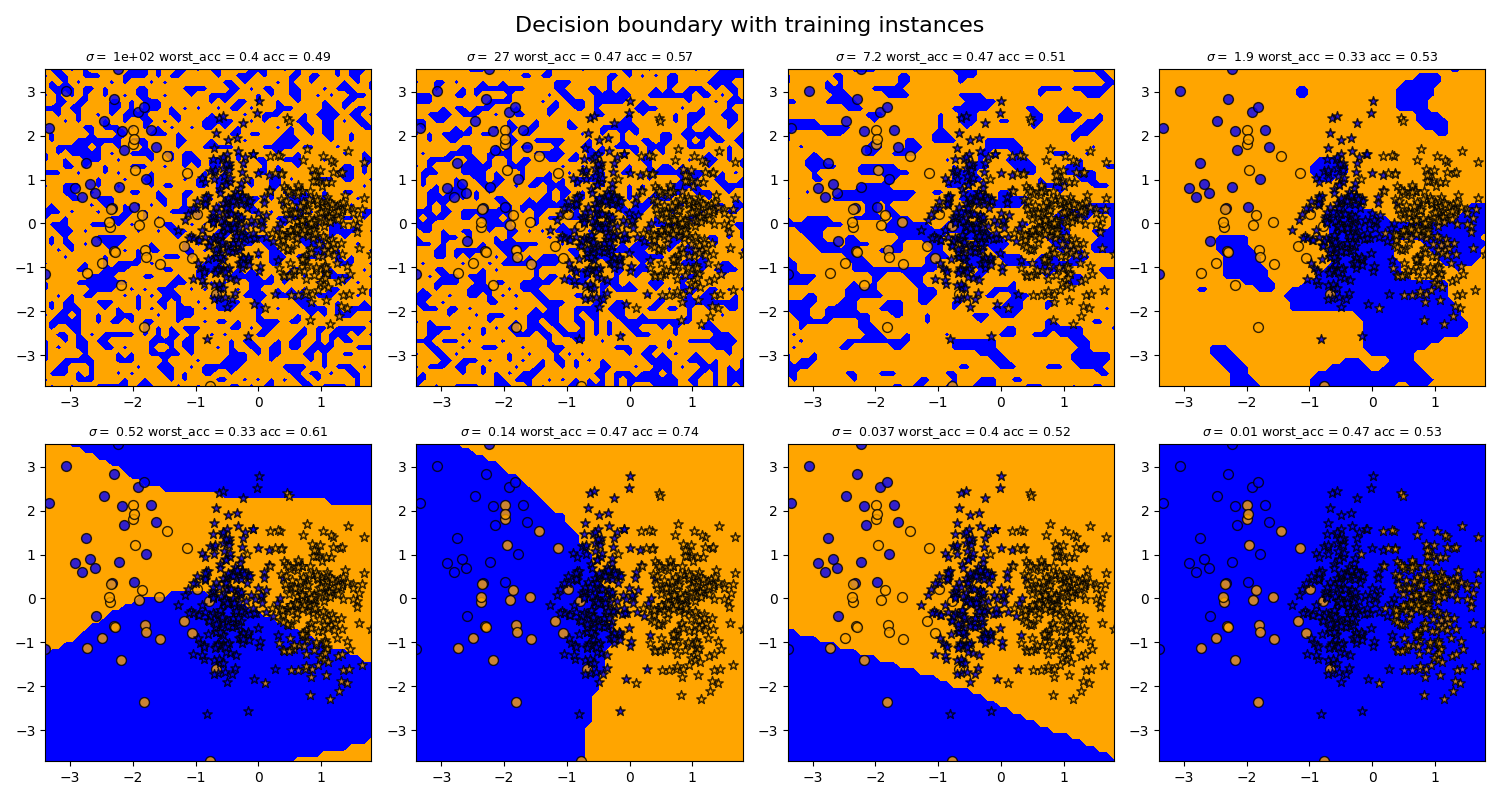}
    \label{fig:DB_RLM_subplot2}
  }
  \hfill 
  \subfigure[]{
    \includegraphics[width=0.45\textwidth]{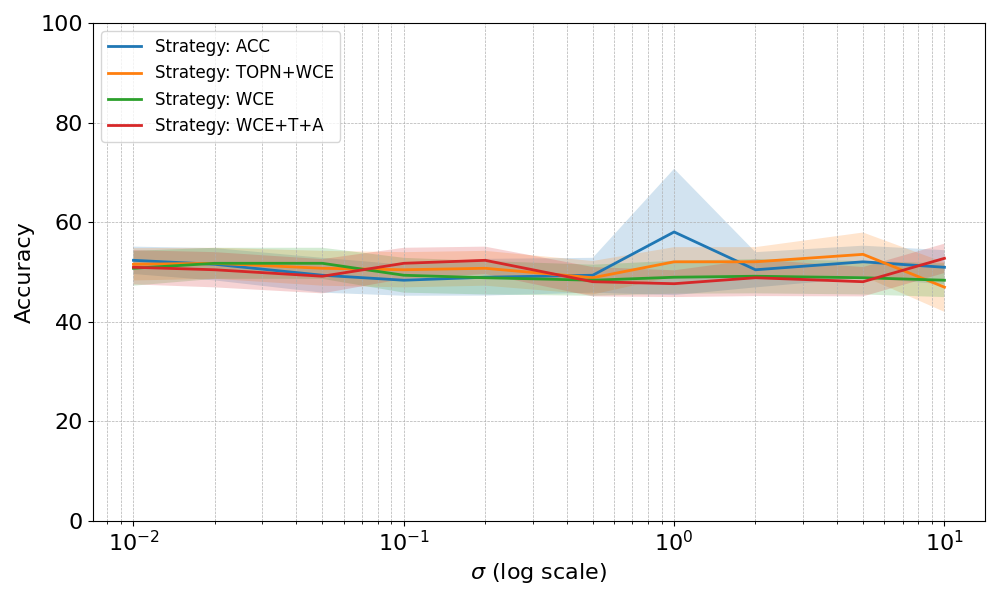}
    \label{fig:DB_RLM_subplot3}
  }
  \hfill 
  \subfigure[]{
    \includegraphics[width=0.45\textwidth]{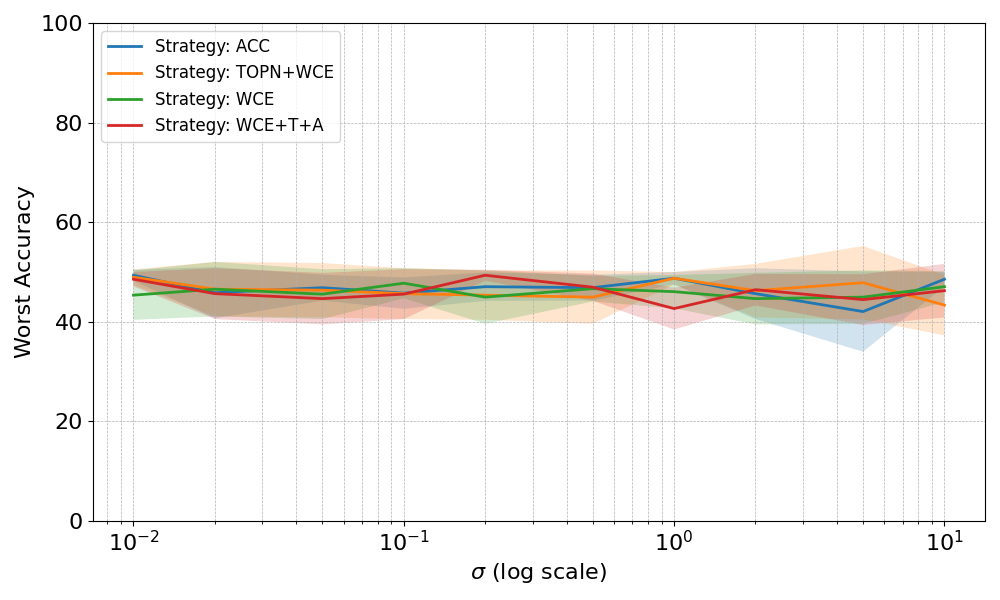}
    \label{fig:DB_RLM_subplot4}
  }
    \caption{ Non-linear RLM. The (a) decision boundary obtained with the original RLM and the effect of the scaling parameter on (b) the decision boundaries (where training instances are also shown), (c) overall accuracy and (d) worst-group accuracy. For the decision boundaries we set $\eta = 0.6$ fixed. When studying the worst-group and overall accuracy we consider different strategies to get the value of $\eta$.
    }
    \label{fig:toy_BD_RLM}
\end{figure}

\begin{figure}[h!] 
    \centering
    \subfigure[]{
    \includegraphics[width=0.25\textwidth]{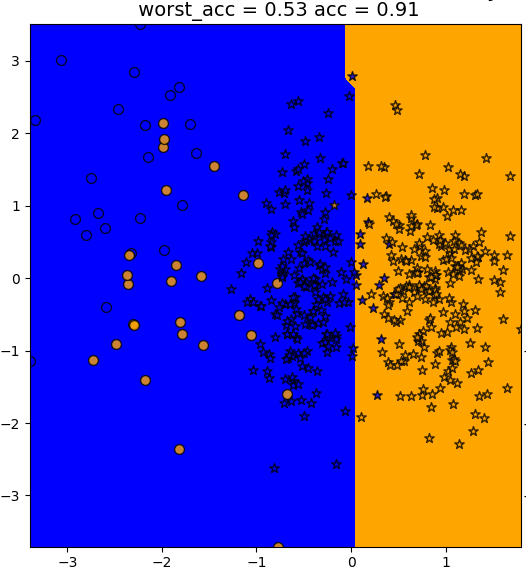}
    \label{fig:DB_linARL_subplot1}
  }
  \hfill 
  \subfigure[]{
    \includegraphics[width=0.9\textwidth]{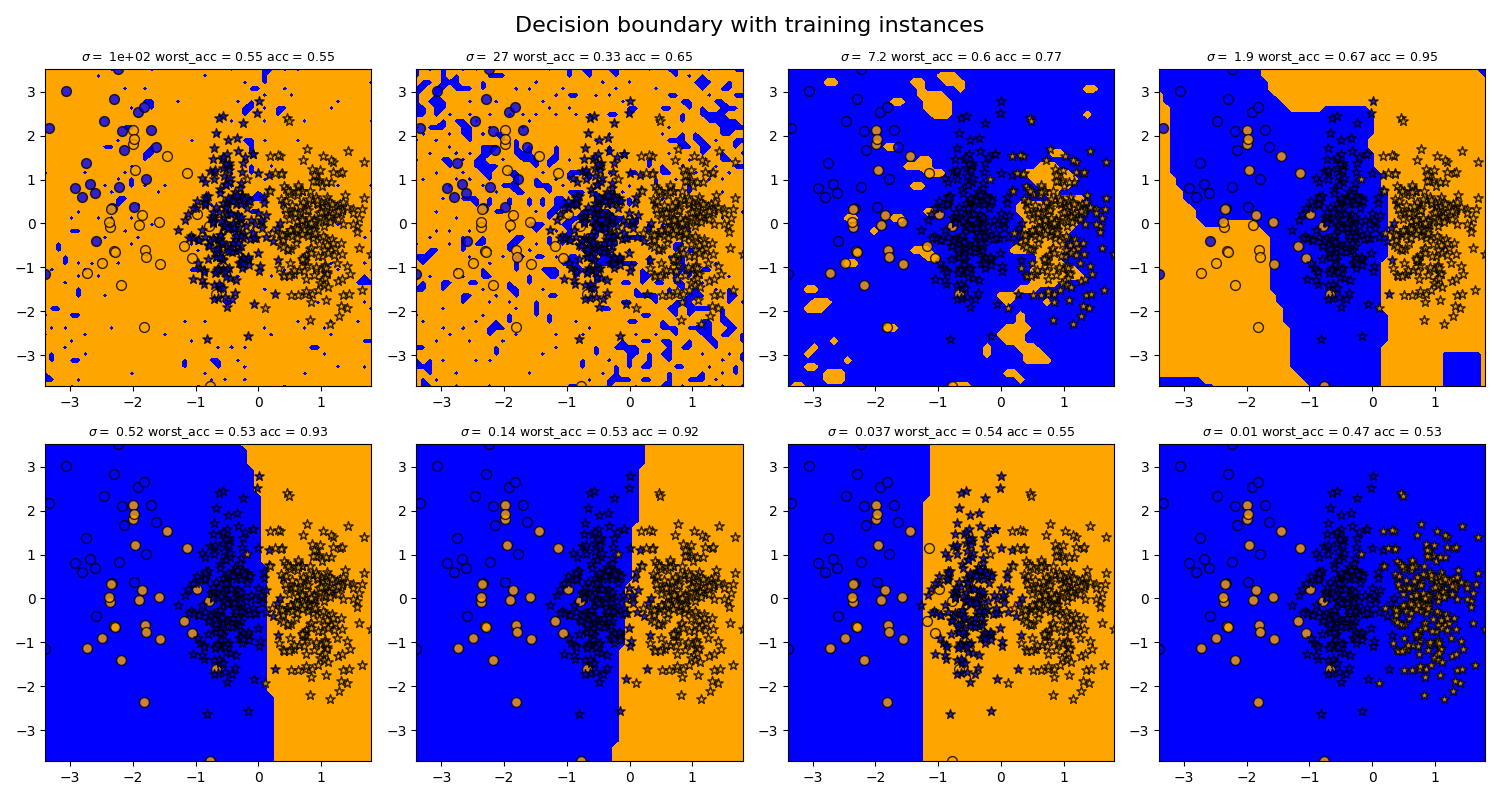}
    \label{fig:DB_linARL_subplot2}
  }
  \hfill 
  \subfigure[]{
    \includegraphics[width=0.45\textwidth]{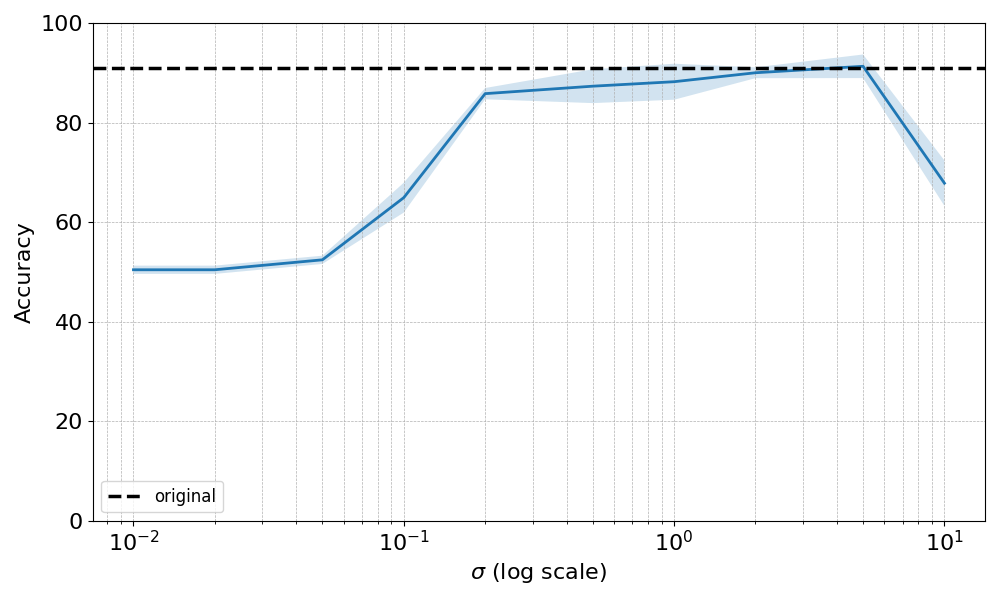}
    \label{fig:DB_linARL_subplot3}
  }
  \hfill 
  \subfigure[]{
    \includegraphics[width=0.45\textwidth]{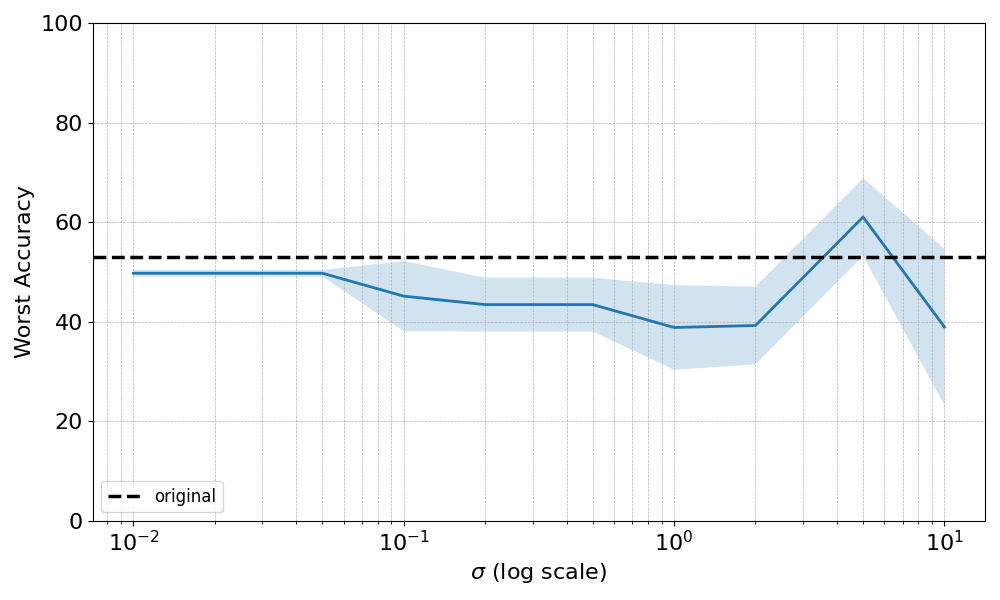}
    \label{fig:DB_linARL_subplot4}
  }
    \caption{ Linear ARL. The (a) decision boundary obtained with the linear ARL trained with original instances and the effect of the scaling parameter on (b) the decision boundaries (where training instances are also shown), (c) overall accuracy and (d) worst-group accuracy. For the decision boundaries we set $\eta = 0.6$ fixed. When studying the worst-group and overall accuracy we consider different strategies to get the value of $\eta$.
    }
    \label{fig:toy_BD_linARL}
\end{figure}

\begin{figure}[h!] 
    \centering
    \subfigure[]{
    \includegraphics[width=0.25\textwidth]{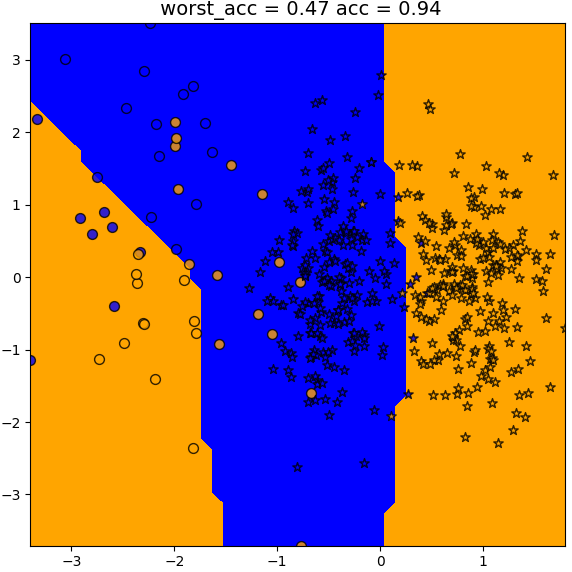}
    \label{fig:DB_ARL_subplot1}
  }
  \hfill 
  \subfigure[]{
    \includegraphics[width=0.9\textwidth]{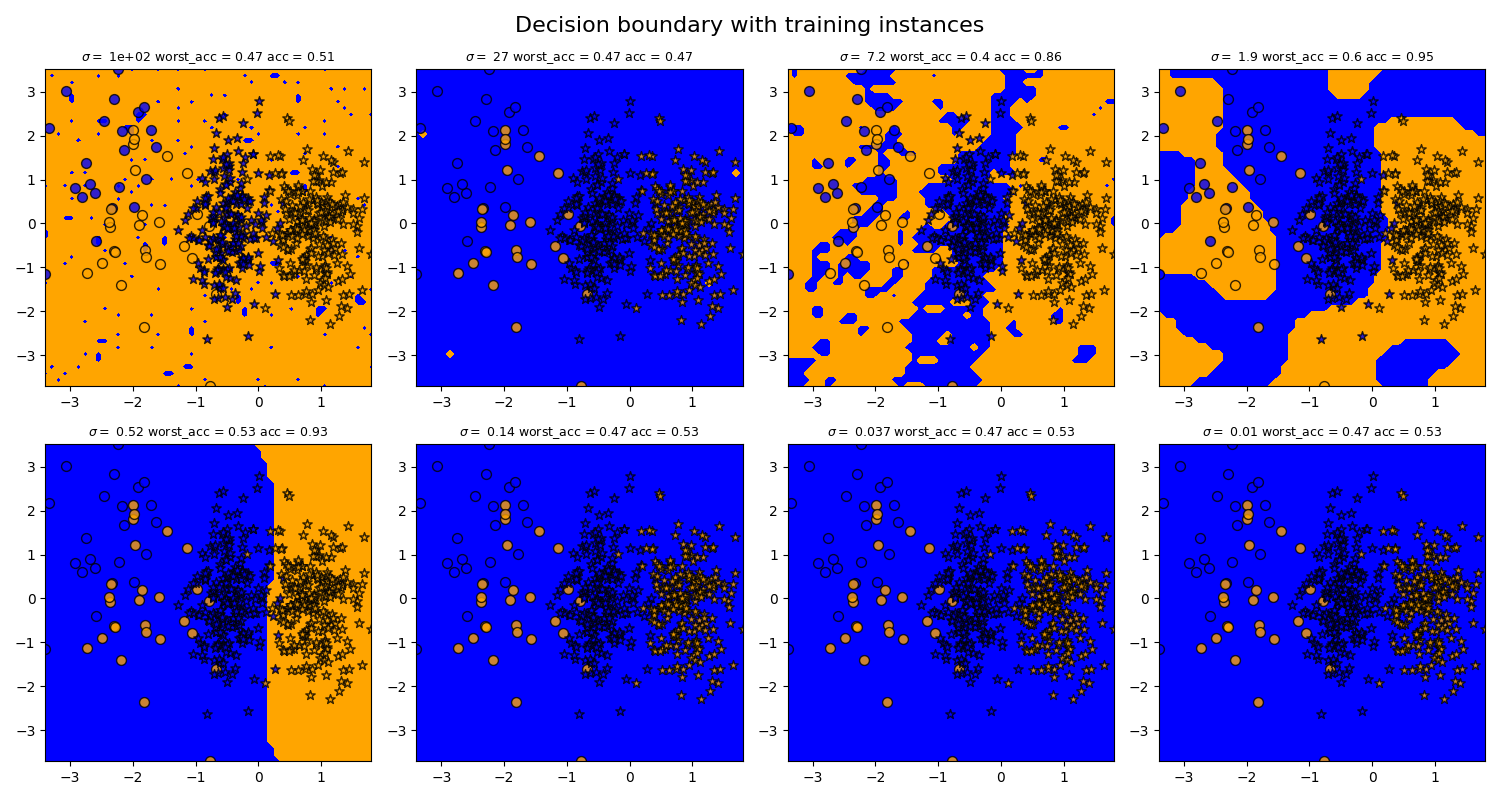}
    \label{fig:DB_ARL_subplot2}
  }
  \hfill 
  \subfigure[]{
    \includegraphics[width=0.45\textwidth]{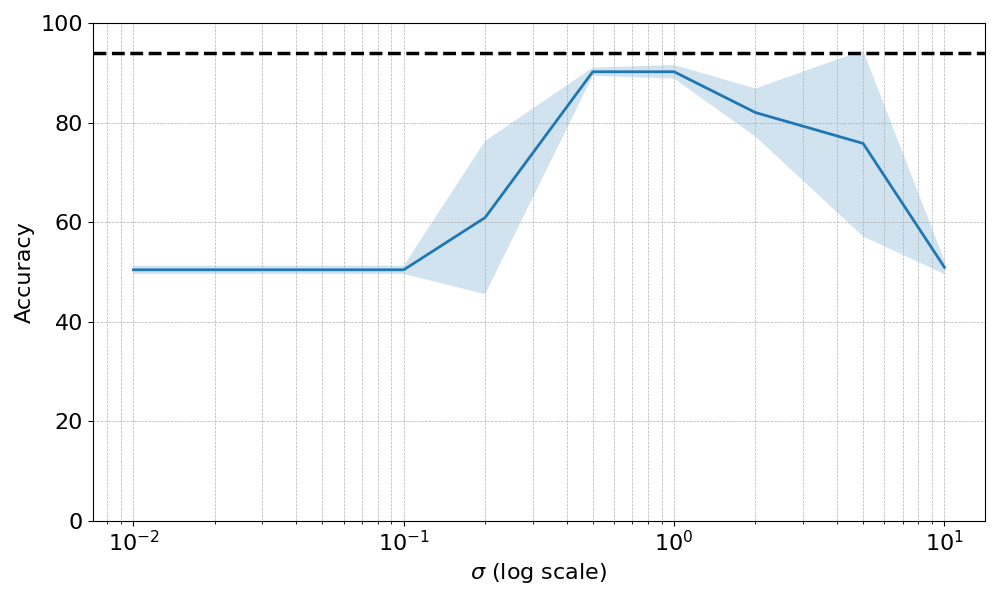}
    \label{fig:DB_ARL_subplot3}
  }
  \hfill 
  \subfigure[]{
    \includegraphics[width=0.45\textwidth]{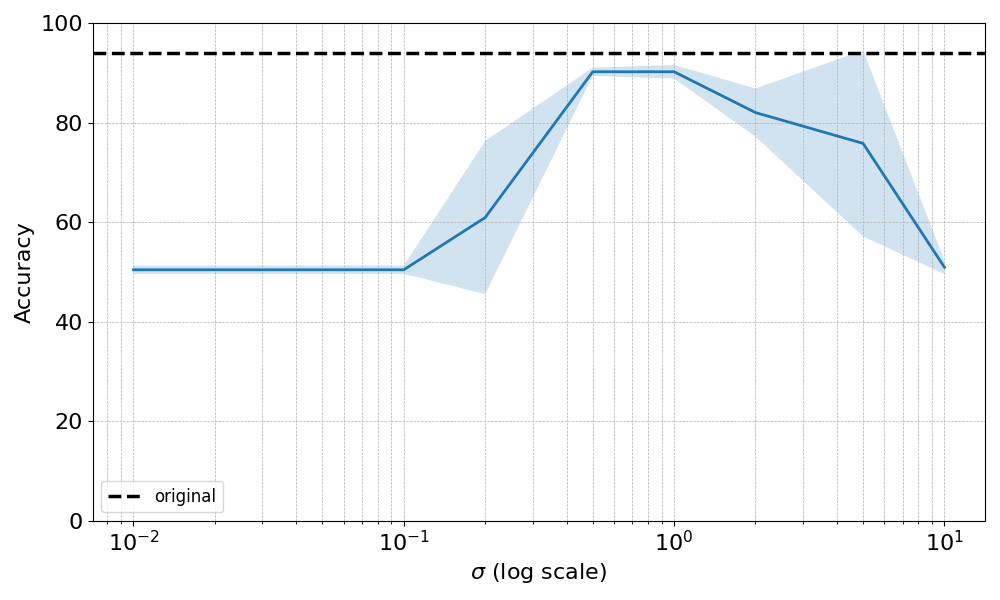}
    \label{fig:DB_ARL_subplot4}
  }
    \caption{ ARL. The (a) decision boundary obtained with the original ARL and the effect of the scaling parameter on (b) the decision boundaries (where training instances are also shown), (c) overall accuracy and (d) worst-group accuracy. For the decision boundaries we set $\eta = 0.6$ fixed. When studying the worst-group and overall accuracy we consider different strategies to get the value of $\eta$. 
    }
    \label{fig:toy_BD_ARL}
\end{figure}

\end{document}